\newtheorem{definition}{Definition}[section]
\newtheorem{assumption}{Assumption}[section]
\newtheorem{theorem}{Theorem}[section]
\newtheorem{lemma}{Lemma}[section]
\newtheorem{corollary}{Corollary}[section]
\newtheorem{remark}{Remark}[section]
\title{Burning RED: Unlocking Subtask-Driven\\Reinforcement Learning and Risk-Awareness\\in Average-Reward Markov Decision Processes}
\author{Juan Sebastian Rojas\textsuperscript{1}, Chi-Guhn Lee\textsuperscript{1}}
\keywords{Average-Reward Reinforcement Learning, Risk-Sensitive Decision-Making, CVaR} % Your keywords
\begin{document}

% \makeCover  % Create the cover page
\maketitle  % Make the title section

\begin{abstract}
Average-reward Markov decision processes (MDPs) provide a foundational framework for sequential decision-making under uncertainty. However, average-reward MDPs have remained largely unexplored in reinforcement learning (RL) settings, with the majority of RL-based efforts having been allocated to discounted MDPs. In this work, we study a unique structural property of average-reward MDPs and utilize it to introduce \emph{Reward-Extended Differential} (or \emph{RED}) reinforcement learning: a novel RL framework that can be used to effectively and efficiently solve various learning objectives, or \emph{subtasks}, simultaneously in the average-reward setting. We introduce a family of RED learning algorithms for prediction and control, including proven-convergent algorithms for the tabular case. We then showcase the power of these algorithms by demonstrating how they can be used to learn a policy that optimizes, for the first time, the well-known conditional value-at-risk (CVaR) risk measure in a fully-online manner, \emph{without} the use of an explicit bi-level optimization scheme or an augmented state-space.
\end{abstract}

\section{Introduction}

Markov decision processes (MDPs) \citep{Puterman1994-dq} are a long-established framework for sequential decision-making under uncertainty. Discounted MDPs, which aim to optimize a potentially-discounted sum of rewards over time, have enjoyed success in recent years when utilizing reinforcement learning (RL) solution methods \citep{Sutton2018-eh} to tackle certain problems of interest in various domains. Despite this success, however, these MDP-based methods have yet to be fully embraced in real-world applications due to the various intricacies and implications of real-world operation that often outweigh the capabilities of current state-of-the-art methods \citep{Dulac-Arnold2021-sc}. We therefore turn to the less-explored average-reward MDP, which aims to optimize the reward received per time-step, to see how its unique structural properties can be leveraged to tackle challenging problems that have eluded its discounted counterpart. 

In particular, we present results that show how the average-reward MDP's unique structural properties can be leveraged to enable a more \emph{subtask-driven} approach to reinforcement learning, where various learning objectives, or \emph{subtasks}, are solved simultaneously (and in a fully-online manner) to help solve a larger, central learning objective. Importantly, we find a compelling case-study in the realm of risk-aware decision-making that illustrates how this subtask-driven approach can alleviate some of the computational challenges and complexities that can arise in the discounted setting.

More formally, we introduce \emph{Reward-Extended Differential} (or \emph{RED}) reinforcement learning: a first-of-its-kind RL framework that makes it possible to solve various subtasks simultaneously in the average-reward setting. At the heart of this framework is the novel concept of the reward-extended temporal-difference (TD) error, an extension of the celebrated TD error \citep{Sutton1988-vs}, which we derive by leveraging a unique structural property of average-reward MDPs, and utilize to solve various subtasks simultaneously. We first present the RED RL framework in a generalized way, then adopt it to successfully tackle a problem that has exceeded the capabilities of current state-of-the-art methods in risk-aware decision-making: learning a policy that optimizes the well-known conditional value-at-risk (CVaR) risk measure \citep{Rockafellar2000-xu} in a fully-online manner \emph{without} the use of an explicit bi-level optimization scheme or an augmented state-space.

Our work is organized as follows: In Section \ref{related_work}, we provide a brief overview of related work. In Section \ref{preliminaries}, we give an overview of the fundamental concepts related to average-reward RL and CVaR. In Section \ref{subtask_driven}, we motivate the need and opportunity for a subtask-driven approach to RL through the lens of CVaR optimization. In Section \ref{red}, we introduce the RED RL framework, including the concept of the reward-extended TD error. We also introduce a family of RED RL algorithms for prediction and control, and highlight their convergence properties (with full convergence proofs in Appendix \ref{appendix_proofs}). In Section \ref{risk_red}, we use the RED RL framework to derive a subtask-driven approach for CVaR optimization, and provide empirical results which show that this approach can be used to successfully learn a policy that optimizes the CVaR risk measure. Finally, in Section \ref{discussion}, we emphasize our framework's potential usefulness towards tackling other challenging problems outside the realm of risk-awareness, highlight some of its limitations, and suggest some directions for future research.

\section{Related Work}
\label{related_work}

\textbf{Average-Reward Reinforcement Learning:} Average-reward (or average-cost) MDPs, despite being one of the most well-studied frameworks for sequential decision-making under uncertainty \citep{Puterman1994-dq}, have remained relatively unexplored in reinforcement learning (RL) settings. To date, notable works on the subject (in the context of RL) include \citet{Schwartz1993-jb}, \citet{Tsitsiklis1999-tu}, \citet{Abounadi2001-xu}, 
\citet{Gosavi2004-qf}, \citet{Bhatnagar2009-lt}, and \citet{Wan2021-re}. Most relevant to our work is \citet{Wan2021-re}, which provided a rigorous theoretical treatment of average-reward MDPs in the context of RL, and proposed the proven-convergent ‘Differential TD-learning’ and ‘Differential Q-learning’ algorithms. Our work builds on the methods from \citet{Wan2021-re} to develop a theoretical framework for solving various learning objectives simultaneously. 

We note that these learning objectives, or \emph{subtasks}, as explored in our work, are different from that of hierarchical RL (e.g. \citet{Sutton1999-kf}). In particular, in hierarchical RL, the focus is on using temporally-abstracted actions, known as ‘options’ (or ‘skills’), such that the agent learns a policy for each option, as well as an inter-option policy. By contrast, in our work, we learn a single policy, and the subtasks are not part of the action-space. Similarly, the notion of solving multiple objectives in parallel has been widely-explored in the discounted setting (e.g. \citet{McLeod2021-qy}). However, much of this work focuses on learning multiple state representations (or ‘features’), options, policies, and/or value functions. By contrast, in our work, we learn a single policy and value function, and the subtasks are not part of the state or action-spaces. To the best of our knowledge, our work is the first to explore the solving of subtasks simultaneously in the average-reward setting.

\textbf{Risk-Aware Learning and Optimization in MDPs:} The notion of risk-aware learning and optimization in MDP-based settings has been long-studied, from the well-established expected utility framework \citep{Howard1972-zv}, to the more contemporary framework of coherent risk measures \citep{Artzner1999-bs}. To date, these risk-based efforts have focused almost exclusively on the discounted setting. Importantly, optimizing the CVaR risk measure in this setting typically requires augmenting the state-space and/or having to utilize an explicit bi-level optimization scheme, which can, for example, involve solving multiple MDPs. Seminal works that have investigated CVaR optimization in the standard discounted setting include \citet{Bauerle2011-yi} and \citet{Chow2015-vw, Hau2023-li}. In the distributional setting, works such as \citet{Dabney2018-yi} and \citet{Keramati2020-ht} have proposed CVaR optimization approaches that do not require an augmented state-space or an explicit bi-level optimization; however, it was later shown in \citet{Lim2022-lr} that such approaches converge to neither the optimal dynamic-CVaR nor the optimal static-CVaR policies (\citet{Lim2022-lr} then proposed a valid approach that utilizes an augmented state-space). Some works have investigated optimizing a time-consistent \citep{Ruszczynski2010-sm} interpretation of CVaR in the discounted setting; however, such approaches only approximate CVaR, as CVaR is generally not a time-consistent risk measure in the discounted setting \citep{Boda2006-tw}. Other works have investigated optimizing similar objectives to CVaR that are more computationally tractable, such as the entropic value-at-risk (e.g. \citet{Hau2023-ey}). 

Most similar to our work (in non-average-reward settings) is \citet{Stanko2019-ca}, which proposed a similar CVaR update scheme to the one derived in our work. However, all of the methods proposed in \citet{Stanko2019-ca} require either an augmented state-space or an explicit bi-level optimization. In the average-reward setting, \citet{Xia2023-cq} proposed a set of algorithms for optimizing the CVaR risk measure; however, their methods require the use of an augmented state-space and a sensitivity-based bi-level optimization. By contrast, our work, to the best of our knowledge, is the first to optimize the CVaR risk measure in an MDP-based setting without the use of an explicit bi-level optimization scheme or an augmented state-space. We note that other works have investigated optimizing other risk measures in the average-reward setting, such as the exponential cost (e.g. \citet{Murthy2023-sh}), and variance (e.g. \citet{Prashanth2016-mr}).

\section{Preliminaries}
\label{preliminaries}

\subsection{Average-Reward Reinforcement Learning}
\label{ar_prelim}

A finite average-reward MDP is the tuple \(\mathcal{M} \doteq \langle \mathcal{S}, \mathcal{A}, \mathcal{R}, p \rangle\), where \(\mathcal{S}\) is a finite set of states, \(\mathcal{A}\) is a finite set of actions, \(\mathcal{R} \subset \mathbb{R}\) is a bounded set of rewards, and \(p: \mathcal{S}\, \times\, \mathcal{A}\, \times\, \mathcal{R}\, \times\,  \mathcal{S} \rightarrow{} [0, 1]\) is a probabilistic transition function that describes the dynamics of the environment. At each discrete time step, \(t = 0, 1, 2, \ldots\), an agent chooses an action, \(A_t \in \mathcal{A}\), based on its current state, \(S_t \in \mathcal{S}\), and receives a reward, \(R_{t+1} \in \mathcal{R}\), while transitioning to a (potentially) new state, \(S_{t+1}\), such that \(p(s', r \mid s, a) = \mathbb{P}(S_{t+1} = s', R_{t+1} = r \mid S_t = s, A_t = a)\). In an average-reward MDP, an agent aims to find a policy, \(\pi: \mathcal{S} \rightarrow{} \mathcal{A}\), that optimizes the long-run (or limiting) average-reward, \(\bar{r}\), which is defined as follows for a given policy, \(\pi\):
\begin{equation}
\label{eq_avg_reward}
\bar{r}_{\pi}(s) \doteq  \lim_{n \rightarrow{} \infty} \frac{1}{n} \sum_{t=1}^{n} \mathbb{E}[R_t \mid S_0=s, A_{0:t-1} \sim \pi].
\end{equation}
In this work, we limit our discussion to \emph{stationary Markov} policies, which are time-independent policies that satisfy the Markov property.

When working with average-reward MDPs, it is common to simplify Equation \eqref{eq_avg_reward} into a more workable form by making certain assumptions about the Markov chain induced by following policy \(\pi\). To this end, a \emph{unichain} assumption is typically used when doing prediction (learning) because it ensures the existence of a unique limiting distribution of states, \(\mu_{\pi}(s) \doteq \lim_{t \rightarrow{} \infty} \mathbb{P}(S_t = s \mid A_{0:t-1} \sim \pi)\), that is independent of the initial state, thereby simplifying Equation \eqref{eq_avg_reward} to the following:
\begin{equation}
\label{eq_avg_reward_2}
\bar{r}_{\pi} = \sum_{s \in \mathcal{S}} \mu_{\pi}(s) \sum_{a \in \mathcal{A}} \pi(a \mid s) \sum_{s' \in \mathcal{S}} \sum_{r \in  \mathcal{R}}p(s', r \mid s, a)r.
\end{equation}
Similarly, a \emph{communicating} assumption is typically used for control (optimization) because it ensures the existence of a unique optimal average-reward, \(\bar{r}*\), that is independent of the initial state.

To solve an average-reward MDP, solution methods such as dynamic programming or RL can be used in conjunction with the following \emph{Bellman} (or \emph{Poisson}) equations:
\begin{equation}
\label{eq_avg_reward_4}
v_{\pi}(s) = \sum_{a}\pi(a \mid s) \sum_{s'}\sum_{r}p(s', r \mid s, a)[r - \bar{r}_{\pi} + v_{\pi}(s')],
\end{equation}

\begin{equation}
\label{eq_avg_reward_5}
q_{\pi}(s, a) = \sum_{s'}\sum_{r}p(s', r \mid s, a)[r - \bar{r}_{\pi} + \max_{a'}q_{\pi}(s', a')],
\end{equation}

where \(v_{\pi}(s)\) is the state-value function and \(q_{\pi}(s, a)\) is the state-action value function for a given policy, \(\pi\). Solution methods for average-reward MDPs are typically referred to as \emph{differential} methods because of the reward difference (i.e., \(r - \bar{r}_{\pi}\)) operation that occurs in Equations \eqref{eq_avg_reward_4} and \eqref{eq_avg_reward_5}. We note that solution methods typically find the solutions to Equations \eqref{eq_avg_reward_4} and \eqref{eq_avg_reward_5} up to a constant. This is typically not a concern, given that the relative ordering of policies is usually what is of interest. 

In the context of RL, \citet{Wan2021-re} proposed the tabular ‘Differential TD-learning’ and ‘Differential Q-learning’ algorithms, which are able to learn and/or optimize the value function and average-reward simultaneously using only the TD error. The ‘Differential TD-learning’ algorithm is shown below:
\begin{subequations}
\label{eq_avg_reward_6}
\begin{align}
\label{eq_avg_reward_6_1}
& \delta_{t} \doteq R_{t+1} - \bar{R}_t + V_{t}(S_{t+1}) - V_{t}(S_t)\\
\label{eq_avg_reward_6_2}
& V_{t+1}(s) \doteq V_{t}(s), \quad \forall s \neq S_t\\
\label{eq_avg_reward_6_3}
& V_{t+1}(S_t) \doteq V_{t}(S_t) + \alpha_{t}\rho_{t}\delta_{t}\\
\label{eq_avg_reward_6_4}
& \bar{R}_{t+1} \doteq \bar{R}_t + \alpha_{\bar{r},t}\rho_{t}\delta_{t}
\end{align}
\end{subequations}

where \(V_t: \mathcal{S} \rightarrow \mathbb{R}\) is a table of state-value function estimates, \(\bar{R}_t\) is an estimate of the average-reward, \(\bar{r}_{\pi}\), \(\delta_t\) is the TD error, \(\rho_t \doteq \pi(A_t \mid S_t) \, / \, B(A_t \mid S_t)\) is the importance sampling ratio (with behavior policy, \(B\)), \(\alpha_t\) is the value function step size, and \(\alpha_{\bar{r},t}\) is the average-reward step size.

\subsection{Conditional Value-at-Risk (CVaR)}
\label{cvar_prelim}

Consider a random variable \(X\) with a finite mean on a probability space \((\Omega, \mathcal{F}, \mathbb{P})\), and with a cumulative distribution function \(F(x) = \mathbb{P}(X \le x)\). The (left-tail) \emph{value-at-risk (VaR)} of \(X\) with parameter \(\tau \in (0, 1)\) represents the \(\tau\)-quantile of \(X\), such that \(\text{VaR}_{\tau}(X) = \sup\{x \mid F(x) \le \tau\}\). The (left-tail) \emph{conditional value-at-risk (CVaR)} of \(X\) with parameter \(\tau\) is defined as follows:
\begin{equation}
\label{eq_cvar_1}
\text{CVaR}_{\tau}(X) = \frac{1}{\tau} \int_{0}^{\tau}\text{VaR}_{u}(X)du.
\end{equation}

When \(F(X)\) is continuous at \(x = \text{VaR}_{\tau}(X)\), \(\text{CVaR}_{\tau}(X)\) can be interpreted as the expected value of \(X\) conditioned on \(X\) being less than or equal to \(\text{VaR}_{\tau}(X)\), such that \(\text{CVaR}_{\tau}(X) = \mathbb{E}[X \mid X \le \text{VaR}_{\tau}(X)]\). 

Importantly, \(\text{CVaR}_{\tau}(X)\) can be formulated as the solution to the following optimization problem \citep{Rockafellar2000-xu}:
\begin{equation}
\label{eq_cvar_2}
\text{CVaR}_{\tau}(X) = \sup_{y \in \mathbb{R}} \mathbb{E}[y - \frac{1}{\tau}(y - X)^{+}] = \mathbb{E}[\text{VaR}_{\tau}(X) - \frac{1}{\tau}(\text{VaR}_{\tau}(X) - X)^{+}], 
\end{equation}
where \((u)^{+} = \max(u, 0)\). Existing MDP-based methods typically leverage the above formulation when optimizing for CVaR, by augmenting the state-space with a state that corresponds (either directly or indirectly) to an estimate of \(\text{VaR}_{\tau}(X)\) (in this case, \(y\)), and solving the following bi-level optimization:
\begin{equation}
\label{eq_cvar_3}
\sup_{\pi} \text{CVaR}_{\tau}(X) = \sup_{\pi} \sup_{y \in \mathbb{R}} \mathbb{E}[y - \frac{1}{\tau}(y - X)^{+}] = \sup_{y \in \mathbb{R}} (y - \frac{1}{\tau}\sup_{\pi} \mathbb{E}[(y - X)^{+}]),
\end{equation}
where the ‘inner’ optimization problem can be solved using standard MDP solution methods. 

In discounted MDPs, the random variable \(X\) corresponds to a (potentially-discounted) sum of rewards. In average-reward MDPs, \(X\) corresponds to the limiting \emph{per-step reward}. In other words, the natural interpretation of CVaR in the average-reward setting is that of the CVaR associated with the limiting per-step reward distribution, as shown below (for a given policy, \(\pi\)) \citep{Xia2023-cq}:
\begin{equation}
\label{eq_cvar_4}
\text{CVaR}_{\tau, \pi}(s) \doteq  \lim_{n \rightarrow{} \infty} \frac{1}{n} \sum_{t=1}^{n} \text{CVaR}_{\tau}[R_t \mid S_0=s, A_{0:t-1} \sim \pi].
\end{equation}
As with the average-reward (i.e., Equation \eqref{eq_avg_reward}), a unichain assumption (or similar) makes this CVaR objective independent of the initial state. In recent years, CVaR has emerged as a popular risk measure, in part because it is a ‘coherent’ risk measure \citep{Artzner1999-bs}, meaning that it satisfies key mathematical properties which can be meaningful in safety-critical and risk-related applications.

Figure \ref{fig_average_reward} depicts the agent-environment interaction in an average-reward MDP, where following policy \( \pi \) yields a limiting average-reward and reward CVaR.

\begin{figure}[htbp]
\centerline{\includegraphics[scale=0.48]{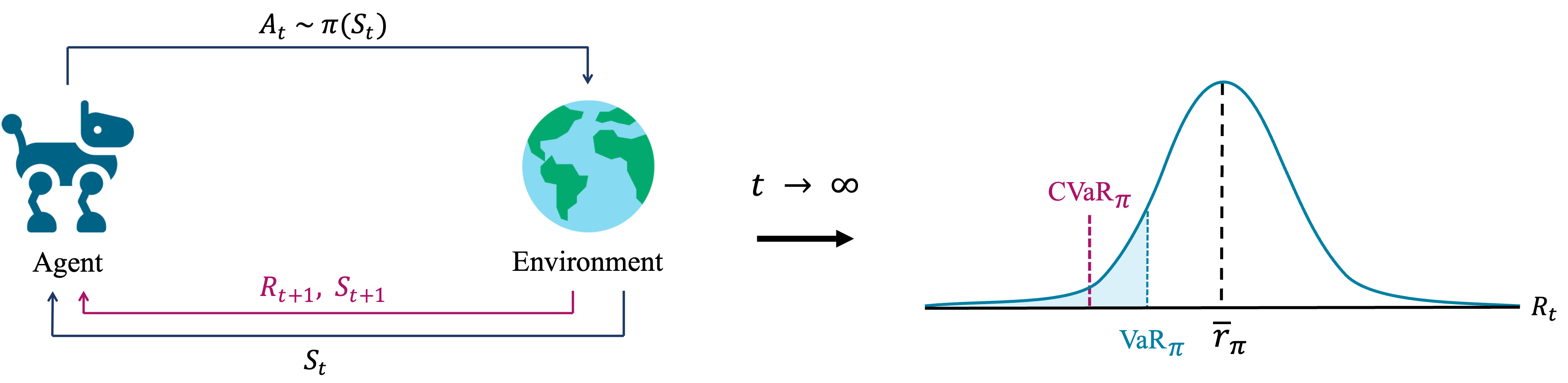}}
\caption{Illustration of the agent-environment interaction in an average-reward MDP. As \(t \to \infty\), following policy \(\pi\) yields a limiting per-step reward distribution with an average-reward, \(\bar{r}_{\pi}\), and a conditional value-at-risk, \(\text{CVaR}_{\pi}\). Standard average-reward RL methods aim to optimize the average-reward, \(\bar{r}_{\pi}\). By contrast, in our work, we aim to optimize \(\text{CVaR}_{\pi}\).}
\label{fig_average_reward}
\end{figure}

\section{A Subtask-Driven Approach}
\label{subtask_driven}

In this section, we motivate the need and opportunity for a subtask-driven approach to RL through the lens of CVaR optimization. Let us begin by considering the standard approach used by existing MDP-based methods for CVaR optimization. This approach, which is described in Equation \eqref{eq_cvar_3}, requires that we pick a wide range of guesses for the optimal value-at-risk, VaR, and that for each guess, \(y\), we solve an MDP. Then, among all the MDP solutions, we pick the best one as our final solution (which corresponds to \(y=\text{VaR}\)). Moreover, to further compound the computational costs associated with solving several MDPs, this approach requires that the state-space be augmented with a state that corresponds (either directly or indirectly) to the VaR guess, \(y\) (e.g. see \citet{Bauerle2011-yi}). Hence, this approach requires the use of both an explicit bi-level optimization scheme, and an augmented state-space. Importantly, however, this complex process would not be needed if we somehow knew what the optimal value for \(y\) (i.e., VaR) was. In fact, in the average-reward setting, if we know this optimal value, VaR, then optimizing for CVaR ultimately amounts to optimizing an average (as per Equation \eqref{eq_cvar_2}), which can be done trivially using the standard average-reward MDP.

As such, it would appear that, to optimize CVaR, we are stuck between two extremes: a significantly computationally-expensive process if we do not know the optimal value-at-risk, VaR, and a trivial process if we do. But what if we could somehow estimate VaR along the way? That is, keep some sort of running estimate of VaR that we optimize simultaneously as we optimize CVaR. Indeed, such an approach has been proposed in the discounted setting (e.g. \citet{Stanko2019-ca}); however, no approach has been able to successfully remove both the augmented state-space and the explicit bi-level optimization requirements. The primary difficulty lies in \emph{how} one updates the estimate of VaR along the way.

Critically, this is where the findings from \citet{Wan2021-re} become relevant. In particular, \citet{Wan2021-re} proposed proven-convergent algorithms for the average-reward setting that can learn and/or optimize the value function and average-reward simultaneously using only the TD error. In other words, these algorithms are able to solve two learning objectives simultaneously using only the TD error. Yet, the focus in \citet{Wan2021-re} was on proving the convergence of such algorithms, without exploring the underlying structural properties of the average-reward MDP that made such a process possible to begin with. In this work, we formalize these underlying properties, and utilize them to show that if one modifies, or \emph{extends}, the reward from the MDP with various learning objectives that satisfy certain key properties, then these objectives, or \emph{subtasks}, can be solved simultaneously using a modified, or \emph{reward-extended}, version of the TD error. Consequently, in terms of CVaR optimization, this allows us to develop appropriate learning updates for the VaR and CVaR estimates based solely on the TD error, such that we can optimize the CVaR risk measure in a fully-online manner without needing to augment the state-space or perform an explicit bi-level optimization.

In Section \ref{red}, we present the theoretical framework that enables the aforementioned subtask-driven approach. Then, in Section \ref{risk_red}, we adapt this general-purpose framework for CVaR optimization.

\section{Reward-Extended Differential (RED) Reinforcement Learning}
\label{red}

In this section, we present our primary contribution: a framework for solving various learning objectives, or \emph{subtasks}, simultaneously in the average-reward setting. We call this framework \emph{reward-extended differential} (or \emph{RED}) reinforcement learning. The ‘differential’ part of the name stems from the use of differential algorithms typically associated with average-reward MDPs. The ‘reward-extended’ part of the name stems from the use of the \emph{reward-extended TD error}, a novel concept that we will introduce shortly. Through this framework, we show how the average-reward MDP's unique structural properties can be leveraged to solve (i.e., learn or optimize) any given subtask by using only a TD error-based update. We first provide a formal definition for a (generic) subtask, then proceed to derive a framework that allows us to solve any subtask that satisfies this definition. In the subsequent section, we utilize this framework to tackle the CVaR optimization problem.

\begin{definition}[Subtask]
\label{definition_1}
A subtask, \(z_i\), is any scalar prediction or control objective belonging to a corresponding bounded set \(\mathcal{Z}_i \subset \mathbb{R}\), such that there exists a linear or piecewise linear subtask function, \(f: \mathcal{R} \, \times\, \mathcal{Z}_1 \, \times\, \mathcal{Z}_2 \, \times\, \cdots \, \times\, \mathcal{Z}_i \, \times\, \cdots \, \times\, \mathcal{Z}_n \rightarrow \tilde{\mathcal{R}}\), where \(\mathcal{R}\) is the bounded set of observed per-step rewards from the MDP \(\mathcal{M}\), \(\mathcal{\tilde{R}} \subset \mathbb{R}\) is a bounded set of ‘extended’ per-step rewards whose long-run average is the primary prediction or control objective of the MDP, \(\mathcal{\tilde{M}} \doteq \langle \mathcal{S}, \mathcal{A}, \mathcal{\tilde{R}}, \tilde{p} \rangle\), and \(\mathcal{Z} = \{z_1 \in \mathcal{Z}_1, z_2 \in \mathcal{Z}_2, \ldots, z_n \in \mathcal{Z}_n\}\) is the set of \(n\) subtasks that we wish to solve, such that:

i) \(f\) is invertible with respect to each input given all other inputs; and

ii) each subtask \(z_i \in \mathcal{Z}\) in \(f\) is independent of the states and actions, and hence independent of the observed per-step reward, \(R_t \in \mathcal{R}\), such that \(\mathbb{P}(S_{t+1} = s', \tilde{R}_{t+1} = f(r, z_1, \ldots, z_n) \mid S_t = s, A_t = a) = \mathbb{P}(S_{t+1} = s', R_{t+1} = r \mid S_t = s, A_t = a)\), and \(\mathbb{E}[f_j(R_t, z_1,  z_2, \ldots,  z_n)] = f_j(\mathbb{E}[R_t], z_1,  z_2, \ldots,  z_n)\), where \(f_j\) denotes the \(j\)th piecewise segment of \(f\), and \(\mathbb{E}\) denotes any expectation taken with respect to the states and actions.
\end{definition}

In essence, the above definition states that a subtask is some constant, \(z_i\), that we wish to learn and/or optimize. From an algorithmic perspective, this means that we will start with some initial estimate (or guess) for the subtask, \(Z_{i,t}\), then update this estimate at every time step, such that \(Z_{i,t} \to z_i\) or \(Z_{i,t} \to z^{*}_i\), depending on whether we are doing prediction or control (where \(z^{*}_i\) denotes the optimal subtask value). But how can we derive an appropriate update rule that accomplishes this? In the following section, we will introduce the \emph{reward-extended TD error}, through which we can derive such an update rule for any subtask that satisfies Definition \ref{definition_1}, such that \(Z_{i,t} \to z_i\) when doing prediction and \(Z_{i,t} \to z^{*}_i\) when doing control.

\subsection{The Reward-Extended TD Error}
\label{reward_extended_td}

In this section, we introduce and derive the \emph{reward-extended TD error}. In particular, we derive a generic, subtask-specific, TD-like error, \(\beta_{i,t}\), through which we can learn and/or optimize any subtask that satisfies Definition \ref{definition_1} via the update rule: \(Z_{i, t+1} = Z_{i,t} + \alpha_{z_i,t}\beta_{i, t}\), where \(Z_{i,t}\) is an estimate of subtask \(z_i\), \(\alpha_{z_i,t}\) is the step size, and \(\beta_{i,t}\) is the reward-extended TD error for subtask \(z_i\). 

Importantly, we will show that the reward-extended TD error satisfies the following property: \(\mathbb{E}_\pi[\beta_{i,t}] \to 0 \; \forall i=1, 2, \ldots, n\) as \(\mathbb{E}_\pi[\delta_t] \to 0\), where \(\delta_t\) is the regular TD error, such that minimizing the regular TD error allows us to solve all subtasks simultaneously. This motivates our naming of the reward-extended TD error, given that it is intrinsically tied to the regular TD error.

Let us begin by considering the common RL update rule of the form: \emph{NewEstimate} \(\leftarrow\) \emph{OldEstimate} \(+\) \emph{StepSize [Target \(-\) OldEstimate]} \citep{Sutton2018-eh, Naik2024-el}. Our aim is to find an appropriate set of subtask-specific ‘targets’, \(\{\phi_{i, t}\}_{i=1}^{n}\), such that \(\mathbb{E}_\pi[\beta_{i,t}] = \mathbb{E}_\pi[\phi_{i, t} - Z_{i,t}] \to 0 \; \forall i=1, 2, \ldots, n\) as \(\mathbb{E}_\pi[\delta_t] \to 0\). To this end, let us consider a generic piecewise linear subtask function with \(m\) piecewise segments:
\begin{equation}
\label{eq_red_1}
\tilde{R}_t =
\begin{cases} 
      b^{1}_rR_t + b^{1}_0 + b^{1}_1z_1 + b^{1}_2z_2 + \ldots + b^{1}_nz_n, \,\ r_0 \leq R_{t} < r_1 \\
      b^{2}_rR_t + b^{2}_0 + b^{2}_1z_1 + b^{2}_2z_2 + \ldots + b^{2}_nz_n, \,\ r_1 \leq R_{t} < r_2 \\
      \vdots \\
      b^{m}_rR_t + b^{m}_0 + b^{m}_1z_1 + b^{m}_2z_2 + \ldots + b^{m}_nz_n, \,\ r_{m-1} \leq R_{t} \leq r_m
   \end{cases},
\end{equation}
where \(r_k \in \mathcal{R} \; \forall \, k = 0, 1, \ldots, m\), and \(r_0 \leq r_1 \leq \ldots \leq r_m\), such that \(r_0, r_m\) represent the lower and upper bounds of the observed per-step reward, \(R_t\), respectively. Moreover,  \(b^{j}_r, b^{j}_0 \in \mathbb{R}\), and \(b^{j}_i \in \mathbb{R}\setminus{\{0\}}\), where \(b^{j}\) denotes a (predefined) constant in the \(j\)th piecewise segment of \(\tilde{R}_t\). 

Now, let us consider the TD error, \(\delta_t\), associated with \eqref{eq_red_1} in the prediction setting. Let \(\tilde{R}_{j, t}\) be shorthand for the \(j\)th segment of \eqref{eq_red_1}, such that the TD error at any time step can be expressed as:
\begin{subequations}
\label{eq_red_2}
\begin{align}
\delta_{j,t} &= \tilde{R}_{j,t+1} - \bar{R}_t + V_t(S_{t+1}) - V_t(S_t)\\
&= b^{j}_rR_{t+1} + b^{j}_0 + b^{j}_1Z_{1, t} + b^{j}_2Z_{2, t} + \ldots + b^{j}_nZ_{n, t} - \bar{R}_t + V_t(S_{t+1}) - V_t(S_t),
\end{align}
\end{subequations}
where \(V_t: \mathcal{S} \rightarrow \mathbb{R}\) denotes a table of state-value function estimates, \(\bar{R}_t\) denotes an estimate of the average-reward, \(\bar{r}_{\pi}\), \(Z_{i, t}\) denotes an estimate of subtask \(z_i \; \forall i=1, 2, \ldots, n\), and \(j\) corresponds to the piecewise condition, \( r_{j-1} \leq R_{t+1} \leq r_j\), that is satisfied by the  observed per-step reward, \(R_{t+1}\). 

Hence, as learning progresses, different \(\tilde{R}_{j, t+1}\) values will be used to define the TD error based on which piecewise condition is satisfied at a given time step. Moreover, we know that the probability that \(\delta_{t} = \delta_{j, t}\) is equal to the probability that \(r_{j-1} \leq R_{t+1} < r_j\). This allows us to express the expected TD error associated with \eqref{eq_red_1} as follows: 
\begin{equation}
\label{eq_red_3}
\mathbb{E}_\pi[\delta_t] = \sum_{j=1}^{m}{\mathbb{P}(r_{j-1} \leq R_{t+1} < r_j)\mathbb{E}_\pi[\delta_{j, t}}].
\end{equation}
Now, let us consider the implications of \(\mathbb{E}_\pi[\delta_t] \to 0\) as it relates to \(\mathbb{E}_\pi[\delta_{j, t}]\). One possibility is that \(\mathbb{E}_\pi[\delta_{j, t}] \to 0 \; \forall j=1, 2,\ldots, m\). However, this may not necessarily be the case; it is possible that, for example, a pair of non-zero \(\mathbb{P}(r_{j-1} \leq R_{t+1} < r_j)\mathbb{E}_\pi[\delta_{j, t}]\) terms cancel each other out, such that \(\mathbb{E}_\pi[\delta_t] \to 0\) but \(\mathbb{E}_\pi[\delta_{j, t}] \to \lambda_j \; \forall j=1, 2, \ldots, m\), where \(\lambda_j \in \mathbb{R}\). In such a case, what we do know is that if \(\mathbb{E}_\pi[\delta_t] \to 0\), then the Bellman equation \eqref{eq_avg_reward_4} must be satisfied, such that: \(V_t(s) = \mathbb{E}_{\pi}[\tilde{R}_{t+1} - \bar{R}_t + V_t(S_{t+1}) \mid S_t = s]\). As such, we can write the following expression for \(\lambda_j\), and solve for an arbitrary subtask, \(z_i\), as follows:
\begin{subequations}
\label{eq_red_4}
\begin{align}
\label{eq_red_4_1}
\lambda_j &= \mathbb{E}_\pi[\tilde{R}_{j,t+1} - \bar{R}_t + V_t(S_{t+1}) - V_t(S_t)]\\ 
\label{eq_red_4_2}
&= \mathbb{E}_\pi\left[\tilde{R}_{j,t+1} - \bar{R}_t + V_t(S_{t+1}) - \left( \tilde{R}_{t+1} - \bar{R}_t + V_t(S_{t+1}) \right)\right]\\
\label{eq_red_4_3}
&= \mathbb{E}_\pi[\tilde{R}_{j,t+1}] - \mathbb{E}_\pi[\tilde{R}_{t+1}]\\
\label{eq_red_4_4}
&= \mathbb{E}_\pi[\tilde{R}_{j,t+1}] - \bar{r}_\pi\\
\label{eq_red_4_5}
&= \mathbb{E}_\pi[b^{j}_rR_{t+1} + b^{j}_0 + \ldots + b^{j}_{i-1}z_{i-1} + b^{j}_{i+1}z_{i+1} + \ldots + b^{j}_nz_n - \bar{r}_{\pi}]  + b^{j}_iz_i\\
\label{eq_red_4_6}
\implies z_i &= \mathbb{E}_{\pi}\left[-\frac{1}{b^{j}_i}\left(b^{j}_rR_{t+1} + b^{j}_0 + \ldots + b^{j}_{i-1}z_{i-1} + b^{j}_{i+1}z_{i+1} + \ldots + b^{j}_nz_n - \bar{r}_{\pi} - \lambda_j\right)\right]\\
\label{eq_red_4_7}
&\doteq \mathbb{E}_{\pi}[\phi_{i, j}],
\end{align}
\end{subequations}
where we used the fact that \(z_i\) is independent of the states and actions to pull it out of the expectation. Here, we use \(\phi_{i, j}\) to denote the expression inside the expectation in Equation \eqref{eq_red_4_6}.

\newpage

Hence, to learn \(z_i\) from experience, we can utilize the common RL update rule, using the term inside the expectation in Equation \eqref{eq_red_4_7}, \(\phi_{i, j}\), as the ‘target’, which yields the update:
\begin{subequations}
\label{eq_red_5}
\begin{align}
\label{eq_red_5_1}
Z_{i, t+1} & = Z_{i, t} + \alpha_{z_i,t}
\begin{cases} 
      \phi_{i, 1, t} - Z_{i, t}, \,\ r_0 \leq R_{t+1} < r_1 \\
      \vdots \\
      \phi_{i, m, t} - Z_{i, t}, \,\ r_{m-1} \leq R_{t+1} \leq r_m
   \end{cases}\\
\label{eq_red_5_2}
& = Z_{i, t} + \alpha_{z_i,t}
\begin{cases} 
      (-1/b^{1}_i)\left(\tilde{R}_{1,t+1} - \bar{R}_t - \delta_t\right), \,\ r_0 \leq R_{t+1} < r_1 \\
      \vdots \\
      (-1/b^{m}_i)\left(\tilde{R}_{m,t+1} - \bar{R}_t - \delta_t\right), \,\ r_{m-1} \leq R_{t+1} \leq r_m
   \end{cases}\\
\label{eq_red_5_3}
& \doteq Z_{i, t} + \alpha_{z_i,t}\beta_{i, t},
\end{align}
\end{subequations}
where \(Z_{i, t}\) is the estimate of subtask \(z_i\) at time \(t\), \(\phi_{i, j, t} \doteq (-1/b^{j}_i)(b^{j}_rR_{t+1} + b^{j}_0 + \ldots + b^{j}_{i-1}Z_{i-1, t} + b^{j}_{i+1}Z_{i+1, t} + \ldots + b^{j}_nZ_{n, t} - \bar{R}_t - \delta_t)\), and \(\alpha_{z_i,t}\) is the step size. 

As such, we now have an expression for the reward-extended TD error for subtask \(z_i\), \(\beta_{i, t}\). We will now show that this term satisfies the desired property: \(\mathbb{E}_\pi[\beta_{i,t}] \to 0 \; \forall i=1, 2, \ldots, n\) as \(\mathbb{E}_\pi[\delta_t] \to 0\), such that minimizing the regular TD error allows us to solve all the subtasks simultaneously:
\begin{theorem}
\label{theorem_4_1}
Consider an average-reward MDP with a set of reward-extended TD errors, \(\{\beta_{i, t}\}_{i=1}^{n}\), as defined in Equation \eqref{eq_red_5}, corresponding to a subtask function with \(n\) subtasks that satisfy Definition \ref{definition_1}. The set of reward-extended TD errors, \(\{\beta_{i, t}\}_{i=1}^{n}\), satisfies the following property: \(\mathbb{E}_\pi[\beta_{i,t}] \to 0 \; \forall i=1, 2, \ldots, n\) as \(\mathbb{E}_\pi[\delta_t] \to 0\), where \(\beta_{i, t}\) denotes the reward-extended TD error for subtask \(z_i\), and \(\delta_t\) denotes the regular TD error.
\end{theorem}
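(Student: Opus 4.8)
I would prove the statement by running the derivation \eqref{eq_red_4} in reverse: compute $\mathbb{E}_\pi[\beta_{i,t}]$ in closed form, reduce it to a segment-weighted sum of per-segment Bellman residuals, and show that as $\mathbb{E}_\pi[\delta_t]\to 0$ each residual collapses to exactly the quantity that was set to zero when the target $\phi_{i,j}$ was constructed. First I would unpack $\beta_{i,t}$: by \eqref{eq_red_5}, on the segment $j$ selected by the observed reward $R_{t+1}$ one has $\beta_{i,t}=\phi_{i,j,t}-Z_{i,t}$ with $\phi_{i,j,t}=(-1/b^j_i)\big(b^j_rR_{t+1}+b^j_0+\sum_{k\neq i}b^j_kZ_{k,t}-\bar R_t-\delta_t\big)$; since $b^j_rR_{t+1}+b^j_0+\sum_{k\neq i}b^j_kZ_{k,t}=\tilde R_{j,t+1}-b^j_iZ_{i,t}$, the outer $-Z_{i,t}$ cancels and $\beta_{i,t}=(-1/b^j_i)(\tilde R_{j,t+1}-\bar R_t-\delta_t)$ on that segment, which is a genuine identity because there $\delta_t=\delta_{j,t}$. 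Taking expectations, decomposing over segments by the law of total expectation as in \eqref{eq_red_3}, and using Definition \ref{definition_1}(ii) to pull the state/action-independent $Z_{k,t}$ and $\bar R_t$ out of each conditional expectation, I obtain
\[
\mathbb{E}_\pi[\beta_{i,t}]=\sum_{j=1}^{m}\mathbb{P}(r_{j-1}\le R_{t+1}<r_j)(-1/b^j_i)\big(\mathbb{E}_\pi[\tilde R_{j,t+1}\mid \text{seg }j]-\bar R_t-\mathbb{E}_\pi[\delta_{j,t}\mid\text{seg }j]\big).
\]

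\textbf{Passing to the limit.} As $\mathbb{E}_\pi[\delta_t]\to 0$ we have $\bar R_t\to\bar r_\pi$, $\mathbb{E}_\pi[\delta_{j,t}\mid\text{seg }j]\to\lambda_j$, and the limiting Bellman equation \eqref{eq_avg_reward_4} holds (the fact recorded in the discussion preceding \eqref{eq_red_4}); this is precisely what \eqref{eq_red_4_1}--\eqref{eq_red_4_4} use to conclude $\lambda_j=\mathbb{E}_\pi[\tilde R_{j,t+1}\mid\text{seg }j]-\bar r_\pi$. Substituting, every bracket in the displayed sum tends to $\mathbb{E}_\pi[\tilde R_{j,t+1}\mid\text{seg }j]-\bar r_\pi-\lambda_j=0$, and since the segment probabilities are nonnegative and the coefficients $-1/b^j_i$ are finite (Definition \ref{definition_1} assumes $b^j_i\neq 0$ and $\mathcal R,\tilde{\mathcal R},\mathcal Z_i$ bounded, which also lets the limit pass through the finite sum over $m$ segments), we conclude $\mathbb{E}_\pi[\beta_{i,t}]\to 0$. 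Nothing in the argument singles out a particular $i$, so it applies to all $i=1,\dots,n$ simultaneously, which is the claim.

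\textbf{Main obstacle.} The step I expect to be delicate is the identity $\lambda_j=\mathbb{E}_\pi[\tilde R_{j,t+1}\mid\text{seg }j]-\bar r_\pi$ of \eqref{eq_red_4_4} and, more precisely, its legitimate use inside conditional expectations: the segment indicator is a function of $R_{t+1}$, which sits downstream of $S_t$, so substituting the pointwise Bellman relation $V_t(S_t)\to\mathbb{E}_\pi[\tilde R_{t+1}-\bar R_t+V_t(S_{t+1})\mid S_t]$ into an expectation conditioned on the active segment is not a free application of the tower property and must be justified with care (using Definition \ref{definition_1}, in particular that $\mathbb{E}[f_j(R_t,z_1,\dots,z_n)]=f_j(\mathbb{E}[R_t],z_1,\dots,z_n)$ and that each $z_i$ is independent of the states and actions). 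This is exactly the feature that separates the genuinely piecewise-linear case from the plain linear case $m=1$, where the sum has a single term and $\mathbb{E}_\pi[\beta_{i,t}]=(-1/b_i)\big(\mathbb{E}_\pi[\tilde R_{t+1}]-\bar R_t-\mathbb{E}_\pi[\delta_t]\big)$ vanishes immediately because the value-function increments average to zero under $\mu_\pi$. Showing that, for $m>1$, the per-segment residuals recombine under the weights $\mathbb{P}(\text{seg }j)$ and coefficients $-1/b^j_i$ so that the weighted sum still collapses to zero is the real content; the remaining steps are bookkeeping and boundedness.
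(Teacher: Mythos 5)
Your proposal is correct and follows essentially the same route as the paper's proof: both reduce $\beta_{i,t}$ on segment $j$ to $(-1/b^j_i)(\tilde R_{j,t+1}-\bar R_t-\delta_t)$, decompose $\mathbb{E}_\pi[\beta_{i,t}]$ over segments weighted by $\mathbb{P}(r_{j-1}\le R_{t+1}<r_j)$, and use $\bar R_t\to\bar r_\pi$ together with the identity $\lambda_j=\mathbb{E}_\pi[\tilde R_{j,t+1}]-\bar r_\pi$ from \eqref{eq_red_4_4} to show each per-segment term vanishes. Your version is somewhat more explicit about the cancellation inside $\phi_{i,j,t}-Z_{i,t}$ and flags the segment-conditioning subtlety that the paper passes over silently, but the underlying argument is the same.
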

\begin{proof}
Let us consider the reward-extended TD error associated with an arbitrary \(j\)th segment of \(\tilde{R}_t\) for an arbitrary \(i\)th subtask: \(\beta_{i, j, t} \doteq (-1/b^{j}_i)(\tilde{R}_{j,t+1} - \bar{R}_t - \delta_t)\). As \(\mathbb{E}_\pi[\delta_t] \to 0\), \(\bar{R}_t \to \bar{r}_\pi\) (by Theorem 3 of \citet{Wan2021-re}; see Remark \ref{remark_3}) and \(\delta_t \to \lambda_j\) for this \(j\)th segment. Hence, \(\mathbb{E}_\pi[\beta_{i, j, t}] \to (-1/b^{j}_i)(\mathbb{E}_\pi[\tilde{R}_{j,t+1}] - \bar{r}_\pi - \lambda_j) = (-1/b^{j}_i)(\lambda_j - \lambda_j) = 0\). Now, because we chose \(j\) arbitrarily, we have, for all \(j \in \{1, 2, \dots, m\}\), that \(\mathbb{E}_\pi[\beta_{i, j, t}] \to 0\). As such, and because we chose \(i\) arbitrarily, we can conclude that \(\mathbb{E}_\pi[\beta_{i,t}] = \sum_{j=1}^{m}{\mathbb{P}(r_{j-1} \leq R_{t+1} < r_j)\mathbb{E}_\pi[\beta_{i, j, t}}] \to 0 \; \forall i=1, 2, \ldots, n\) as \(\mathbb{E}_\pi[\delta_t] \to 0\). This completes the proof.
\end{proof}

As such, we have derived the desired update rule that we can use to solve any given subtask in the prediction setting. The same logic can be applied in the control setting to derive equivalent updates, where we note that it directly follows from Definition \ref{definition_1} that the existence of an optimal average-reward, \(\bar{r}*\), implies the existence of corresponding optimal subtask values, \(z_i^{*} \; \forall z_i \in \mathcal{Z}\).\\

\begin{remark}
\label{remark_1}
In the case of a (non-piecewise) linear subtask function, the expression for the reward-extended TD error can be simplified to \(\beta_{i, t} \doteq (-1/b_i)\delta_t\) by setting \(\lambda=0\) in Equation \eqref{eq_red_4_1}, solving for the target, \(z_i\), and applying a similar process to the one described in Equation \eqref{eq_red_5}.\\
\end{remark}

\begin{remark}
\label{remark_2}
Given Remark \ref{remark_1}, it can be shown that if one treats the average-reward, \(\bar{r}_\pi\), as a subtask, and derives the reward-extended TD error for it, the process yields the average-reward update (e.g. Equation \eqref{eq_avg_reward_6_4}) from the Differential algorithms proposed in \citet{Wan2021-re}. Hence, our work can be viewed as a generalization of the work performed in \citet{Wan2021-re}.\\
\end{remark}

\begin{remark}
\label{remark_3}
Strictly speaking, \(\bar{R}_t \to \bar{r}_\pi + c, \; c \in \mathbb{R}\). This is because average-reward solution methods typically find the solutions to the Bellman equations \eqref{eq_avg_reward_4} and \eqref{eq_avg_reward_5} up to an additive constant, \(c\). This means that, like the average-reward estimate, our subtask estimates converge to the actual subtask values, up to an additive constant. For simplicity, we omit this additive constant in our work, unless strictly necessary, given that it is commonplace to assume that solutions in the average-reward setting are correct up to an additive constant.
\end{remark}

\subsection{The RED Algorithms}

In this section, we introduce the \emph{RED RL algorithms}, which integrate the update rules derived in the previous section into the average-reward RL framework from \citet{Wan2021-re}. The full algorithms, including algorithms that utilize function approximation, are included in Appendix \ref{appendix_RED_algs}.

\vspace{1pt}

\textbf{RED TD-learning algorithm (tabular):}  We update a table of estimates, \(V_t: \mathcal{S} \rightarrow \mathbb{R}\) as follows:
\begin{subequations}
\label{eq_alg_1}
\begin{align}
\label{eq_alg_1_1}
& \tilde{R}_{t+1} = f(R_{t+1}, Z_{1, t}, Z_{2, t},\ldots,Z_{n, t})\\
\label{eq_alg_1_2}
& \delta_{t} = \tilde{R}_{t+1} - \bar{R}_t + V_{t}(S_{t+1}) - V_{t}(S_t)\\
\label{eq_alg_1_3}
& V_{t+1}(S_t) = V_{t}(S_t) + \alpha_{t}\rho_{t}\delta_{t}\\
% \label{eq_alg_1_4}
% & V_{t+1}(s) = V_{t}(s), \quad \forall s \neq S_t\\
\label{eq_alg_1_5}
& \bar{R}_{t+1} = \bar{R}_t + \alpha_{\bar{r},t}\rho_{t}\delta_{t}\\
\label{eq_alg_1_6}
& Z_{i, t+1} = Z_{i, t} + \alpha_{z_i,t}\rho_t\beta_{i, t}, \quad \forall z_i \in \mathcal{Z}
\end{align}
\end{subequations}
where \(R_{t}\) is the observed reward, \(Z_{i, t}\) is an estimate of subtask \(z_i\), \(\beta_{i, t}\) is the reward-extended TD error for subtask \(z_i\), \(\bar{R}_t\) is an estimate of the long-run average-reward of \(\tilde{R}_{t}\), \(\bar{r}_{\pi}\), \(\delta_t\) is the TD error, \(\rho_t\) is the importance sampling ratio, and \(\alpha_t\), \(\alpha_{\bar{r},t}\), and \(\alpha_{z_i,t}\) are the step sizes.

\citet{Wan2021-re} showed for their Differential TD-learning algorithm that \(R_t\) converges to \(\bar{r}_{\pi}\), and \(V_t\) converges to a solution of \(v\) in Equation \eqref{eq_avg_reward_4} for a given policy, \(\pi\). We now provide an equivalent theorem for our RED TD-learning algorithm, which also shows that \(Z_{i, t}\) converges to \(z_{i, {\pi}} \; \forall z_i \in \mathcal{Z}\), where \(z_{i, {\pi}}\) denotes the subtask value induced when following policy \(\pi\): 

\vspace{2pt}

\begin{theorem}[informal]
\label{theorem_4_2}
The RED TD-learning algorithm \eqref{eq_alg_1} converges, almost surely, \(\bar{R}_t\) to \(\bar{r}_{\pi}\), \(Z_{i, t}\) to \(z_{i, {\pi}} \; \forall z_i \in \mathcal{Z}\), and \(V_t\) to a solution of \(v\) in the Bellman equation \eqref{eq_avg_reward_4}, up to an additive constant, \(c\), if the following assumptions hold: 1) the Markov chain induced by the target policy, \(\pi\), is unichain, 2) every state–action pair for which \(\pi(a | s) > 0\) occurs an infinite number of times under the behavior policy, 3) the step sizes are decreased appropriately, 4) \(V_{t}\) is updated an infinite number of times for all states, such that the ratio of the update frequency of the most-updated state to the least-updated state is finite, and 5) the subtasks are in accordance with Definition \ref{definition_1}.
\end{theorem}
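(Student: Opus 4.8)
\textbf{Proof proposal for Theorem \ref{theorem_4_2}.}
The plan is to treat the recursion \eqref{eq_alg_1} as a single multivariate stochastic-approximation scheme in the augmented iterate $(V_t,\bar R_t,Z_{1,t},\dots,Z_{n,t})$ and to reuse, with extra coordinates, the ODE-based convergence argument of \citet{Wan2021-re} for Differential TD-learning. The key observation is that the $V_t$ and $\bar R_t$ updates in \eqref{eq_alg_1} are \emph{exactly} the Differential TD-learning updates \eqref{eq_avg_reward_6} applied to the extended MDP $\tilde{\mathcal M}$, except that the per-step reward fed into them at step $t$ is $\tilde R_{t+1}=f(R_{t+1},Z_{1,t},\dots,Z_{n,t})$, i.e. a reward whose \emph{definition} co-evolves with the subtask iterates. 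By Definition \ref{definition_1}, $\tilde R_{t+1}$ is uniformly bounded whenever the $Z_{i,t}$ are bounded, and by \eqref{eq_red_5_2} and Remark \ref{remark_1} the reward-extended TD error $\beta_{i,t}$ is, within whichever linear segment is active at step $t$, a bounded function of $\delta_t$ and the current transition alone. Hence the $Z$-coordinate increments carry no stochastic driving term beyond the one already present in the $V_t,\bar R_t$ updates, which is what makes the joint recursion amenable to the same martingale-noise/ODE decomposition used in \citet{Wan2021-re}.

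With this in place I would proceed in four steps. (1) \emph{Boundedness}: show $(V_t,\bar R_t,Z_t)$ stays bounded almost surely, via a joint Borkar--Meyn ``scaled ODE at infinity'' argument --- the $V_t,\bar R_t$ part inheriting the boundedness argument of \citet{Wan2021-re} once the $Z_{i,t}$ are bounded, and the $Z_{i,t}$ increments being bounded functions of $\delta_t$, so that the scaled system at infinity is linear and stable. (2) \emph{Mean-field ODE}: under assumptions 1--4 (unichain target chain, infinite visitation of every $\pi$-supported state--action pair, Robbins--Monro step sizes, and the finite update-frequency-ratio condition) identify the limiting ODE; its $(v,\bar r)$-part is the Differential-TD ODE for $\tilde{\mathcal M}$ with the subtask arguments frozen at the current $z$, and $\dot z_i$ is the stationary $\pi$-expectation of $\beta_{i,t}$. (3) \emph{Equilibria}: at an equilibrium the drift of each $V_t(s)$ vanishes, so $\mathbb E_\pi[\delta_t\mid S_t=s]=0$ for all $s$, i.e. the Poisson equation \eqref{eq_avg_reward_4} holds for $\tilde{\mathcal M}(z)$; Theorem \ref{theorem_4_1} then forces $\mathbb E_\pi[\beta_{i,t}]=0$ for all $i$, and the fixed-point identity \eqref{eq_red_4} together with the invertibility of $f$ in each argument (Definition \ref{definition_1}(i)) and the expectation-commuting property (Definition \ref{definition_1}(ii)) identifies the equilibrium value of each $z_i$ with the subtask value $z_{i,\pi}$ induced by $\pi$, everything being determined only up to the common additive constant $c$ of Remark \ref{remark_3}, which is fixed by the initial conditions exactly as in \citet{Wan2021-re}. (4) \emph{Stability and transfer}: exhibit a Lyapunov function for the ODE --- for a non-piecewise-linear $f$ the whole system is linear and this is essentially the function used in \citet{Wan2021-re}, while for a piecewise-linear $f$ the ODE is globally Lipschitz and piecewise affine, so one uses a common quadratic Lyapunov function of the form ``weighted Poisson residual of $\tilde{\mathcal M}$ plus $\sum_i(Z_i-z_{i,\pi})^2$'' and checks the matching conditions at the segment boundaries --- and then invoke the standard stochastic-approximation theorem to transfer ODE convergence to almost-sure convergence of the iterates, yielding $\bar R_t\to\bar r_\pi$, $Z_{i,t}\to z_{i,\pi}$ for all $z_i\in\mathcal Z$, and $V_t\to$ a solution of \eqref{eq_avg_reward_4}, each up to the additive constant $c$.

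The main obstacle is Steps (3)--(4): because $\tilde R_{t+1}$ depends on $Z_t$, the environment $\tilde{\mathcal M}$ is effectively non-stationary along the trajectory, so the usual ``fixed MDP $\Rightarrow$ fixed Bellman operator $\Rightarrow$ unique fixed point'' reasoning does not apply directly. The resolution is the structural collapse noted in the first paragraph --- $\beta_{i,t}$ being a bounded function of $\delta_t$ and the transition --- which decouples the \emph{noise} of the $Z$-updates from the subtask iterates even though their \emph{drift} does not, together with the invertibility in Definition \ref{definition_1}, which is precisely what rules out spurious equilibria with $z\neq z_\pi$ (beyond the additive-constant family). A secondary, purely technical, difficulty is the non-smoothness of the mean-field ODE at the segment boundaries in the piecewise-linear case, which the common-Lyapunov-function construction of Step (4) is designed to absorb, and one must additionally check that the piecewise structure does not destroy the affine invariants used to pin down $c$.
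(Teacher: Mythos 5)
Your overall architecture (stochastic approximation plus ODE analysis, leaning on \citet{Wan2021-re} for the $(V,\bar R)$ part) is the right family of argument, but you take a genuinely different route from the paper, and the route has a gap precisely at the step you flag as the main obstacle. The paper does not analyze the joint iterate $(V_t,\bar R_t,Z_t)$ as a single-timescale system with a common Lyapunov function. It splits into two cases. For strictly linear $f$ it exploits the fact that $\beta_{i,t}=(-1/b_i)\delta_t$ exactly, so with proportional step sizes ($\alpha_{\bar r,t}=\eta_r\alpha_t$, $\alpha_{z_i,t}=\eta_{z_i}\alpha_t$) the iterates satisfy the exact affine invariants $\bar R_n=\eta_r\sum V_n-c_r$ and $Z_{i,n}=\eta_i\sum V_n-c_i$ for all $n$; substituting these into the $V$-update collapses the whole scheme into a single Differential-TD recursion on a modified reward, so Wan et al.'s theorem applies verbatim and the subtask limits are read off from the invariants. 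For piecewise linear $f$ this collapse fails (the drift of $Z_{i,t}$ is no longer proportional to that of $V_t$), and the paper instead imposes a \emph{multiple-timescales} step-size condition ($\alpha_{z_1,n}/\alpha_n\to 0$, $\alpha_{z_i,n}/\alpha_{z_{i-1},n}\to 0$; this is what assumption 3 of the informal statement is hiding) and invokes Borkar's multi-timescale theorem: the fast $(V,\bar R)$ system is equilibrated with the subtasks frozen, and each slow subtask ODE is then scalar, with a trivial quadratic Lyapunov function $\frac12(Z_i-z_{i,\pi})^2$ analyzed one coordinate at a time.

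The gap in your proposal is Step (4) in the piecewise case. You assert a common quadratic Lyapunov function of the form ``weighted Poisson residual plus $\sum_i(Z_i-z_{i,\pi})^2$'' for the joint piecewise-affine mean-field ODE and say one ``checks the matching conditions at the segment boundaries.'' That existence claim is not automatic: piecewise-affine systems whose individual pieces are each stable need not admit a common quadratic Lyapunov function, and nothing in Definition \ref{definition_1} guarantees one here. This is exactly the difficulty the paper's timescale-separation assumption is engineered to avoid — by making the subtask iterates asymptotically static relative to $(V_t,\bar R_t)$, each subtask ODE becomes one-dimensional with a unique equilibrium, where global asymptotic stability is immediate. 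Under single-timescale Robbins--Monro step sizes, as you propose, you would need to actually construct and verify the joint Lyapunov function (or otherwise establish global asymptotic stability of the coupled piecewise-affine ODE), and that is a strictly harder and unproven claim. A secondary inaccuracy: your statement that the $Z$-increments ``carry no stochastic driving term beyond the one already present in the $V_t,\bar R_t$ updates'' is exact only in the linear case; in the piecewise case the active segment is itself determined by the random reward $R_{t+1}$, which is why the exact affine invariants (and with them the clean reduction) are lost there.
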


\vspace{5pt}

\textbf{RED Q-learning algorithm (tabular):}  We update \(Q_t: \mathcal{S} \, \times\, \mathcal{A} \rightarrow \mathbb{R}\) as follows:
\begin{subequations}
\label{eq_alg_2}
\begin{align}
\label{eq_alg_2_1}
& \tilde{R}_{t+1} = f(R_{t+1}, Z_{1, t}, Z_{2, t},\ldots,Z_{n, t})\\
\label{eq_alg_2_2}
& \delta_{t} = \tilde{R}_{t+1} - \bar{R}_t +\max_a Q_{t}(S_{t+1}, a) - Q_{t}(S_t, A_t)\\
\label{eq_alg_2_3}
& Q_{t+1}(S_t, A_t) = Q_{t}(S_t, A_t) + \alpha_{t}\delta_{t}\\
% \label{eq_alg_2_4}
% & Q_{t+1}(s, a) = Q_{t}(s, a), \quad \forall s, a \neq S_t, A_t\\
\label{eq_alg_2_5}
& \bar{R}_{t+1} = \bar{R}_t + \alpha_{\bar{r},t}\delta_{t}\\
\label{eq_alg_2_6}
& Z_{i, t+1} = Z_{i, t} + \alpha_{z_i,t}\beta_{i, t}, \quad \forall z_i \in \mathcal{Z}
\end{align}
\end{subequations}
where \(R_{t}\) is the observed reward, \(Z_{i, t}\) is an estimate of subtask \(z_i\), \(\beta_{i, t}\) is the reward-extended TD error for subtask \(z_i\), \(\bar{R}_t\) is an estimate of the long-run average-reward of \(\tilde{R}_{t}\), \(\bar{r}_{\pi}\), \(\delta_t\) is the TD error, and \(\alpha_t\), \(\alpha_{\bar{r},t}\), and \(\alpha_{z_i,t}\) are the step sizes. \citet{Wan2021-re} showed for their Differential Q-learning algorithm that \(R_t\) converges to \(\bar{r}*\), and \(Q_t\) converges to a solution of \(q\) in Equation \eqref{eq_avg_reward_5}. We now provide an equivalent theorem for our RED Q-learning algorithm, which also shows that \(Z_{i, t}\) converges to the corresponding optimal subtask value \(z_i^{*} \; \forall z_i \in \mathcal{Z}\): 

\vspace{2pt}

\begin{theorem}[informal]
\label{theorem_4_3}
The RED Q-learning algorithm \eqref{eq_alg_2} converges, almost surely, \(\bar{R}_t\) to \(\bar{r}*\), \(Z_{i, t}\) to \(z_i^{*} \; \forall z_i \in \mathcal{Z}\), \(\bar{r}_{\pi_t}\) to \(\bar{r}*\), \(z_{i, {\pi_t}}\) to \(z_i^{*} \; \forall z_i \in \mathcal{Z}\), and \(Q_t\) to a solution of \(q\) in the Bellman optimality equation \eqref{eq_avg_reward_5}, up to an additive constant, \(c\), where \(\pi_t\) is any greedy policy with respect to \(Q_t\), if the following assumptions hold: 1) the MDP is communicating, 2) the solution of \(q\) in \eqref{eq_avg_reward_5} is unique up to a constant, 3) the step sizes are decreased appropriately, 4) \(Q_{t}\) is updated an infinite number of times for all state-action pairs, such that the ratio of the update frequency of the most-updated state–action pair to the least-updated state–action pair is finite, and 5) the subtasks are in accordance with Definition \ref{definition_1}.
\end{theorem}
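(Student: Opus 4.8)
The plan is to recast the RED Q-learning recursion \eqref{eq_alg_2} as a stochastic-approximation scheme and to piggyback on the convergence analysis of Differential Q-learning from \citet{Wan2021-re}, treating the subtask estimates $Z_t = (Z_{1,t},\dots,Z_{n,t})$ as extra iterates. The genuinely new feature relative to \citet{Wan2021-re} is that the reward driving the $Q_t$- and $\bar R_t$-updates is the \emph{extended} reward $\tilde R_{t+1} = f(R_{t+1},Z_{1,t},\dots,Z_{n,t})$, which itself depends on the iterates $Z_t$, so the scheme is coupled. I would run the argument with the subtask iterates on a slower timescale than $(Q_t,\bar R_t)$ (that is, $\alpha_{z_i,t}/\alpha_t\to 0$) so that the standard two-timescale stochastic-approximation machinery applies; the appendix fixes the precise step-size hypotheses in assumption 3, and the asynchronous per-state--action $Q_t$-update is treated exactly as in \citet{Wan2021-re} using assumption 4.

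A useful first move is to simplify the subtask increment: substituting the definition of $\delta_t$ into \eqref{eq_red_5} cancels the reward and average-reward terms and leaves $\beta_{i,t} = (1/b^{J}_i)\bigl(\max_a Q_t(S_{t+1},a) - Q_t(S_t,A_t)\bigr)$, where $J$ is the active piecewise segment. Thus the subtask update is driven purely by the value increment and is decoupled from $\bar R_t$ and from $Z_t$ (beyond the index $J$, which depends only on $R_{t+1}$). On the fast timescale $Z_t$ is quasi-static, so for frozen $Z$ the $(Q,\bar R)$-ODE is exactly the Differential Q-learning ODE for the MDP $\tilde{\mathcal{M}}(Z)$ with reward $f(\cdot,Z)$; by \citet{Wan2021-re} it converges, up to an additive constant, to the Bellman-optimal pair for $\tilde{\mathcal{M}}(Z)$, at which $\mathbb{E}_\pi[\delta_t]\to 0$. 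Almost-sure boundedness of $(Q_t,\bar R_t)$ then follows from \citet{Wan2021-re} once $Z_t$ is bounded (note $\tilde R_{t+1}$ is bounded whenever $Z_t$ is, by the piecewise linearity in Definition \ref{definition_1}); boundedness of $Z_t$ itself I would obtain from the global asymptotic stability of its limiting slow ODE via a Borkar--Meyn-type argument on the joint scheme.

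Next I would analyze the slow timescale. With $(Q_t,\bar R_t)$ tracking the fast equilibrium, the $Z_i$-ODE is the $\mathbb{P}(r_{j-1}\le R_{t+1}<r_j)$-weighted mixture of the segment drifts $\mathbb{E}_{\mu_{\pi(Z)}}[\beta_{i,j,t}]$. Using the algebra of \eqref{eq_red_4}, the commutation identity $\mathbb{E}[f_j(R,z)] = f_j(\mathbb{E}[R],z)$ from Definition \ref{definition_1}, and the invertibility of $f$ in each argument, I would show this vector field vanishes exactly at $Z = z^{*}$ with $(Q,\bar R)$ at the optimal solution of the true extended MDP $\tilde{\mathcal{M}}$, and, invoking uniqueness assumptions 1--2, that $z^{*}$ is its unique zero. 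Theorem \ref{theorem_4_1} is the lever: it states $\mathbb{E}_\pi[\beta_{i,t}]\to 0$ precisely as $\mathbb{E}_\pi[\delta_t]\to 0$, so the slow ODE has no drift exactly when the fast subsystem has reached the Bellman solution of $\tilde{\mathcal{M}}(z^{*})$. I would then prove global asymptotic stability of the slow ODE at $z^{*}$ by a Lyapunov or contraction argument on the induced map ``$z\mapsto$ subtask value of the optimal policy for the reshaped reward $f(\cdot,z)$'', apply the two-timescale convergence theorem to get $(Q_t,\bar R_t,Z_t)\to(q^{*}+c\mathbf{1},\bar r^{*},z^{*})$ almost surely (the additive constant $c$ as in Remark \ref{remark_3}), and transfer this to the induced quantities: a greedy policy $\pi_t$ with respect to a $Q_t$ converging to $q^{*}+c\mathbf{1}$ is eventually optimal for $\tilde{\mathcal{M}}$, so $\bar r_{\pi_t}\to\bar r^{*}$ and, by continuity of the subtask value in the policy, $z_{i,\pi_t}\to z_i^{*}$ — mirroring the corresponding step of \citet{Wan2021-re}.

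The main obstacle is the coupling of the subtask iterates with the $(Q,\bar R)$ subsystem through the learned reward $\tilde R_{t+1}$, which manifests in three places: (a) joint almost-sure boundedness when the reward is itself a function of iterates being learned; (b) showing the \emph{nested} fixed-point map — ``$z\mapsto$ optimal-policy subtask value for the reshaped reward'' — has a unique, globally attracting fixed point, since two-timescale stability of the slow ODE is not automatic and this is where uniqueness assumptions 1--2 and Definition \ref{definition_1}(i) must genuinely be used; and (c) the piecewise structure, since the active segment index is random with segment-dependent coefficients $b^{j}_i$, so the per-segment identities behind Theorem \ref{theorem_4_1} must be assembled into the segment-weighted mixture drift and shown to retain the correct equilibrium and stability. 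I expect (b) to be the genuinely hard part, with (a) and (c) amounting to careful bookkeeping on top of \citet{Wan2021-re}.
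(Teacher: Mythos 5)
Your proposal is correct and follows the same overall route as the paper's proof: a multiple-timescales stochastic-approximation argument with the subtask iterates on slower timescales (the paper's subtask step-size assumption imposes exactly \(\alpha_{z_1,n}/\alpha_n \to 0\), \(\alpha_{z_1,n}/\alpha_{\bar r,n}\to 0\), and \(\alpha_{z_i,n}/\alpha_{z_{i-1},n}\to 0\)), the results of \citet{Wan2021-re} for the fast \((Q_n,\bar R_n)\) subsystem with the subtasks frozen, and Borkar's asynchronous-convergence and Borkar--Meyn stability theorems for the slow subsystem, followed by the same transfer to \(\bar r_{\pi_t}\) and \(z_{i,\pi_t}\) via greedy policies. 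Two differences are worth noting. First, the paper splits the proof by the structure of \(f\): for strictly linear subtask functions it avoids timescale separation altogether, observing that \(\beta_{i,n}=(-1/b_i)\delta_n\) makes \(Z_{i,n}\) an exact affine function of \(\sum Q_n\) at every step, so substituting this (and the analogous identity for \(\bar R_n\)) into the \(Q\)-update collapses the whole coupled scheme into Wan et al.'s General Differential Q with a shifted reward; the two-timescale machinery is reserved for the piecewise case. Second, the step you single out as the ``genuinely hard part'' --- global stability of the slow fixed-point map --- is dispatched in the paper by a one-line quadratic Lyapunov computation: keeping the drift in the form \(g_i(x)=\mathbb{E}_\pi[\phi_i]-x\) (rather than your cancelled form \((1/b^{J}_i)\bigl(\max_a Q(S',a)-Q(S,A)\bigr)\), which is algebraically equivalent but hides the explicit \(-Z_i\) term) gives \(\dot x = z_{i*}-x\) once the fast variables are taken as equilibrated, whence \(L(x)=\tfrac12(x-z_{i*})^2\) strictly decreases along trajectories; the same \(-x\) structure supplies the scaled field \(g_i(x)_\infty=-x\) needed for the Borkar--Meyn boundedness lemma. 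Your cancellation observation is valid, but the paper's choice of representation is precisely what makes both the stability and boundedness steps immediate, so no separate contraction analysis of the nested map \(z\mapsto z_\pi(z)\) is ever carried out.
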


\vspace{5pt}

See Appendix \ref{appendix_proofs} for the formal version of these theorems, along with the full convergence proofs.

\section{Case Study: RED RL for CVaR Optimization}
\label{risk_red}
In this section, we present a case-study which illustrates how the subtask-driven approach that was derived in Section \ref{red} can be used to successfully optimize the CVaR risk measure, \emph{without} the use of an explicit bi-level optimization scheme (as in Equation \eqref{eq_cvar_3}), or an augmented state-space.

First, in order to leverage the RED RL framework for CVaR optimization, we need to derive a valid subtask function for CVaR that satisfies the requirements of Definition \ref{definition_1}. It turns out that we can use Equation \eqref{eq_cvar_2} as a basis for the subtask function. The details of the adaptation of Equation \eqref{eq_cvar_2} into a subtask function are presented in Appendix \ref{appendix_RED_CVAR}. Critically, as discussed in Appendix \ref{appendix_RED_CVAR}, optimizing the long-run average of the \emph{extended} reward (\(\tilde{R}_t\)) from this subtask function corresponds to optimizing the long-run CVaR of the \emph{observed} reward (\(R_t\)). Hence, we can utilize CVaR-specific versions of the RED algorithms presented in Equations \eqref{eq_alg_1} and \eqref{eq_alg_2} (or their non-tabular equivalents) to optimize VaR and CVaR, such that CVaR corresponds to the primary control objective (i.e., the \(\bar{r}_{\pi}\) that we want to optimize), and VaR is the (single) subtask. We call the resulting algorithms, the \emph{RED CVaR algorithms}. These algorithms, which are shown in full in Appendix \ref{appendix_RED_CVAR}, update CVaR in an analogous way to the average-reward (i.e., CVaR corresponds to \(\bar{R}_t\) in Equations \eqref{eq_alg_1} or \eqref{eq_alg_2}), and update VaR using a VaR-specific version of Equation \eqref{eq_alg_1_6} or \eqref{eq_alg_2_6} as follows:
\begin{equation}
\label{eq_red_old_cvar_1}
\text{VaR}_{t+1} =
\begin{cases} 
      \text{VaR}_t + \alpha_{_\text{VaR},t} \left(\delta_t + \text{CVaR}_t - \text{VaR}_t \right), & R_{t+1} \geq \text{VaR}_t \\
      \text{VaR}_t + \alpha_{_\text{VaR},t} \left(\left(\frac{\tau}{ \tau - 1}\right)\delta_t + \text{CVaR}_t - \text{VaR}_t \right), & R_{t+1} < \text{VaR}_t
   \end{cases} \,,
\end{equation}
where \(\tau\) is the CVaR parameter, \(\delta_t\) is the TD error, \(R_t\) is the observed reward, and \(\alpha_{_\text{VaR},t} \) is the step size. As such, given Theorems \ref{theorem_4_1} - \ref{theorem_4_3}, we now have a subtask-driven approach for CVaR optimization that is able to simultaneously optimize VaR and CVaR without the use of an explicit bi-level optimization scheme or an augmented state-space (see Appendix \ref{appendix_RED_CVAR} for a more formal argument).

We now present empirical results obtained when applying the RED CVaR algorithms on two RL tasks. The first task corresponds to a two-state environment that we created to test the RED CVaR algorithms. It is called the \emph{red-pill blue-pill} task (see Appendix \ref{appendix_RPBP}), where at every time step an agent can take either a ‘red pill’, which takes them to the ‘red world’ state, or a ‘blue pill’, which takes them to the ‘blue world’ state. Each state has its own characteristic per-step reward distribution, and in this case, for a sufficiently low CVaR parameter, \(\tau\), the red world state has a reward distribution with a lower (worse) mean but a higher (better) CVaR compared to the blue world state. As such, this task allows us to answer the following question: \emph{can the RED CVaR algorithms successfully enable the agent to learn a policy that prioritizes optimizing the reward CVaR over the average-reward?} In particular, we would expect that the RED CVaR algorithms learn a policy that prefers to stay in the red world, and that regular, risk-neutral Differential algorithms (i.e., from \citet{Wan2021-re}) learn a policy that prefers to stay in the blue world. This task is illustrated in Figure \ref{fig_rpbp}.
\begin{figure}[htbp]
\centerline{\includegraphics[scale=0.45]{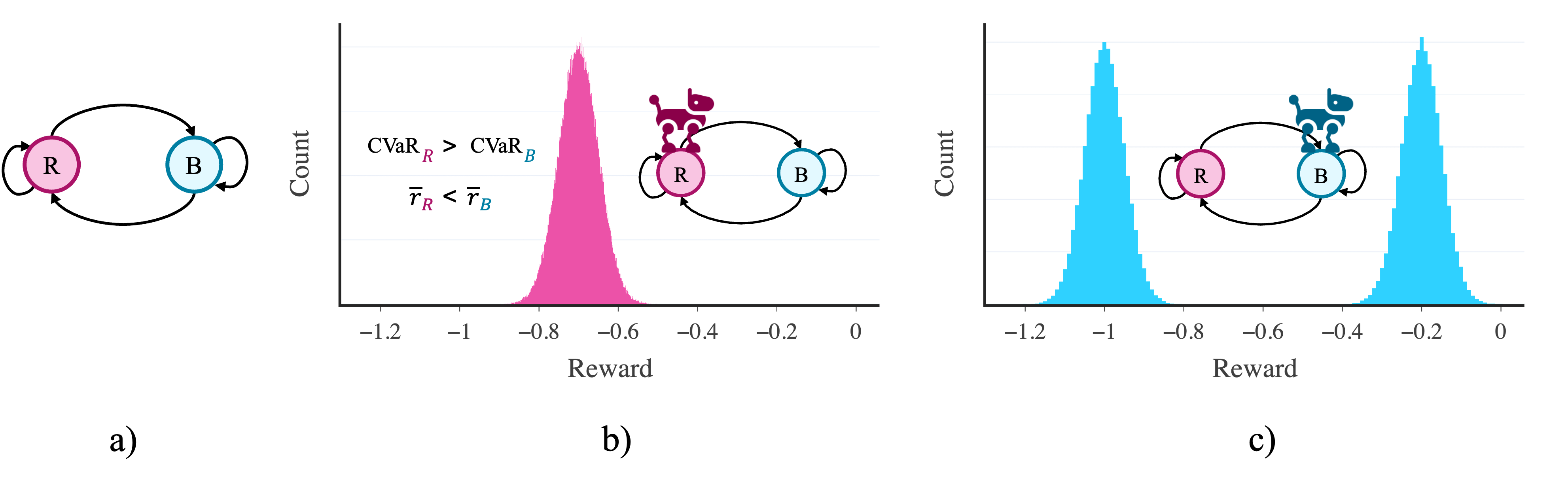}}
\caption{\textbf{a)} The \emph{red-pill blue-pill} environment. \textbf{b) + c)} The per-step reward distributions of the \textbf{b)} ‘red world’, and \textbf{c)} ‘blue world’ states. For a sufficiently low CVaR parameter, \(\tau\), the red world state has a lower (worse) average-reward but a higher (better) reward CVaR than the blue world state.}
\label{fig_rpbp}
\end{figure}

\newpage

The second task is the well-known \emph{inverted pendulum} task, where an agent learns how to optimally balance an inverted pendulum. We chose this task because it provides us with the opportunity to test our algorithms in an environment where: 1) we must use function approximation (given the high-dimensional state-space), and 2) where the optimal CVaR policy and the optimal average-reward policy are the same policy (i.e., the policy that best balances the pendulum will yield a limiting reward distribution with both the optimal average-reward and reward CVaR). This hence allows us to directly compare the performance of our RED CVaR algorithms to that of the Differential algorithms, as well as to gauge how function approximation affects the performance of our algorithms.

In terms of empirical results, Figure \ref{fig_results_1} shows rolling averages of the average-reward and reward CVaR as learning progresses in both tasks when using a regular (risk-neutral) Differential algorithm (to optimize the average-reward) vs. a RED CVaR algorithm (to optimize the reward CVaR). As shown in the figure, in the red-pill blue-pill task, the RED CVaR algorithm successfully enables the agent to learn a policy that prioritizes maximizing the reward CVaR over the average-reward, thereby achieving a sort of \emph{risk-awareness}. In the inverted pendulum task, both methods converge to the same policy, as expected.

\begin{figure}[htbp]
\centerline{\includegraphics[scale=0.485]{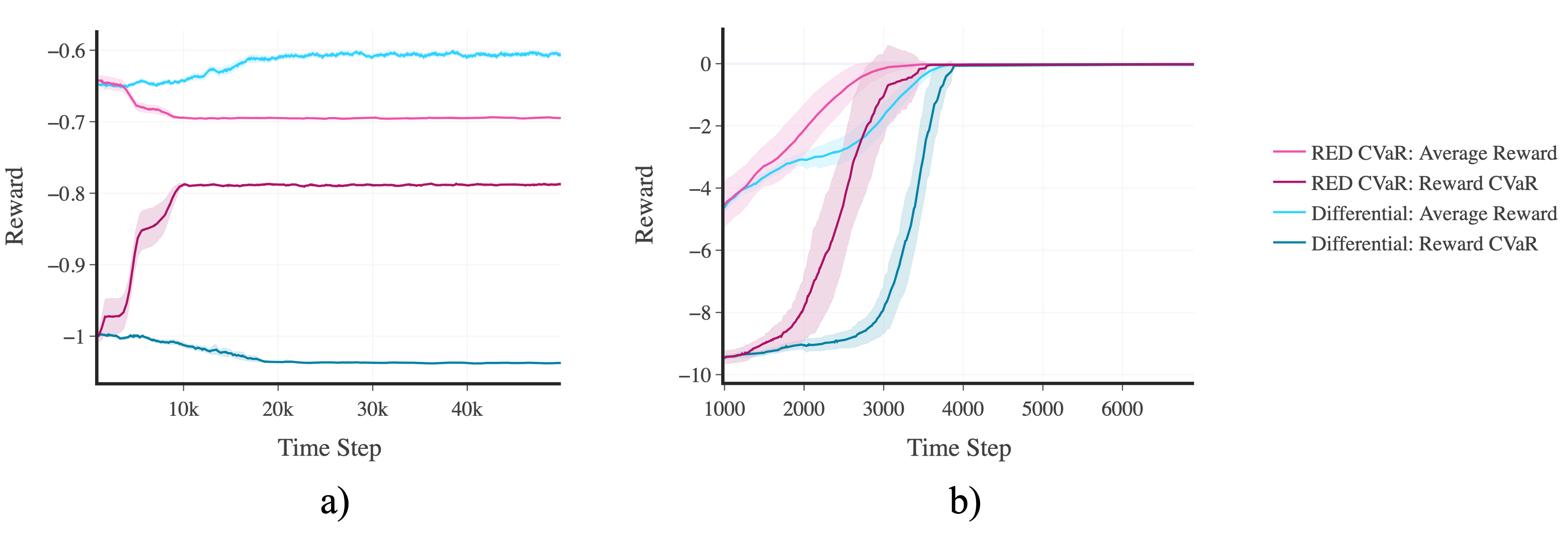}}
\caption{Rolling average-reward and reward CVaR as learning progresses when using a (risk-neutral) Differential algorithm vs. a (risk-aware) RED CVaR algorithm in the \textbf{a)} red-pill blue-pill, and \textbf{b)} inverted pendulum tasks. A solid line denotes the mean average-reward or reward CVaR, and the corresponding shaded region denotes a 95\% confidence interval over \textbf{a)} 50 runs, or \textbf{b)} 10 runs. In both tasks, the RED CVaR algorithms enable the agent to learn a policy that prioritizes maximizing the reward CVaR over the average-reward, thereby achieving a sort of \emph{risk-awareness}.}
\label{fig_results_1}
\end{figure}

\begin{figure}[htbp]
\centerline{\includegraphics[scale=0.49]{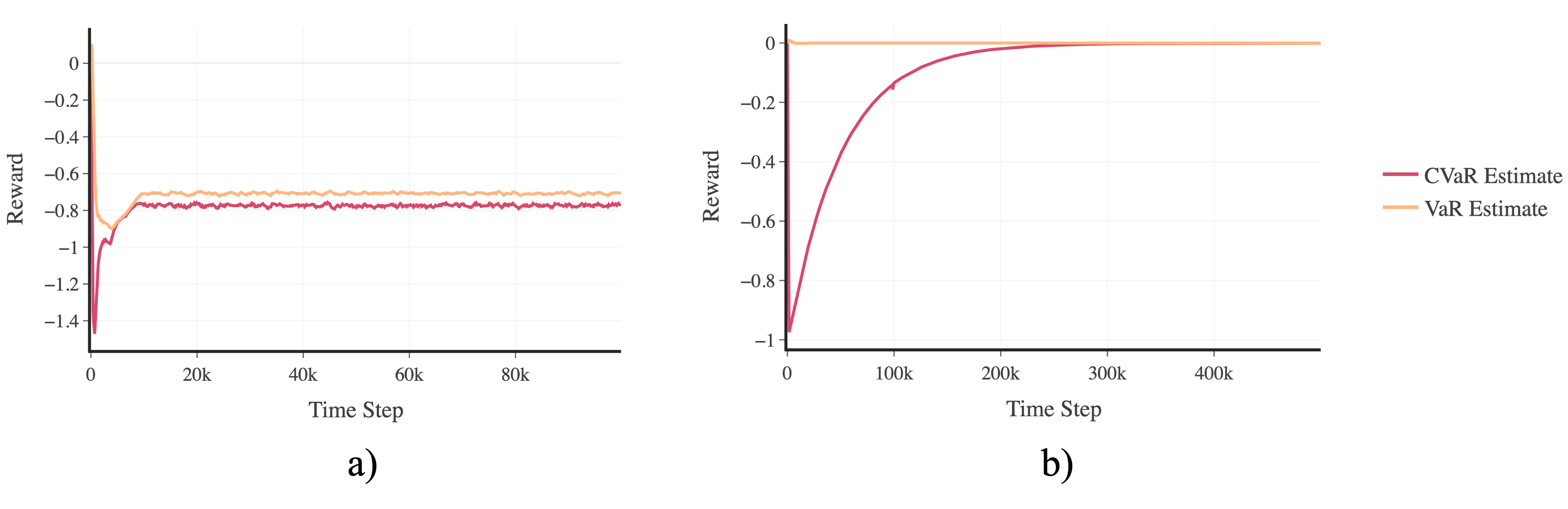}}
\caption{Typical convergence plots of the agent's VaR and CVaR estimates as learning progresses when using the RED CVaR algorithms in the \textbf{a)} red-pill blue-pill, and \textbf{b)} inverted pendulum tasks with an initial guess of 0.0 for both estimates. In both tasks, the estimates converge to the correct VaR and CVaR values, up to an additive constant, thereby yielding the optimal CVaR policy, and hence, the results shown in Figure \ref{fig_results_1}.}
\label{fig_results_2}
\end{figure}

\newpage

Figure \ref{fig_results_2} shows typical convergence plots of the agent's VaR and CVaR estimates as learning progresses in both tasks when using the RED CVaR algorithms. As shown in the figure, the estimates converge in both tasks. In particular, the estimates converge to the correct VaR and CVaR values, up to an additive constant, thereby yielding the optimal CVaR policy, and hence, the results in Figure \ref{fig_results_1}.

The full set of experimental details and results, including additional experiments performed, can be found in Appendix \ref{appendix_experiments}. 

\section{Discussion, Limitations, and Future Work}
\label{discussion}

In this work, we introduced \emph{reward-extended differential} (or \emph{RED}) reinforcement learning: a novel reinforcement learning framework that can be used to solve various learning objectives, or \emph{subtasks}, simultaneously in the average-reward setting. We introduced a family of RED RL algorithms for prediction and control, and then showcased how these algorithms could be utilized to effectively and efficiently tackle the CVaR optimization problem. More specifically, we were able to use the RED RL framework to derive a set of algorithms that can optimize the CVaR risk measure without using an explicit bi-level optimization scheme or an augmented state-space, thereby alleviating some of the computational challenges and complexities that arise when performing risk-based optimization in the discounted setting. Empirically, we showed that the RED-based CVaR algorithms fared well in both tabular and linear function approximation settings.

More broadly, our work has introduced a theoretically-sound framework that allows for a subtask-driven approach to reinforcement learning, where various learning objectives (or subtasks) are solved simultaneously to help solve a larger, central learning objective. In this work, we showed (both theoretically and empirically) how this framework can be utilized to predict and/or optimize any arbitrary number of subtasks simultaneously in the average-reward setting. Central to this result is the novel concept of the reward-extended TD error, which is utilized in our framework to develop learning rules for the subtasks, and satisfies key theoretical properties that make it possible to solve any given subtask in a fully-online manner by minimizing the regular TD error. Moreover, we built upon existing results from \citet{Wan2021-re} to show the almost sure convergence of tabular algorithms derived from our framework. While we have only begun to grasp the implications of our framework, we have already seen some promising indications in the CVaR case study: the ability to turn explicit bi-level optimization problems into implicit bi-level optimizations that can be solved in a fully-online manner, as well as the potential to turn certain states (that meet certain conditions) into subtasks, thereby reducing the size of the state-space.

Nonetheless, while these results are encouraging, they are subject to a number of limitations. Firstly, by nature of operating in the average-reward setting, we are subject to the somewhat-strict assumptions made about the Markov chain induced by the policy (e.g. unichain or communicating). These assumptions could restrict the applicability of our framework, as they may not always hold in practice. Similarly, our definition for a subtask requires that the associated subtask function be linear or piecewise linear with respect to the subtasks, which may limit the applicability of our framework to simpler subtask functions. Finally, it remains to be seen empirically how our framework performs when dealing with multiple subtasks, when taking on more complex tasks, and/or when utilizing nonlinear function approximation. Importantly, we emphasize that the empirical evaluation performed in this work is limited, and as such, a more comprehensive empirical study is needed to fully gauge the practical implications of the proposed framework and CVaR algorithms.

Future work should look to address these limitations, as well as explore how these promising results can be extended to other domains, beyond the risk-awareness problem. In particular, we believe that the ability to optimize various subtasks simultaneously, as well as the potential to reduce the size of the state-space, by converting certain states to subtasks (where appropriate), could help alleviate significant computational challenges in other areas moving forward. 

\newpage

% Acknowledgments
\section*{Acknowledgments}
We gratefully acknowledge funding from NSERC Discovery Grant \# RGPIN-2021-02760. We thank Margaret P. Chapman for insightful conversations and feedback in the early stages of this work. We thank Noah Sheridan for providing an insightful review of this work. We thank anonymous ICLR reviewer dBpb for their valuable feedback and suggestions during an earlier review process. Finally, we thank the anonymous RLC/RLJ reviewers and area chair for their useful feedback and commentary during the review process.

% References Section:
\bibliographystyle{plainnat}
\bibliography{references}

%%%%%%%%%%%%%%%%%%%%%%%%%%%%%%%%%%%%%%%%%%%%%%%%%%%%%%%%%%%%%%%%%%%%%%%

\newpage
\appendix
\numberwithin{equation}{section}
\numberwithin{figure}{section}
\numberwithin{theorem}{subsection}

\section{RED RL Algorithms}
\label{appendix_RED_algs}

In this appendix, we provide pseudocode for our RED RL algorithms. We first present tabular algorithms, whose convergence proofs are included in Appendix \ref{appendix_proofs}, and then provide equivalent algorithms that utilize function approximation.

\begin{algorithm}
   \caption{RED TD-Learning (Tabular)}
   \label{alg_a_1}
\begin{algorithmic}
    \STATE {\bfseries Input:} the policy \(\pi\) to be evaluated, policy \(B\) to be used, piecewise linear subtask function \(f\) with \(n\) subtasks, \(m\) piecewise segments, piecewise conditions \(r_{j-1} \leq R < r_j\) such that \(f_j\) denotes the \(j\)th segment of \(f\) that satisfies \(r_{j-1} \leq R < r_j\), and constants \(b^{j}_1, b^{j}_2, \ldots, b^{j}_n \; \forall j=1, 2, \ldots, m\)
    \STATE {\bfseries Algorithm parameters:} step size parameters \(\alpha\), \(\alpha_{\bar{r}}\), \(\alpha_{z_1}, \alpha_{z_2}, \ldots, \alpha_{z_n}\)
    \STATE Initialize \(V(s) \: \forall s; \bar{R}\) arbitrarily (e.g. to zero)
    \STATE Initialize subtasks \(Z_1, Z_2, \ldots, Z_n\) arbitrarily  (e.g. to zero)
    \STATE Obtain initial \(S\)
    \WHILE{still time to train}
        \STATE \(A \leftarrow\) action given by \(B\) for \(S\)
        \STATE Take action \(A\), observe \(R, S'\)
        \STATE \(\tilde{R} = f(R, Z_1, Z_2, \ldots, Z_n)\)
        \STATE \(\delta = \tilde{R} - \bar{R} + V(S') - V(S)\)
        \STATE \(\rho = \pi(A \mid S) / B(A \mid S)\)
        \STATE \(V(S) = V(S) + \alpha\rho\delta\)
        \STATE \(\bar{R} = \bar{R} + \alpha_{\bar{r}}\rho\delta\)
        \FOR{$i = 1, 2, \ldots, n$}
            \STATE $\beta_{i} = \sum_{j=1}^{m}(-1/b^{j}_i)(f_j - \bar{R} - \delta) \mathds{1}\{r_{j-1} \leq R < r_j\}$ (see Remark \ref{remark_1} for non-piecewise \(f\))
            \STATE $Z_i = Z_i + \alpha_{z_i}\rho\beta_{i}$
        \ENDFOR
        \STATE \(S = S'\)
    \ENDWHILE
    \STATE return V
\end{algorithmic}
\end{algorithm}

\begin{algorithm}
   \caption{RED Q-Learning (Tabular)}
   \label{alg_a_2}
\begin{algorithmic}
    \STATE {\bfseries Input:} the policy \(\pi\) to be used (e.g., \(\varepsilon\)-greedy), piecewise linear subtask function \(f\) with \(n\) subtasks, \(m\) piecewise segments, piecewise conditions \(r_{j-1} \leq R < r_j\) such that \(f_j\) denotes the \(j\)th segment of \(f\) that satisfies \(r_{j-1} \leq R < r_j\), and constants \(b^{j}_1, b^{j}_2, \ldots, b^{j}_n \; \forall j=1, 2, \ldots, m\)
    \STATE {\bfseries Algorithm parameters:} step size parameters \(\alpha\), \(\alpha_{\bar{r}}\), \(\alpha_{z_1}, \alpha_{z_2}, \ldots, \alpha_{z_n}\)
    \STATE Initialize \(Q(s, a) \: \forall s, a; \bar{R}\) arbitrarily (e.g. to zero)
    \STATE Initialize subtasks \(Z_1, Z_2, \ldots, Z_n\) arbitrarily  (e.g. to zero)
    \STATE Obtain initial \(S\)
    \WHILE{still time to train}
        \STATE \(A \leftarrow\) action given by \(\pi\) for \(S\)
        \STATE Take action \(A\), observe \(R, S'\)
        \STATE \(\tilde{R} = f(R, Z_1, Z_2, \ldots, Z_n)\)
        \STATE \(\delta = \tilde{R} - \bar{R} + \max_a Q(S', a) - Q(S, A)\)
        \STATE \(Q(S, A) = Q(S, A) + \alpha\delta\)
        \STATE \(\bar{R} = \bar{R} + \alpha_{\bar{r}} \delta\)
        \FOR{$i = 1, 2, \ldots, n$}
            \STATE $\beta_{i} = \sum_{j=1}^{m}(-1/b^{j}_i)(f_j - \bar{R} - \delta) \mathds{1}\{r_{j-1} \leq R < r_j\}$ (see Remark \ref{remark_1} for non-piecewise \(f\))
            \STATE $Z_i = Z_i + \alpha_{z_i}\beta_{i}$
        \ENDFOR
        \STATE \(S = S'\)
    \ENDWHILE
    \STATE return Q
\end{algorithmic}
\end{algorithm}

\newpage

\begin{algorithm}
   \caption{RED TD-Learning (Function Approximation)}
   \label{alg_a_3}
\begin{algorithmic}
    \STATE {\bfseries Input:} the policy \(\pi\) to be evaluated, policy \(B\) to be used, a differentiable state-value function parameterization: \(\hat{v}(s, \boldsymbol{w})\), piecewise linear subtask function \(f\) with \(n\) subtasks, \(m\) piecewise segments, piecewise conditions \(r_{j-1} \leq R < r_j\) such that \(f_j\) denotes the \(j\)th segment of \(f\) that satisfies \(r_{j-1} \leq R < r_j\), and constants \(b^{j}_1, b^{j}_2, \ldots, b^{j}_n \; \forall j=1, 2, \ldots, m\)
    \STATE {\bfseries Algorithm parameters:} step size parameters \(\alpha\), \(\alpha_{\bar{r}}\), \(\alpha_{z_1}, \alpha_{z_2}, \ldots, \alpha_{z_n}\)
    \STATE Initialize state-value weights \(\boldsymbol{w} \in \mathbb{R}^{d}\) arbitrarily (e.g. to \(\boldsymbol{0}\))
    \STATE Initialize subtasks \(Z_1, Z_2, \ldots, Z_n\) arbitrarily  (e.g. to zero)
    \STATE Obtain initial \(S\)
    \WHILE{still time to train}
        \STATE \(A \leftarrow\) action given by \(B\) for \(S\)
        \STATE Take action \(A\), observe \(R, S'\)
        \STATE \(\tilde{R} = f(R, Z_1, Z_2, \ldots, Z_n)\)
        \STATE \(\delta = \tilde{R} - \bar{R} + \hat{v}(S', \boldsymbol{w}) - \hat{v}(S, \boldsymbol{w})\)
        \STATE \(\rho = \pi(A \mid S) / B(A \mid S)\)
        \STATE \(\boldsymbol{w} = \boldsymbol{w} + \alpha\rho\delta\nabla\hat{v}(S, \boldsymbol{w})\)
        \STATE \(\bar{R} = \bar{R} + \alpha_{\bar{r}}\rho\delta\)
        \FOR{$i = 1, 2, \ldots, n$}
            \STATE $\beta_{i} = \sum_{j=1}^{m}(-1/b^{j}_i)(f_j - \bar{R} - \delta) \mathds{1}\{r_{j-1} \leq R < r_j\}$ (see Remark \ref{remark_1} for non-piecewise \(f\))
            \STATE $Z_i = Z_i + \alpha_{z_i}\rho\beta_{i}$
        \ENDFOR
        \STATE \(S = S'\)
    \ENDWHILE
    \STATE return \(\boldsymbol{w}\)
\end{algorithmic}
\end{algorithm}

\begin{algorithm}
   \caption{RED Q-Learning (Function Approximation)}
   \label{alg_a_4}
\begin{algorithmic}
    \STATE {\bfseries Input:} the policy \(\pi\) to be used (e.g., \(\varepsilon\)-greedy), a differentiable state-action value function parameterization: \(\hat{q}(s, a, \boldsymbol{w})\), piecewise linear subtask function \(f\) with \(n\) subtasks, \(m\) piecewise segments, piecewise conditions \(r_{j-1} \leq R < r_j\) such that \(f_j\) denotes the \(j\)th segment of \(f\) that satisfies \(r_{j-1} \leq R < r_j\), and constants \(b^{j}_1, b^{j}_2, \ldots, b^{j}_n \; \forall j=1, 2, \ldots, m\)
    \STATE {\bfseries Algorithm parameters:} step size parameters \(\alpha\), \(\alpha_{\bar{r}}\), \(\alpha_{z_1}, \alpha_{z_2}, \ldots, \alpha_{z_n}\)
    \STATE Initialize state-action value weights \(\boldsymbol{w} \in \mathbb{R}^{d}\) arbitrarily (e.g. to \(\boldsymbol{0}\))
    \STATE Initialize subtasks \(Z_1, Z_2, \ldots, Z_n\) arbitrarily  (e.g. to zero)
    \STATE Obtain initial \(S\)
    \WHILE{still time to train}
        \STATE \(A \leftarrow\) action given by \(\pi\) for \(S\)
        \STATE Take action \(A\), observe \(R, S'\)
        \STATE \(\tilde{R} = f(R, Z_1, Z_2, \ldots, Z_n)\)
        \STATE \(\delta = \tilde{R} - \bar{R} +\max_a \hat{q}(S', a, \boldsymbol{w}) - \hat{q}(S, A, \boldsymbol{w})\)
        \STATE \(\boldsymbol{w} = \boldsymbol{w} + \alpha\delta\nabla\hat{q}(S, A, \boldsymbol{w})\)
        \STATE \(\bar{R} = \bar{R} + \alpha_{\bar{r}} \delta\)
        \FOR{$i = 1, 2, \ldots, n$}
            \STATE $\beta_{i} = \sum_{j=1}^{m}(-1/b^{j}_i)(f_j - \bar{R} - \delta) \mathds{1}\{r_{j-1} \leq R < r_j\}$ (see Remark \ref{remark_1} for non-piecewise \(f\))
            \STATE $Z_i = Z_i + \alpha_{z_i}\beta_{i}$
        \ENDFOR
        \STATE \(S = S'\)
    \ENDWHILE
    \STATE return \(\boldsymbol{w}\)
\end{algorithmic}
\end{algorithm}

%%%%%%%%%%%%%%%%%%%%%%%%%%%%%%%%%%%%%%%%%%%%%%%%%%%%%%%%%%%%%%%%%%%%%%%

\newpage
\section{Convergence Proofs}
\label{appendix_proofs}

In this appendix, we present the full convergence proofs for the tabular RED TD-learning and tabular RED Q-learning algorithms. Our general strategy is as follows: we first show that the results from \citet{Wan2021-re}, which show the almost sure convergence of the value function and average-reward estimates of differential algorithms, are applicable to our algorithms. We then build upon these results to show that the subtask estimates of our algorithms converge as well.

For consistency, we adopt similar notation as \citet{Wan2021-re} for our proofs:
\begin{itemize}\itemsep0mm
    \item For a given vector \(x\), let \(\sum x\) denote the sum of all elements in \(x\), such that \(\sum x \doteq \sum_{i} x(i)\).
    \item Let \(\bar{r}_*\) denote the optimal average-reward.
    \item Let \(z_{i_*}\) denote the corresponding optimal subtask value for subtask \(z_i \in \mathcal{Z}\).
\end{itemize}

\subsection{Convergence Proof for the Tabular RED TD-learning Algorithm} 
\label{proof_red_td}

In this section, we present the proof for the convergence of the value function, average-reward, and subtask estimates of the RED TD-learning algorithm. Similar to what was done in \citet{Wan2021-re}, we will begin by considering a general algorithm, called \emph{General RED TD}. We will first define General RED TD, then show how the RED TD-learning algorithm is a special case of this algorithm. We will then provide the necessary assumptions, state the convergence theorem of General RED TD, and then provide a proof for the theorem, where we show that the value function, average-reward, and subtask estimates converge, thereby showing that the RED TD-learning algorithm converges. We begin by introducing the General RED TD algorithm: 

Consider an MDP \(\mathcal{M} \doteq \langle\mathcal{S}, \mathcal{A}, \mathcal{R}, p \rangle\), a behavior policy, \(B\), and a target policy, \(\pi\). Given a state \(s \in \mathcal{S}\) and discrete step \(n \geq 0\), let \(A_n(s) \sim B(\cdot \mid s)\) denote the action selected using the behavior policy, let \(R_n(s, A_n(s)) \in \mathcal{R}\) denote a sample of the resulting reward, and let \(S'_n(s, A_n(s)) \sim p(\cdot, \cdot \mid s, a)\) denote a sample of the resulting state. Let \(\{Y_n\}\) be a set-valued process taking values in the set of nonempty subsets of \(\mathcal{S}\), such that: \(Y_n = \{s: s\) component of the \(\vert \mathcal{S} \vert\)-sized table of state-value estimates, \(V\), that was updated at step \(n\}\). Let \(\nu(n, s) \doteq \sum_{j=0}^n I\{s \in Y_j\}\), where \(I\) is the indicator function, such that \(\nu(n, s)\) represents the number of times that \(V(s)\) was updated up until step \(n\). 

Now, let \(f\) be a valid subtask function (see Definition \ref{definition_1}), such that \(\tilde{R}_n(s, A_n(s)) \doteq f(R_n(s, A_n(s)), Z_{1, n}, Z_{2, n}, \ldots, Z_{k, n})\) for \(k\) subtasks \(\in \mathcal{Z}\), where \(\tilde{R}_n(s, A_n(s))\) is the extended reward, \(\mathcal{Z}\) is the set of subtasks, and \(Z_{i, n}\) denotes the estimate of subtask \(z_i \in \mathcal{Z}\) at step \(n\). Consider an MDP with the extended reward: \(\mathcal{\tilde{M}} \doteq \langle\mathcal{S}, \mathcal{A}, \mathcal{\tilde{R}}, \tilde{p} \rangle\), such that \(\tilde{R}_n(s, A_n(s)) \in \mathcal{\tilde{R}}\). The update rules of General RED TD for this MDP are as follows, \(\forall n \geq 0\):
\begin{align}
    V_{n+1}(s) & \doteq V_n(s) + \alpha_{\nu(n, s)} \rho_n(s) \delta_n(s) I\{s \in Y_n\}, \quad \forall s \in \mathcal{S}, \label{async_td_value_update_eqn}\\
    \bar{R}_{n+1} & \doteq \bar{R}_n + \sum_s \alpha_{\bar{r}, \nu(n, s)} \rho_n(s) \delta_n(s) I\{s \in Y_n\} \label{async_td_r_bar_update_eqn},\\
    Z_{i, n+1} & \doteq Z_{i, n} + \sum_{s} \alpha_{z_i, \nu(n, s)} \rho_n(s)\beta_{i, n}(s) I\{s \in Y_n\}, \quad \forall z_i \in \mathcal{Z}, \label{async_td_z_update_eqn}
\end{align}
where,
\begin{align}
\begin{split}
    \delta_n(s) & \doteq \tilde{R}_n(s, A_n(s)) - \bar{R}_n + V_n(S_n'(s, A_n(s))) - V_n(s)\\
    & = f(R_n(s, A_n(s)), Z_{1, n}, Z_{2, n}, \ldots, Z_{k, n}) - \bar{R}_n + V_n(S_n'(s, A_n(s))) - V_n(s),
\end{split}\label{async_td_td_error_eqn}
\end{align}
and,
\begin{align}
    \beta_{i, n}(s) & \doteq \phi_{i, n}(s) -  Z_{i, n}, \quad \forall z_i \in \mathcal{Z}.\label{async_td_red_td_error_eqn}
\end{align}

\newpage

Here, \(\rho_n(s) \doteq \pi(A_n(s) \mid s) \, / \, B(A_n(s) \mid s)\) denotes the importance sampling ratio (with behavior policy, \(B\)), \(\bar{R}_n\) denotes the estimate of the average-reward (see Equation \eqref{eq_avg_reward_2}), \(\delta_n(s)\) denotes the TD error, \(\phi_{i, n}(s)\) denotes the (potentially-piecewise) subtask target, as defined in Section \ref{reward_extended_td}, and \(\alpha_{\nu(n, s)}\), \(\alpha_{\bar{r}, \nu(n, s)}\), and \(\alpha_{z_i, \nu(n, s)}\) denote the step sizes at time step \(n\) for state \(s\).

We now show that the RED TD-learning algorithm is a special case of the General RED TD algorithm. Consider a sequence of experience from our MDP, \(\mathcal{\tilde{M}}\): \(S_t, A_t(S_t), \tilde{R}_{t+1}, S_{t+1}, \ldots\)\, . Now recall the set-valued process \(\{Y_n\}\). If we let \(n\) = time step \(t\), we have: 
\begin{align*}
Y_t(s) = 
\begin{cases}
    1, s = S_t,\\
    0, \text{ otherwise,}
\end{cases}
\end{align*}
as well as \(S'_n(S_t, A_t(S_t)) = S_{t+1}\), \(R_n(S_t, A_t) = R_{t+1}\), and \(\tilde{R}_n(S_t, A_t(S_t)) = \tilde{R}_{t+1}\).\\ 

Hence, update rules \eqref{async_td_value_update_eqn}, \eqref{async_td_r_bar_update_eqn}, \eqref{async_td_z_update_eqn}, \eqref{async_td_td_error_eqn}, and \eqref{async_td_red_td_error_eqn} become:
\begin{align}
    V_{t+1}(S_t) & \doteq V_t (S_t) + \alpha_{\nu(t, S_t)}\rho_t(S_t) \delta_t \text{\  and\ } V_{t+1}(s) \doteq V_t (s), \forall s \neq S_t, \\
    \bar{R}_{t+1} & \doteq \bar{R}_t + \alpha_{\bar{r}, \nu(t, S_t)} \rho_t(S_t) \delta_t,\\
    Z_{i,t+1} & \doteq Z_{i, t} + \alpha_{z_i, \nu(t, S_t)} \rho_t(S_t) \beta_{i, t}, \quad \forall z_i \in \mathcal{Z},\\
    \begin{split}
    \delta_t & \doteq \tilde{R}_{t+1} - \bar{R}_t + V_t (S_{t+1}) - V_t (S_t),\\
    & = f(R_{t+1}, Z_{1, t}, Z_{2, t}, \ldots, Z_{k, t}) - \bar{R}_t + V_t (S_{t+1}) - V_t (S_t),
    \end{split}\\
    \beta_{i, t} & \doteq \phi_{i, t} - Z_{i, t}, \quad \forall z_i \in \mathcal{Z},
\end{align}

which are RED TD-learning's update rules with \(\alpha_{\nu(t, S_t)}\), \(\alpha_{\bar{r}, \nu(t, S_t)}\), and \(\alpha_{z_i, \nu(t, S_t)}\) denoting the step sizes at time \(t\).\\

We now specify the assumptions on General RED TD that are needed to ensure convergence:\\ 

\begin{assumption}[Unichain Assumption]\label{assumption_unichain}
The Markov chain induced by the policy is unichain. That is, the induced Markov chain consists of a single recurrent class and a potentially-empty set of transient states.\\
\end{assumption}

\begin{assumption}[Coverage Assumption] \label{assumption_coverage}
\(B(a \mid s) > 0\) if \(\pi(a \mid s) > 0\) for all \(s \in \mathcal{S}\), \(a \in \mathcal{A}\).\\ 
\end{assumption}

\begin{assumption}[Step Size Assumption] \label{assumption_step_size} \(\alpha_n > 0\), \(\sum_{n = 0}^\infty \alpha_n = \infty\), \(\sum_{n = 0}^\infty \alpha_n^2 < \infty\).\\
\end{assumption}

\begin{assumption}[Asynchronous Step Size Assumption 1] \label{assumption_async_step_size_1}
Let \([\cdot]\) denote the integer part of \((\cdot)\). For \(x \in (0, 1)\), 
\begin{align*}
    \sup_i \frac{\alpha_{[xi]}}{\alpha_i} < \infty
\end{align*}
and 
\begin{align*}
    \frac{\sum_{j=0}^{[yi]} \alpha_j}{\sum_{j=0}^i \alpha_j} \to 1
\end{align*} 
uniformly in \(y \in [x, 1]\).\\
\end{assumption}

\newpage

\begin{assumption}[Asynchronous Step Size Assumption 2] \label{assumption_async_step_size_td_2}
There exists \(\Delta > 0\) such that 
\begin{align*}
    \liminf_{n \to \infty} \frac{\nu(n, s)}{n+1} \geq \Delta,
\end{align*}
a.s., for all \(s \in \mathcal{S}\).\\

Furthermore, for all \(x > 0\), and
\begin{align*}
    N(n, x) = \min \Bigg \{m \geq n: \sum_{i = n+1}^m \alpha_i \geq x \Bigg\},
\end{align*}
the limit 
\begin{align*}
    \lim_{n \to \infty} \frac{\sum_{i = \nu(n, s)}^{\nu(N(n, x), s)} \alpha_i}{\sum_{i = \nu(n, s')}^{\nu(N(n, x), s')} \alpha_i}
\end{align*} 

exists a.s. for all \(s, s'\).\\
\end{assumption}

\begin{assumption}[Average-Reward Step Size Assumption] \label{assumption_r_step_size}
The average-reward step size, \(\alpha_{\bar{r}, n}\), can be written as a constant fraction of the value function step size, \(\alpha_{n}\), such that \(\alpha_{\bar{r}, n} \doteq \eta_{_r}\alpha_{n}\), where \(\eta_{_r}\) is a positive scalar.\\
\end{assumption}

\begin{assumption}[Subtask Function Assumption] \label{assumption_subtask_function}
The subtask function, \(f\), is 1) linear or piecewise linear, and 2) is invertible with respect to each input given all other inputs.\\
\end{assumption}

\begin{assumption}[Subtask Independence Assumption] \label{assumption_subtask_independence}
Each subtask, \(z_i \in \mathcal{Z}\), in \(f\) is independent of the states and actions, and hence independent of the observed reward, \(R_n\), such that \(\tilde{p}(s', f(r, z_1, \ldots, z_k) | s, a) = p(s', r | s, a)\), and \(\mathbb{E}[f_j(R_n, Z_{1, n}, Z_{2, n}, \ldots, Z_{k, n})] = f_j(\mathbb{E}[R_n],Z_{1, n}, Z_{2, n}, \ldots, Z_{k, n})\), where \(f_j\) denotes the \(j\)th piecewise segment of \(f\), and \(\mathbb{E}\) denotes any expectation taken with respect to the states and actions.\\
\end{assumption}

\begin{assumption}[Subtask Uniqueness Assumption] \label{assumption_subtask_unique}
If the Bellman equation associated with \(f(R_n,z_1, z_2, \ldots, z_k)\) admits a unique solution, then that unique solution corresponds to a unique combination of subtasks, \(z_1, z_2, \ldots, z_k\).\\
\end{assumption}

\begin{assumption}[Subtask Step Size Assumptions] \label{assumption_subtask_stepsize} If the subtask function is strictly (i.e., non-piecewise) linear, the subtask step sizes, \(\{\alpha_{z_i, n}\}_{i=1}^{k}\), can be written as constant, subtask-specific fractions of the value function step size, \(\alpha_{n}\), such that \(\alpha_{z_i, n} \doteq \eta_{z_i}\alpha_{n} \; \forall z_i \in \mathcal{Z}\), where \(\{\eta_{z_i}\}_{i=1}^{k}\) are positive scalars. Alternatively, if the subtask function is piecewise linear with at least two piecewise segments, the subtask step sizes satisfy the following properties: 
\(\alpha_{z_1, n} / \alpha_{n} \to 0\), \(\alpha_{z_1, n} / \alpha_{\bar{r}, n} \to 0\), \(\{\alpha_{z_i, n} / \alpha_{z_{i - 1}, n} \to 0\}_{i=2}^{k}\), and \(\sum_{n = 0}^\infty (\alpha_{n}^2 + \alpha_{\bar{r}, n}^2 + \alpha_{z_1, n}^2 + \alpha_{z_2, n}^2 + \ldots + \alpha_{z_k, n}^2)< \infty\).
\end{assumption}

We refer the reader to \citet{Wan2021-re} for an in-depth discussion on Assumptions \ref{assumption_unichain} -- \ref{assumption_r_step_size}. Note that Assumptions \ref{assumption_step_size} -- \ref{assumption_async_step_size_td_2} apply to the value function, average-reward, and subtask step sizes. Assumptions~\ref{assumption_subtask_function} -- \ref{assumption_subtask_stepsize} outline the subtask-related requirements needed to show convergence. In particular, Assumption~\ref{assumption_subtask_function} ensures that we can explicitly write out the update \eqref{async_td_z_update_eqn}, and Assumption~\ref{assumption_subtask_independence} ensures that we do not break the Markov property in the process (i.e., we preserve the Markov property by ensuring that the subtasks are independent of the states and actions, and thereby also independent of the observed reward). Assumption \ref{assumption_subtask_unique} ensures that only a unique combination of subtasks yields the solution to a given Bellman equation. Finally, Assumption~\ref{assumption_subtask_stepsize} outlines additional step size requirements needed to show convergence.\\

Having stated the necessary assumptions, we next point out that it is easy to verify that under Assumption~\ref{assumption_unichain}, the following system of equations:
\begin{align}
    \begin{split}
    v_{\pi}(s) & = \sum_a \pi(a \mid s) \sum_{s', \tilde{r}} \tilde{p}(s', \tilde{r} \mid s, a) (\tilde{r} - \bar{r}_{\pi} + v_{\pi}(s')), \quad \forall s\in\mathcal{S}\\
    & =  \sum_a \pi(a \mid s) \sum_{s', r} p(s', r \mid s, a) (f(r, z_{1,\pi}, z_{2,\pi}, \ldots, z_{k,\pi}) - \bar{r}_{\pi} + v_{\pi}(s')), \quad \forall s\in\mathcal{S},\\
    \end{split}\label{td_system_of_eqns}
\end{align}
and,
\begin{align}
    \bar{r}_{\pi} - \bar{R}_0 & = \eta_{_r} \left(\sum v_{\pi} - \sum V_0 \right),  \label{td_r_bar_sys_eqns}\\
    z_{i, \pi} - Z_{i,0} & = \eta_{_i} \left(\sum v_{\pi} - \sum V_0\right), \text{\ for all } z_i \in \mathcal{Z},  \label{td_z_sys_eqns}
\end{align}

has a unique solution of \(v_{\pi}\), where \(\bar{r}_{\pi}\) denotes the average-reward induced by following a given policy, \(\pi\), and \(z_{i, \pi}\) denotes the corresponding subtask value for subtask \(z_i \in \mathcal{Z}\). Denote this unique solution of \(v_{\pi}\) as \(v_\infty\).\\

We are now ready to state the convergence theorem:\\

\begin{theorem}[Convergence of General RED TD]\label{theorem_convergence_of_red_td_update}

If Assumptions~\ref{assumption_unichain} -- \ref{assumption_subtask_stepsize} hold, then General RED TD (Equations \eqref{async_td_value_update_eqn} -- \eqref{async_td_red_td_error_eqn}) converges a.s., \(\bar{R}_n\) to \(\bar{r}_{\pi}\), \(Z_{i, n}\) to \(z_{i, \pi} \; \forall z_i \in \mathcal{Z}\), and \(V_n\) to \(v_\infty\).
\end{theorem}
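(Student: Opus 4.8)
The plan is to reduce the $(V_n,\bar R_n)$ part of General RED TD to an instance of the Differential TD-learning of \citet{Wan2021-re} and then peel off the subtask iterates, splitting the argument along the two regimes of Assumption~\ref{assumption_subtask_stepsize}.

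\textbf{Strictly (non-piecewise) linear $f$.} Write $f(r,z_1,\dots,z_k)=b_r r+b_0+\sum_i b_i z_i$. By Remark~\ref{remark_1}, $\beta_{i,n}(s)=-(1/b_i)\delta_n(s)$, and since $\alpha_{\bar r,m}=\eta_{_r}\alpha_m$ and $\alpha_{z_i,m}=\eta_{z_i}\alpha_m$, every increment of $\bar R_n$ and of each $Z_{i,n}$ is a fixed scalar multiple of the common quantity $S_n \doteq \sum_s \alpha_{\nu(n,s)}\rho_n(s)\delta_n(s)\,I\{s\in Y_n\}$. Hence $\tfrac{b_i}{\eta_{z_i}}Z_{i,n}+\tfrac1{\eta_{_r}}\bar R_n$ is conserved exactly for all $n$, and the aggregate $\bar R'_n \doteq \bar R_n-\sum_i b_i Z_{i,n}$ satisfies $\bar R'_{n+1}=\bar R'_n+\eta' S_n$ with $\eta'\doteq \eta_{_r}+\sum_i\eta_{z_i}>0$, while $\delta_n(s)=\big(b_r R_n(s,A_n(s))+b_0\big)-\bar R'_n+V_n(S'_n(s,A_n(s)))-V_n(s)$. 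Thus $(V_n,\bar R'_n)$ is \emph{literally} Differential TD-learning on the MDP with bounded reward $R'_n\doteq b_r R_n+b_0$ and average-reward step-size fraction $\eta'$, so the convergence result of \citet{Wan2021-re} (whose hypotheses are the specializations of Assumptions~\ref{assumption_unichain}--\ref{assumption_r_step_size}) gives $V_n\to v_\infty$ and $\bar R'_n$ to a finite limit, a.s. The conservation identities then force $\bar R_n$ and each $Z_{i,n}$ to converge, and substituting these limits into \eqref{td_system_of_eqns}--\eqref{td_z_sys_eqns} verifies they equal $\bar r_{\pi}$, $z_{i,\pi}$, and $v_\infty$.

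\textbf{Piecewise linear $f$ with $m\ge 2$ segments.} Here Assumption~\ref{assumption_subtask_stepsize} forces the strict separation $\alpha_{z_k,n}\ll\cdots\ll\alpha_{z_1,n}\ll\alpha_{\bar r,n}=\eta_{_r}\alpha_n$, and we run a $(k{+}1)$-timescale stochastic-approximation argument in the style of \citet{Wan2021-re}. On the fastest timescale the subtask vector $Z_n\doteq(Z_{1,n},\dots,Z_{k,n})$ is quasi-static; freezing $Z_n\equiv z$ makes $f(\cdot,z)$ a fixed bounded reward, so $(V_n,\bar R_n)$ is again Differential TD and, by \citet{Wan2021-re}, is driven to a Bellman solution $v^\star(z)$ for reward $f(\cdot,z)$ together with $\bar r^\star(z)=\mathbb{E}_\mu[f(R,z)]$; the standard two-timescale step upgrades this to $\|(V_n,\bar R_n)-(v^\star(Z_n),\bar r^\star(Z_n))\|\to 0$ a.s. We then climb the remaining timescales one subtask at a time: when $Z_{i,n}$ is analyzed, the faster quantities $(V_n,\bar R_n,Z_{1,n},\dots,Z_{i-1,n})$ are already equilibrated (as functions of $Z_{i,n},\dots,Z_{k,n}$) and $(Z_{i+1,n},\dots,Z_{k,n})$ are quasi-static, so $Z_{i,n}$ tracks the ODE $\dot z_i=\mathbb{E}_\pi[\beta_i]=\mathbb{E}_\pi[\phi_i]-z_i$ restricted to this equilibrium manifold. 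Because $\mathbb{E}_\pi[\delta_n]\to 0$ on the faster timescales, Theorem~\ref{theorem_4_1} identifies the equilibrium of this ODE with the subtask-$i$ equation, which by Assumptions~\ref{assumption_subtask_function}, \ref{assumption_subtask_independence}, and~\ref{assumption_subtask_unique} has a unique solution, and the $-z_i$ term dominates the weak feedback through the equilibrated faster variables, making it globally asymptotically stable. Iterating up to the slowest timescale $Z_{k,n}$ and propagating the equilibrated values back down yields $\bar R_n\to\bar r_{\pi}$, $Z_{i,n}\to z_{i,\pi}$ for every $i$, and $V_n\to v_\infty$.

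The main obstacle is the piecewise case: one must check, \emph{uniformly} over the frozen slower subtasks (and over the equilibrated faster variables, which feed back through whichever piecewise segment is active), that each intermediate $Z_i$-ODE inherits a \emph{unique, globally asymptotically stable} equilibrium depending Lipschitz-continuously on the slower subtasks — the clean contraction of Remark~\ref{remark_1} is no longer available, so Assumption~\ref{assumption_subtask_unique} together with the invertibility in Definition~\ref{definition_1} must be used to exclude spurious equilibria. Secondary technical points are reconciling the asynchronous-update bookkeeping of \citet{Wan2021-re} (Assumptions~\ref{assumption_async_step_size_1}--\ref{assumption_async_step_size_td_2}) with the added timescale separation — in particular, tracking the additive-constant degeneracy of the Bellman equation through the multi-timescale limit — and establishing a.s.\ boundedness of the subtask iterates (which follows from boundedness of $\mathcal{R}$, the $\mathcal{Z}_i$, and the value estimates, but should be stated explicitly).
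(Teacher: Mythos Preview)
Your proposal is correct and follows essentially the same strategy as the paper: in the linear case you reduce $(V_n,\bar R_n,Z_{i,n})$ to an instance of Differential TD from \citet{Wan2021-re} via conservation relations (the paper does this by expressing $\bar R_n$ and each $Z_{i,n}$ as affine functions of $\sum V_n$ rather than through your aggregate $\bar R'_n$, but the two reductions are equivalent), and in the piecewise case you run the same multi-timescale argument, with the paper handling your flagged ``main obstacle'' by exhibiting the explicit Lyapunov function $L(z)=\tfrac12(z-z_{i,\pi})^2$ for each subtask ODE and invoking Borkar's boundedness criterion for the a.s.\ boundedness of the iterates.
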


We prove this theorem in Sections \ref{proof_v_linear} and \ref{proof_v_piecewise}. To do so, we first show that General RED TD is of the same form as \emph{General Differential TD} from \citet{Wan2021-re}, thereby allowing us to apply their convergence results for the value function and average-reward estimates of General Differential TD to General RED TD. We then build upon these results, using similar techniques as \citet{Wan2021-re}, to show that the subtask estimates converge as well.

\subsubsection{Proof of Theorem \ref{theorem_convergence_of_red_td_update} (for \emph{Linear} Subtask Functions)}
\label{proof_v_linear}

We first provide the proof for \emph{linear} subtask functions, where the reward-extended TD error can be expressed as a constant, subtask-specific fraction of the regular TD error, such that \(\beta_{i,n}(s) = (-1/b_i)\delta_n(s)\). We consider the \emph{piecewise linear} case in Section \ref{proof_v_piecewise}.\\

\textbf{Convergence of the value function and average-reward estimates:}

Consider the increment to \(\bar{R}_n\) at each step. Given Assumption \ref{assumption_r_step_size}, we can see from Equation \eqref{async_td_r_bar_update_eqn} that the increment is \(\eta_{_r}\) times the increment to \(V_n\). As such, as was done in \citet{Wan2021-re}, we can write the cumulative increment as follows:
\begin{align}
    \bar{R}_n - \bar{R}_0 & = \eta_{_r} \sum_{j = 0}^{n-1} \sum_{s} \alpha_{\nu(j, s)} \rho_j(s) \delta_j (s) I\{s \in Y_j\} \nonumber\\\nonumber\\
    & = \eta_{_r} \left (\sum V_{n} - \sum V_0 \right) \nonumber\\\nonumber\\
    \implies \bar{R}_n & = \eta_{_r} \sum V_n - \eta_{_r} \sum V_0 + \bar{R}_0 = \eta_{_r} \sum V_n - c_r, \label{td_r_bar_incremental_1} \\\nonumber\\
    \text{ where } c_r & \doteq \eta_{_r} \sum V_0 - \bar{R}_0. \label{td_r_bar_incremental_2}
\end{align}

Similarly, consider the increment to \(Z_{i, n}\) (for an arbitrary subtask \(z_i \in \mathcal{Z}\)) at each step. As per Remark \ref{remark_1}, and given Assumption \ref{assumption_subtask_stepsize}, we can write the increment in Equation \eqref{async_td_z_update_eqn} as some constant, subtask-specific fraction of the increment to \(V_n\). Consequently, we can write the cumulative increment as follows:
\begin{align}
    Z_{i, n} - Z_{i, 0} &= \eta_{z_i} \sum_{j = 0}^{n-1} \sum_{s} \alpha_{\nu(j, s)} \rho_j(s) \beta_{i, j}(s) I\{s \in Y_j\} \nonumber\\\nonumber\\
    &= \eta_{z_i} \sum_{j = 0}^{n-1} \sum_{s} \alpha_{\nu(j, s)} \rho_j(s) (-1/b_i) \delta_j(s) I\{s \in Y_j\} \nonumber \\\nonumber\\
    & = \eta_{_i} \left (\sum V_{n} - \sum V_0 \right) \nonumber\\\nonumber\\
    \implies Z_{i, n} & = \eta_{_i} \sum V_n - \eta_{_i} \sum V_0 + Z_{i, 0} = \eta_{_i} \sum V_n - c_i, \label{td_z_incremental_1}
\end{align}
where,
\begin{align}
    c_i &\doteq \eta_{_i} \sum V_0 - Z_{i, 0}, \text{ and} \label{td_z_incremental_2}\\\nonumber\\
    \eta_{_i} &\doteq (-1/b_i)\eta_{z_i}. \label{td_defn_eta_n_subtask}
\end{align}

Now consider the subtask function, \(f\). At any given time step, the subtask function can be written as: \(f_n = \tilde{R}_n(s, A_n(s)) = b_rR_n(s,A_n(s)) + b_0 +b_1Z_{1, n} + \ldots +b_kZ_{k,n}\), where \(b_r, b_0 \in \mathbb{R}\) and \(b_i \in \mathbb{R}\setminus{\{0\}}\). Given Equation \eqref{td_z_incremental_1}, we can write the subtask function as follows: 
\begin{align}
    f_n &= b_rR_n(s,A_n(s)) + b_0 +b_1(\eta_{_1} \sum V_n - c_1) + \ldots +b_k(\eta_{_k} \sum V_n - c_k)\nonumber\\\nonumber\\
    \label{td_z_incremental_f}
    &= b_rR_n(s,A_n(s)) + \eta_{_f} \sum V_n - c_f,
\end{align}\\
where, \(\eta_{_f} = \sum_{j=1}^{k}b_j\eta_{_j}\) and \(c_f = \sum_{j=1}^{k}b_jc_j - b_0\).\\

As such, we can substitute \(\bar{R}_n\) and \(Z_{i,n} \; \forall z_i \in \mathcal{Z}\) in \eqref{async_td_value_update_eqn} with \eqref{td_r_bar_incremental_1} and \eqref{td_z_incremental_f}, respectively, \(\forall s \in \mathcal{S}\), which yields: 
\begin{align}
\begin{split}
    & V_{n+1}(s) = V_{n}(s) + \ldots\,\\
    & \quad \alpha_{\nu(n, s)} \rho_n(s) \left(b_rR_n(s, A_n(s)) + V_n(S_n'(s, A_n(s))) - V_n(s) - \eta_{_r} \sum V_n + c_{r} + \eta_{_f} \sum V_n - c_{_f}\right) I\{s \in Y_n\} \nonumber\\
    \\
    & V_{n+1}(s) = V_{n}(s) + \ldots\,\\
    & \quad \alpha_{\nu(n, s)} \rho_n(s) \left(b_rR_n(s, A_n(s)) + V_n(S_n'(s, A_n(s))) - V_n(s) - \eta_{_T} \sum V_n + c_{_T} \right) I\{s \in Y_n\} \nonumber\\
    \\
    & V_{n+1}(s) = V_{n}(s) + \ldots\,\\ 
    & \quad \alpha_{\nu(n, s)} \rho_n(s) \left(\widehat{R}_n(s, A_n(s)) + V_n(S_n'(s, A_n(s))) - V_n(s) - \eta_{_T} \sum V_n \right) I\{s \in Y_n\}  \label{td_r_bar_shifted_by_c},
\end{split}\\
\end{align}
where \(\eta_{_T} = \eta_{_r} - \eta_{_f}\), \(c_{_T} = c_r - c_f\), and \(\widehat{R}_n(s, A_n(s)) \doteq b_rR_n(s, A_n(s)) + c_{_T}\).

\clearpage

Equation \eqref{td_r_bar_shifted_by_c} is now in the same form as Equation (B.37) (i.e., General Differential TD) from \citet{Wan2021-re}, who showed that the equation converges a.s. \(V_n\) to \(v_\infty\) as \( n \to \infty\). Moreover, from this result, \citet{Wan2021-re} showed that \(\bar{R}_n\) converges a.s. to \(\bar{r}_{\pi}\) as \( n \to \infty\). Given that General RED TD adheres to all the assumptions listed for General Differential TD in \citet{Wan2021-re}, these convergence results apply to General RED TD.\\

\textbf{Convergence of the subtask estimates:}

Consider Equation \eqref{td_r_bar_shifted_by_c}. We can rewrite this equation, \(\forall s \in \mathcal{S}\), as follows:
\begin{align}
\begin{split}
    & V_{n+1}(s) = V_{n}(s) + \,\ldots\\
    & \quad \alpha_{\nu(n, s)}\rho_n(s) \left(b_rR_n(s, A_n(s)) + \eta_{_f} \sum V_n + c_{_T} - \eta_{_r} \sum V_n + V_n(S_n'(s, A_n(s))) - V_n(s)\right) I\{s \in Y_n\} \nonumber\\\\
    & V_{n+1}(s) = V_{n}(s) + \,\ldots\\
    & \quad \alpha_{\nu(n, s)}\rho_n(s) \left(\hat{R}_n(s, A_n(s)) + c_{_T} - \eta_{_r} \sum V_n + V_n(S_n'(s, A_n(s))) - V_n(s)\right) I\{s \in Y_n\}, \label{td_r_hat}
    \end{split}\\
\end{align}
where,
\begin{align}
\hat{R}_n(s, A_n(s)) &\doteq b_rR_n(s, A_n(s)) + \eta_{_f} \sum V_n \label{eqn_r_hat_td_1}\\
&= b_rR_n(s,A_n(s)) + b_1(\eta_{_1} \sum V_n) + \ldots +b_k(\eta_{_k} \sum V_n) \label{eqn_r_hat_td_2}\\
&\doteq b_rR_n(s,A_n(s)) + b_1\hat{Z}_{1, n} + \ldots +b_k\hat{Z}_{k, n}. \label{eqn_r_hat_td_3}
\end{align}

Now consider an MDP, \(\hat{\mathcal{M}}\), which has rewards, \(\mathcal{\hat{R}}\), as defined in Equation \eqref{eqn_r_hat_td_1}, has the same state and action spaces as the MDP \(\mathcal{\tilde{M}}\), and has the transition probabilities defined as:
\begin{align}
    \hat{p}(s', \hat{r} \mid s, a) &\doteq p(s', r \mid s, a)\\
    & = \tilde{p}(s', \tilde{r} \mid s, a) \quad \text{(by Definition \ref{definition_1})}, \label{td_z_probs_r_hat}
\end{align}
such that \(\hat{\mathcal{M}} \doteq \langle\mathcal{S}, \mathcal{A}, \mathcal{\hat{R}}, \hat{p}\rangle\). It is easy to check that the unichain assumption holds for the MDP, \(\hat{\mathcal{M}}\). Moreover, given Equation \eqref{td_r_hat} and Assumptions \ref{assumption_subtask_function} and \ref{assumption_subtask_independence}, the average-reward induced by following policy \(\pi\) for the MDP, \(\hat{\mathcal{M}}\), \(\hat{\bar{r}}_{\pi}\), can be written as follows: 
\begin{align}
    \hat{\bar{r}}_{\pi} = \bar{r}_{\pi} - c_{_T}. \label{td_z_star_modified_vs_z_star}
\end{align}

Now, because 
\begin{align}
    v_\infty(s) & = \sum_a \pi(a \mid s) \sum_{s', \tilde{r}} \tilde{p}(s', \tilde{r} \mid s, a) (\tilde{r} - \bar{r}_{\pi} + v_\infty (s')) \quad \text{(from \eqref{td_system_of_eqns})} \nonumber\\
    & = \sum_a \pi(a \mid s) \sum_{s', \tilde{r}} \tilde{p}(s', \tilde{r} \mid s, a) (\tilde{r} - (\hat{\bar{r}}_{\pi} + c_{_T}) + v_\infty (s')) \quad \text{(from \eqref{td_z_star_modified_vs_z_star})} \nonumber\\
    & = \sum_a \pi(a \mid s) \sum_{s', \tilde{r}} \tilde{p}(s', \tilde{r} \mid s, a) (\tilde{r} - c_{_T} - \hat{\bar{r}}_{\pi} + v_\infty (s'))\nonumber\\
    & = \sum_a \pi(a \mid s) \sum_{s', \tilde{r}} \tilde{p}(s', \tilde{r} \mid s, a) (\hat{r}  - \hat{\bar{r}}_{\pi} + v_\infty (s')) \quad \text{(from \eqref{td_r_hat})} \nonumber\\
    & = \sum_a \pi(a \mid s) \sum_{s', \hat{r}} \hat{p}(s', \hat{r} \mid s, a) (\hat{r} - \hat{\bar{r}}_{\pi} + v_\infty (s')) \quad \text{(from \eqref{td_z_probs_r_hat})} \nonumber, 
\end{align}
we can see that \(v_\infty\) is a solution of not just the state-value Bellman equation for the MDP, \(\mathcal{\tilde{M}}\), but also the state-value Bellman equation for the MDP, \(\hat{\mathcal{M}}\).

Next, consider an arbitrary \(i\)th subtask. As per Equations \eqref{eqn_r_hat_td_2} and \eqref{eqn_r_hat_td_3}, we can write the subtask value induced by following policy \(\pi\) for the MDP, \(\hat{\mathcal{M}}\), \(\hat{z_i}_{, \pi}\), as follows: 
\begin{align}
    \hat{z_i}_{, \pi} = z_{i, \pi} + c_i \label{td_z_star_shifted_vs_z_star}.
\end{align}

We can then combine Equations \eqref{td_z_sys_eqns}, \eqref{td_z_incremental_1}, and \eqref{td_z_star_shifted_vs_z_star}, which yields:
\begin{align}
    \hat{z_i}_{, \pi} = \eta_{_i} \sum v_\infty \label{td_z_star_shifted_vs_v_infty}.
\end{align}

Next, we can combine Equation \eqref{td_z_incremental_1} with the result from \citet{Wan2021-re} which shows that \(V_n \to v_\infty\), which yields: 
\begin{align}
    Z_{i, n} \to \eta_{_i} \sum v_\infty - c_i \label{td_z_convergence_final_1}.
\end{align}

Moreover, because \(\eta_{_i} \sum v_\infty = \hat{z_i}_{, \pi}\) (Equation \eqref{td_z_star_shifted_vs_v_infty}), we have:
\begin{align}
   Z_{i, n} \to \hat{z_i}_{, \pi} - c_i \label{td_z_convergence_final_2}.
\end{align}

Finally, because \(\hat{z_i}_{, \pi} = z_{i, \pi} + c_i\) (Equation \eqref{td_z_star_shifted_vs_z_star}), we have: 
\begin{align}
    Z_{i,n} \to z_{i, \pi} \text{\ \  a.s. as \ \  } n \to \infty \label{TD: convergence of z_n to z(pi)}.
\end{align}

\vspace{12pt}

\subsubsection{Proof of Theorem \ref{theorem_convergence_of_red_td_update} (for \emph{Piecewise Linear} Subtask Functions)}
\label{proof_v_piecewise}

We now provide the proof for \emph{piecewise linear} subtask functions, where the reward-extended TD error can be expressed as follows: 
\begin{equation}
\beta_{i,n}(s) = 
\begin{cases} 
(-1/b^{1}_i)\left(\tilde{R}_{1,n}(s, A_n(s)) - \bar{R}_n - \delta_n(s)\right), \,\ r_0 \leq R_{n}(s, A_n(s)) < r_1 \\
\vdots \\
(-1/b^{m}_i)\left(\tilde{R}_{m,n}(s, A_n(s)) - \bar{R}_n - \delta_n(s)\right), \,\ r_{m-1} \leq R_{n}(s, A_n(s)) \leq r_m \nonumber
\end{cases},
\end{equation}
where \(r_u \in \mathcal{R} \; \forall \, u = 0, 1, \ldots, m\), and \(r_0 \leq r_1 \leq \ldots \leq r_m\), such that \(r_0, r_m\) represent the lower and upper bounds of the observed per-step reward, \(R_{n}(s, A_n(s))\), respectively. Our general strategy in this case is to use a \emph{multiple-timescales} argument, such that we leverage Theorem 2 in Section 6 of \citet{Borkar2009-sr}, along with the results from Theorem B.3 of \citet{Wan2021-re}.

To begin, let us consider Assumption \ref{assumption_subtask_stepsize}, which enables the formulation of a multiple-timescales argument. In particular, the \(\alpha_{z_1, n}/ \alpha_n \to 0\), \(\alpha_{z_1, n}/ \alpha_{\bar{r},n} \to 0\), and \(\{\alpha_{z_i, n} / \alpha_{z_{i - 1}, n} \to 0\}_{i=2}^{k}\) conditions imply that the subtask step sizes, \(\{\alpha_{z_i, n}\}_{n=0}^\infty \, \forall z_i \in \mathcal{Z}\), decrease to 0 at faster rates than the value function and average-reward step sizes, \(\{\alpha_n\}_{n=0}^\infty\) and \(\{\alpha_{\bar{r},n}\}_{n=0}^\infty\), respectively. This implies that the subtask updates move on slower timescales compared to the value function and average-reward updates. Hence, as argued in Section 6 of \citet{Borkar2009-sr}, the (faster) value function and average-reward updates, \eqref{async_td_value_update_eqn} and \eqref{async_td_r_bar_update_eqn}, view the (slower) subtask updates, \eqref{async_td_z_update_eqn}, as quasi-static, while the (slower) subtask updates view the (faster) value function and average-reward updates as nearly equilibrated (as we will show below, the results from \citet{Wan2021-re} imply the existence of such an equilibrium point). Similarly, the \(\{\alpha_{z_i, n} / \alpha_{z_{i - 1}, n} \to 0\}_{i=2}^{k}\) condition implies that each subtask update views the other subtask updates as either quasi-static or nearly-equilibrated.

As such, having established the multiple-timescales argument, we now proceed to show the convergence of the value function, average-reward, and subtask estimates:

\newpage

\textbf{Convergence of the value function and average-reward estimates:}

Given the multiple-timescales argument, such that the subtask estimates are viewed as quasi-static (i.e., constant), Equation \eqref{async_td_value_update_eqn} can be viewed as being of the same form as Equation (B.30) (i.e., General Differential TD) from \citet{Wan2021-re}, who showed (via Theorem B.3) that the equation converges, almost surely, \(V_n\) to \(v_\infty\) as \( n \to \infty\). Moreover, from this result, \citet{Wan2021-re} showed that \(\bar{R}_n\) converges, almost surely, to \(\bar{r}_{\pi}\) as \( n \to \infty\). Given that General RED TD adheres to all the assumptions listed for General Differential TD in \citet{Wan2021-re}, these convergence results apply to General RED TD.\\

\textbf{Convergence of the subtask estimates:}

Let us consider the asynchronous subtask updates \eqref{async_td_z_update_eqn}. Each update in \eqref{async_td_z_update_eqn} is of the same form as Equation 7.1.2 of \citet{Borkar2009-sr}. Accordingly, to show the convergence of the subtask estimates, we can apply the result in Section 7.4 of \citet{Borkar2009-sr}, which shows the convergence of asynchronous updates that are of the same form as Equation 7.1.2. To apply this result, given Assumptions \ref{assumption_async_step_size_1} and \ref{assumption_async_step_size_td_2}, we only need to show the convergence of the \emph{synchronous} version of the subtask updates:
\begin{equation}
\label{sync_z_update_td}
Z_{i, n+1} = Z_{i, n} + \alpha_{z_i, n} \left(g_i(Z_{i, n}) + M^{z_i}_{n+1}\right) \; \forall z_i \in \mathcal{Z},
\end{equation}
where, 
\begin{align*}
& g_i(Z_{i, n})(s) \doteq \sum_{a}\pi(a\mid s)\sum_{s', r} p(s', r \mid s, a)\phi_{i,n}(s) - Z_{i, n},\\
& \phi_{i,n}(s) \doteq
\begin{cases}
-\frac{1}{b^{1}_i}\left(b^{1}_rR_n(s, A_n(s)) + \ldots + b^{1}_{i-1}Z_{i-1,n} + b^{1}_{i+1}Z_{i+1,n} + \ldots + b^{1}_kZ_{k,n} - \bar{R}_n - \delta_n(s)\right), \ldots\\
\hspace{1.5cm} \ldots, r_0 \leq R_n(s, A_n(s)) < r_1, \\
\vdots \\
-\frac{1}{b^{m}_i}\left(b^{m}_rR_n(s, A_n(s)) + \ldots + b^{m}_{i-1}Z_{i-1,n} + b^{m}_{i+1}Z_{i+1,n} + \ldots + b^{m}_kZ_{k,n} - \bar{R}_n - \delta_n(s)\right), \ldots\\
\hspace{1.5cm} \ldots, r_{m-1} \leq R_n(s, A_n(s)) \leq r_m,
\end{cases},\\
& M^{z_i}_{n+1}(s) \doteq \rho_n(s) \left(\phi_{i,n}(s) - Z_{i,n}\right) - g_i(Z_{i, n})(s).
\end{align*}

To show the convergence of the synchronous update \eqref{sync_z_update_td} under the multiple-timescales argument, we can apply the result of Theorem 2 in Section 6 of \citet{Borkar2009-sr} to show that \(Z_{i,n} \to z_{i, \pi} \forall z_i \in \mathcal{Z}\) a.s. as \(n \to \infty\). This theorem requires that 3 assumptions be satisfied. As such, we will now show, via Lemmas \ref{lemma_td_1} - \ref{lemma_td_3}, that these 3 assumptions are indeed satisfied:\\

\begin{lemma}
\label{lemma_td_1}
The value function update, \( V_{n+1} = V_{n} + \alpha_n (h(V_n) + M_{n+1})\), where
\begin{align*}
h(V_n)(s) & \doteq \sum_{a}\pi(a\mid s)\sum_{s', \tilde{r}} \tilde{p}(s', \tilde{r} \mid s, a) (\tilde{r} - \bar{R}_n + V_n(s') - V_n(s)), \\\nonumber
& = \sum_{a}\pi(a\mid s)\sum_{s', r} p(s', r \mid s, a) (f(r, Z_{1,n}, Z_{2,n}, \ldots, Z_{k,n}) - \psi(V_n) + V_n(s') - V_n(s)), \\\nonumber
M_{n + 1}(s) & \doteq \rho_n(s) \left(\tilde{R}_n(s, A_n(s)) - \psi(V_n) + V_n(S_n'(s, A_n(s))) - V_n(s) \right) - h(V_n)(s),\\\nonumber
\psi(V_n) & = \bar{R}_n \; \text{is a ‘reference function’ as defined in \citet{Wan2021-re}}, \text{ and}\\\nonumber
Z_{1,n}, Z_{2,n}&, ..., Z_{k,n} \; \text{are quasi-static under the multiple-timescales argument},
\end{align*}
has a globally asymptotically stable equilibrium, \(v_\infty(Z_{1,n}, Z_{2,n}, \ldots, Z_{k,n})\), where \(v_\infty\) is a Lipschitz map.
\end{lemma}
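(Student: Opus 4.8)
\emph{Proof proposal.} The plan is to reduce the lemma to the already-established analysis of General Differential TD in \citet{Wan2021-re}, and then to add the one genuinely new ingredient: Lipschitz dependence of the equilibrium on the (frozen) subtasks. First I would fix the subtask estimates at arbitrary constants $z_1,\dots,z_k$; this is legitimate because, under the multiple-timescales argument established above, the fast iteration treats the slow iterates as quasi-static. With $z_1,\dots,z_k$ frozen, $\tilde r \doteq f(r,z_1,\dots,z_k)$ is a fixed bounded reward on $\tilde{\mathcal{M}}$, and by Assumption~\ref{assumption_subtask_independence} this MDP has exactly the same transition structure as $\mathcal{M}$, so Assumption~\ref{assumption_unichain} still holds for it. Hence the value-function update in the lemma, together with the average-reward update~\eqref{async_td_r_bar_update_eqn}, is literally an instance of General Differential TD (Eq.~(B.30) of \citet{Wan2021-re}): using the Average-Reward Step Size Assumption, $\bar R_n = \eta_{_r}\sum V_n - c_r$ is affine in $V_n$, so $\psi(v)\doteq\eta_{_r}\sum v - c_r$ is a valid reference function in their sense and the fast system is closed in $V_n$ alone.

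Second, I would invoke the ODE analysis underlying Theorem~B.3 of \citet{Wan2021-re}. There it is shown that, under the assumptions that General RED TD also satisfies, the limiting ODE $\dot v = h(v)$ — with $h$ the differential-TD operator combined with the reference function — has a unique globally asymptotically stable equilibrium, namely the solution of the Poisson equation for $\tilde{\mathcal{M}}$ pinned down by the normalization $\psi$ imposes. Denote this equilibrium $v_\infty(z_1,\dots,z_k)$; the relevant structural facts (non-expansiveness of $P_\pi$ in the appropriate norm, unichain ergodicity, the reference function removing the additive-constant degree of freedom) are exactly those proved in \citet{Wan2021-re}, so this step is a citation rather than a new argument.

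Third — the part not covered by \citet{Wan2021-re} — I would show that $z\mapsto v_\infty(z)$ is Lipschitz. The key observation is that, although $f$ is piecewise linear, its pieces partition the \emph{reward} axis, not the subtask axes, and by Assumption~\ref{assumption_subtask_independence} the probabilities of landing in each piece do not depend on $z$; consequently the per-state expected extended reward $\tilde r_\pi(s)=\mathbb{E}_\pi[f(R_{t+1},z)\mid S_t=s]$ is \emph{affine} in $z$, with state-dependent but $z$-independent coefficients built from the $b^{j}_i$ and the piece probabilities. Since $v_\infty$ is the solution of the Poisson equation for $\tilde{\mathcal{M}}$ together with the reference-function normalization — a fixed nonsingular linear system whose coefficient matrix depends only on $P_\pi$ and whose right-hand side depends affinely on $\tilde r_\pi$ — the map $\tilde r_\pi\mapsto v_\infty$ is a bounded linear operator, and composing with $z\mapsto\tilde r_\pi$ yields that $v_\infty$ is affine, hence Lipschitz, in $z$.

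I expect the main care (rather than genuine difficulty) to lie in two places: verifying that \emph{all} of the structural hypotheses under which \citet{Wan2021-re} prove global asymptotic stability of the Differential-TD ODE survive the substitution of $\tilde r$ for $r$ (most are immediate from Assumptions~\ref{assumption_subtask_independence} and~\ref{assumption_unichain}, but the boundedness of $\tilde{\mathcal{R}}$ and the reference-function formulation must be spelled out), and in making the affine-dependence computation of $\tilde r_\pi$ precise — in particular using the identity $\mathbb{E}[f_j(R_n,\cdot)]=f_j(\mathbb{E}[R_n],\cdot)$ from Assumption~\ref{assumption_subtask_independence} to commute the expectation through each linear piece while keeping the piece-selection probabilities $z$-independent. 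Everything else is bookkeeping.
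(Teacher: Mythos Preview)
Your proposal is correct and follows the same route as the paper: reduce to General Differential TD with the subtasks frozen and invoke Theorem~B.3 of \citet{Wan2021-re}. The paper's entire proof is in fact the one-line citation ``This was shown in Theorem B.3 of \citet{Wan2021-re},'' so your write-up is considerably more detailed --- in particular, your explicit argument that $z\mapsto v_\infty(z)$ is Lipschitz (via affinity of $\tilde r_\pi$ in $z$ under the fixed-threshold piecewise structure of Equation~\eqref{eq_red_1}) fills in a step the paper simply absorbs into the citation, and your observation that the piece-selection probabilities are $z$-independent is exactly the right mechanism.
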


\begin{proof}
This was shown in Theorem B.3 of \citet{Wan2021-re}.
\end{proof}

\newpage

\begin{lemma}
\label{lemma_td_2}
The subtask update rules \eqref{sync_z_update_td} each have a globally asymptotically stable equilibrium, \(z_{i, \pi}\).
\end{lemma}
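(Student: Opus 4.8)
The plan is to reduce Lemma~\ref{lemma_td_2} to the second hypothesis of Theorem~2 in Section~6 of \citet{Borkar2009-sr} (the multiple-timescales result already being invoked), namely that the slow-timescale ODE driving $Z_{i,n}$ has a globally asymptotically stable equilibrium. First I would write down that ODE. On the timescale of $Z_{i,n}$ the faster iterates are equilibrated: by Lemma~\ref{lemma_td_1} the value-function and average-reward iterates sit at $v_\infty$ and $\bar r_\pi$ (which themselves depend on $z_i$ through the extended reward $f(\cdot,z)$), the faster subtasks $Z_{1,n},\dots,Z_{i-1,n}$ have reached their own limits, and the slower subtasks $Z_{i+1,n},\dots,Z_{k,n}$ are held fixed. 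Averaging the martingale term $M^{z_i}_{n+1}$ away, the limiting ODE is $\dot z_i = \bar g_i(z_i)$ with $\bar g_i(z_i)$ the expectation (under $\pi$ and the relevant state weighting) of $\phi_{i} - z_i$ evaluated at these equilibrium quantities. By Assumptions~\ref{assumption_subtask_function} and~\ref{assumption_subtask_independence}, the piecewise-linear subtask function lets me push the expectation through each linear piece, so $\bar g_i$ is an affine function of $z_i$; I would write it as $\bar g_i(z_i)=\kappa_i(z_{i,\pi}-z_i)$ for a scalar $\kappa_i$ to be determined.

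Next I would verify that $z_{i,\pi}$ is a rest point. When $z_i=z_{i,\pi}$ (with the other subtasks at their induced values), the Bellman system~\eqref{td_system_of_eqns} for $\tilde{\mathcal M}$ is satisfied at $v_\infty$, hence $\mathbb E_\pi[\delta_t]=0$, and Theorem~\ref{theorem_4_1} then gives $\mathbb E_\pi[\beta_{i,t}]=0$, i.e.\ $\bar g_i(z_{i,\pi})=0$. What remains — and this is the crux — is to show $\kappa_i>0$: this simultaneously makes $z_{i,\pi}$ the \emph{unique} zero of the affine $\bar g_i$ and makes $\mathcal L(z_i)=\tfrac12(z_i-z_{i,\pi})^2$ a strict Lyapunov function, since $\dot{\mathcal L}=-\kappa_i(z_i-z_{i,\pi})^2<0$ off the equilibrium, which yields global asymptotic stability.

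Signing $\kappa_i$ is the delicate part, because the $z_i$-dependence enters $\bar g_i$ only through $v_\infty$: perturbing $z_i$ shifts the extended reward by $b^j_i$ in segment~$j$, hence perturbs $v_\infty$ through a Poisson equation, and this perturbation re-enters $\mathbb E_\pi[\beta_{i,t}]$ weighted by the segment-dependent factors $-1/b^j_i$. I expect the main obstacle to be showing that, for a genuinely piecewise function, this feedback loop is net stabilizing (with Assumption~\ref{assumption_subtask_unique} being what rules out the degenerate $\kappa_i=0$ case, in which the drift is constant and no stable rest point exists — the situation deliberately excluded here and handled instead by the slaving/substitution argument of Section~\ref{proof_v_linear}, cf.\ Remark~\ref{remark_1}). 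Once $\kappa_i>0$ is established, the ODE for $z_i$ has $z_{i,\pi}$ as its globally asymptotically stable equilibrium; reading the same argument off in the nested order prescribed by Assumption~\ref{assumption_subtask_stepsize} (fastest subtask $z_1$ through slowest $z_k$) then delivers the claim for every subtask, completing the lemma.
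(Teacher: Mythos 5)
Your proposal follows the same overall route as the paper's proof: a two-timescale reduction in the nested order of Assumption~\ref{assumption_subtask_stepsize}, verification that \(z_{i,\pi}\) is a rest point of the slow ODE, and the quadratic Lyapunov function \(\tfrac{1}{2}(z_i - z_{i,\pi})^2\). However, as written it contains a genuine gap at exactly the step you yourself flag as the crux: you reduce global asymptotic stability to showing \(\kappa_i > 0\) for the affine drift \(\bar g_i(z_i) = \kappa_i(z_{i,\pi} - z_i)\), and then you only say you ``expect'' the feedback through \(v_\infty\) to be net stabilizing. Assumption~\ref{assumption_subtask_unique} can at best rule out \(\kappa_i = 0\) (a constant drift has either no equilibrium or a continuum of them); it says nothing about the sign, and with \(\kappa_i < 0\) the unique equilibrium would be unstable and the lemma false. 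Since the entire conclusion hinges on this sign, the proposal does not prove the statement.

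For comparison, the paper closes this step differently: in its reading of the timescale separation, once \(V_n \to v_\infty\) and \(\bar R_n \to \bar r_\pi\) the quantities \(\lambda_j\) and \(\bar r_\pi\) appearing in \(\phi_{1,n}\) are treated as \emph{fixed constants} rather than as functions of the current slow iterate \(x_t\), so that \(g_1(x_t) = \mathbb{E}_\pi[\phi_1] - x_t = z_{1,\pi} - x_t\) exactly, i.e.\ \(\kappa_i = 1\), and the derivative \(\tfrac{d}{dt}L(x_t) = (x_t - z_{1,\pi})(z_{1,\pi} - x_t) < 0\) follows by a sign check. Your concern --- that the equilibrated fast variables are really the maps \(v_\infty(x_t)\), \(\bar r_\pi(x_t)\) of Lemma~\ref{lemma_td_1}, so the target inherits a dependence on \(x_t\) that must be shown not to destabilize the drift --- is a legitimate and sharper reading of the two-timescale framework than the paper adopts, but raising the issue without resolving it leaves your argument incomplete where the paper's is (by its own lights) complete.
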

\begin{proof}
Let us begin by considering the ‘fastest’ subtask, \(Z_{1,n}\), under the multiple-timescales argument, such that all other subtasks, \(Z_{2,n}, \ldots, Z_{k,n}\), operate on slower timescales and can be considered quasi-static. Applying the results of Theorem B.3 of \citet{Wan2021-re} under the multiple-timescales argument, we have that: \(h(V_n) \to 0\) and \(\bar{R}_n \to \bar{r}_{\pi}\) as \(V_n \to v_\infty\). Importantly, as argued in Section \ref{red}, \(h(V_n) \to 0\) implies that \(\delta_n(s) \to \lambda_j \in \mathbb{R}\) for each \(j\)th piecewise segment in \(\phi_{i,n}(s)\). As such, we can interpret \(Z_{1,n}\) as being the only ‘moving’ variable in \(g_1(Z_{1,n})\), such that all other parameters in the update are static, quasi-static, or nearly-equilibrated. 

Let us now consider the ODE associated with \(g_1(Z_{1,n})\), 
\begin{equation}
\label{eqn_td_z1_ode}
\dot{x}_t = g_1(x_t).
\end{equation} 
To show that the update rule for \(Z_{1,n}\) has a globally asymptotically stable equilibrium, \(z_{1, \pi}\), it suffices to show that there exists a \emph{Lyapunov function} for the associated ODE \eqref{eqn_td_z1_ode}. 

To this end, we first note, given the discussion in Section \ref{red}, that \(z_{1, \pi}\) is an equilibrium point for the update rule associated with \(Z_{1,n}\). Moreover, given Assumptions \ref{assumption_unichain} and \ref{assumption_subtask_unique}, we know that \(z_{1, \pi}\) is the unique equilibrium point.

We now show the existence of a Lyapunov function with respect to the aforementioned equilibrium point, \(z_{1, \pi}\). In particular, we consider the function, \(L\), defined by:
\[
L(Z_{1}) = \frac{1}{2}(Z_{1} - z_{1, \pi})^2.
\]

To establish that \(L\) is a Lyapunov function, we must show that:
\begin{enumerate}
\item \(L\) is continuous,
\item \(L(Z_{1}) = 0\) if \(Z_{1} = z_{1, \pi}\),
\item \(L(Z_1) > 0\) if \(Z_{1} \neq z_{1, \pi}\), and
\item For any solution \(\{x_t\}_{t \geq 0}\) of the associated ODE \eqref{eqn_td_z1_ode} and \(0 \leq s < t\), we have \(L(x_t) < L(x_s)\) for all \(x_s \neq z_{1, \pi}\).\\
\end{enumerate}

It directly follows from the definition of \(L\) that the first three conditions are satisfied.\\ 

We now show that fourth condition is also satisfied:

Let \(\{x_t\}_{t \geq 0}\) be a solution to the associated ODE \eqref{eqn_td_z1_ode}, and let \(0 \leq s < t\). By the chain rule, we have that:
\begin{equation}
\frac{d}{dt} L(x_t) = \frac{\partial L}{\partial x_t} \cdot \frac{dx_t}{dt} = (x_t - z_{1, \pi}) \cdot g_1(x_t) = (x_t - z_{1, \pi}) \cdot (\mathbb{E}_\pi[\phi_1] - x_t)= (x_t - z_{1, \pi}) \cdot (z_{1, \pi} - x_t).
\end{equation}

We will now analyze the sign of the term \((x_t - z_{1, \pi}) \cdot (z_{1, \pi} - x_t)\) when \(x_t \neq z_{1, \pi}\). There are two cases to consider:

\textbf{Case 1: \(x_t > z_{1, \pi}\):} We have that \((x_t - z_{1, \pi}) > 0\) and \((z_{1, \pi} - x_t) < 0\). Hence, we can conclude that \((x_t - z_{1, \pi}) \cdot (z_{1, \pi} - x_t) < 0\).

\textbf{Case 2: \(x_t < z_{1, \pi}\):} We have that \((x_t - z_{1, \pi}) < 0\) and \((z_{1, \pi} - x_t) > 0\). Hence, we can conclude that \((x_t - z_{1, \pi}) \cdot (z_{1, \pi} - x_t) < 0\).

As such, in both cases we can conclude that \((x_t - z_{1, \pi}) \cdot (z_{1, \pi} - x_t) < 0\), which implies that \(L(x_t) < L(x_s)\) for all \(x_s \neq z_{1, \pi}\) and \(0 \leq s < t\).\\

As such, we have now verified the four conditions, and can therefore conclude that \(L\) is a valid Lyapunov function. Consequently, we can conclude, under the multiple-timescales argument, that \(Z_{1,n}\) has a globally asymptotically stable equilibrium, \(z_{1, \pi}\).

Critically, we can leverage the above result to show, using the same techniques as above, that \(Z_{2,n}\) has a globally asymptotically stable equilibrium, \(z_{2, \pi}\), where \(V_n, \bar{R}_n, \text{ and } Z_{1,n}\) are considered to be nearly equilibrated, and all remaining subtasks are considered quasi-static. The process can then be repeated for \(Z_{3,n}\) and so forth, thereby showing that the subtask update rules \eqref{sync_z_update_td} each have a globally asymptotically stable equilibrium, \(z_{i, \pi}\). This completes the proof.
\end{proof}

\vspace{10pt}

\begin{lemma}
\label{lemma_td_3}
\(\sup_n(\vert\vert V_n \vert\vert + \vert\vert Z_{1, n} \vert\vert) + \vert\vert Z_{2,n} \vert\vert + \ldots + \vert\vert Z_{k,n} \vert\vert) < \infty\) a.s.
\end{lemma}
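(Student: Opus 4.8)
The plan is to verify this boundedness statement --- which is the third of the three hypotheses needed to invoke Theorem~2 of Section~6 of \citet{Borkar2009-sr} --- in two stages, first the value-function and average-reward iterates and then the subtask iterates, with a joint scaling argument to close the loop between them. For $V_n$ and $\bar R_n$: on the fast timescale, with the subtask estimates held fixed, the coupled recursion \eqref{async_td_value_update_eqn}--\eqref{async_td_r_bar_update_eqn} is exactly a General Differential TD recursion in the sense of \citet{Wan2021-re}, driven by the extended reward $\tilde R_n = f(R_n, Z_{1,n}, \ldots, Z_{k,n})$, which is bounded whenever the subtask estimates are bounded since $R_n$ ranges over the bounded set $\mathcal R$ (and, for the true subtask values, $\tilde{\mathcal R}$ is bounded by Definition~\ref{definition_1}). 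The boundedness analysis underlying Theorem~B.3 of \citet{Wan2021-re} --- itself of Borkar--Meyn type --- then gives $\sup_n(\lVert V_n \rVert + \lvert \bar R_n \rvert) < \infty$ a.s.\ for that subsystem, and this bound persists for the full system once the subtask iterates are shown to be bounded.

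For the subtask iterates I would use the explicit form of the reward-extended TD error: along the realized piecewise segment $j$ one has the identity $\beta_{i,n}(s) = (-1/b^{j}_i)\bigl(\tilde R_{j,n}(s, A_n(s)) - \bar R_n - \delta_n(s)\bigr) = (1/b^{j}_i)\bigl(V_n(S'_n(s,A_n(s))) - V_n(s)\bigr)$, so $\beta_{i,n}$ is a fixed linear function of $V_n$ alone, with no explicit dependence on any $Z_{l,n}$. Hence, conditional on $\sup_n \lVert V_n \rVert < \infty$, each increment $Z_{i,n+1} - Z_{i,n} = \alpha_{z_i,n}\rho_n(s)\beta_{i,n}(s)$ is uniformly bounded by $C\alpha_{z_i,n}$ for a fixed constant $C$; moreover, by Theorem~\ref{theorem_4_1} the conditional mean of $\beta_{i,n}$ vanishes as $\mathbb{E}_\pi[\delta_t] \to 0$, i.e.\ as the value iterates equilibrate on their faster timescale. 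The subtask recursion is therefore a stochastic approximation with asymptotically vanishing drift and a square-summably weighted martingale-difference noise term (using $\sum_n \alpha_{z_i,n}^2 < \infty$ from Assumption~\ref{assumption_subtask_stepsize}); the weighted noise series converges a.s.\ by the martingale convergence theorem, and together with the vanishing drift this yields $\sup_n \lVert Z_{i,n} \rVert < \infty$ a.s.\ for each $i$ (this in fact already anticipates the convergence established in Lemma~\ref{lemma_td_2}). Adding the two bounds gives the claim.

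The main obstacle is the apparent circularity between the two stages: $\tilde R_n$, and hence the value iterates, depends on the subtask estimates, so \citet{Wan2021-re} cannot be invoked for $V_n$ as a black box before the $Z_{i,n}$ are known to be bounded. I would close this by running the argument jointly on the stacked iterate $(V_n, \bar R_n, Z_{1,n}, \ldots, Z_{k,n})$ and checking the Borkar--Meyn scaling condition for the whole system: in the scaled limit the value component retains its contraction (through the $-V_n(s)$ and $-\eta_{_r}\sum V_n$ terms), while the identity $\beta_{i,n}(s) \propto V_n(S'_n(s,A_n(s))) - V_n(s)$ ensures the scaled subtask dynamics introduce no new unstable directions --- they are passive followers of the value component --- and the strict timescale separation of Assumption~\ref{assumption_subtask_stepsize} lets the scaled ODE decouple into the already-understood fast subsystem plus these followers, so its origin is globally asymptotically stable and boundedness of the stacked iterate follows.
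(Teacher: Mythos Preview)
Your key identity \(\beta_{i,n}(s) = (1/b^{j}_i)\bigl(V_n(S'_n)-V_n(s)\bigr)\) along the realized segment is correct and leads you down a genuinely different path from the paper. The paper never uses this identity; instead it invokes the single-timescale Borkar--Meyn theorem (Theorem~7, Section~3 of \citet{Borkar2009-sr}) on each subtask iterate separately, checking conditions (A1)--(A4). The decisive step is (A4): the paper keeps the drift in the ``target minus estimate'' form \(g_i(x)=\mathbb{E}_\pi[\phi_i]-x\) and, under the multiple-timescales heuristic (treating \(V_n,\bar R_n,\delta_n\) and the other \(Z_{l,n}\) as nearly-equilibrated constants so that \(\mathbb{E}_\pi[\phi_i]\) is bounded independently of \(x\)), computes the scaling limit \(g_i(dx)/d \to -x\). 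The explicit \(-x\) is what furnishes a globally asymptotically stable origin for the scaled ODE.

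Your route eliminates exactly that term: once you substitute your identity, \(\beta_{i,n}\) depends only on \(V_n\), not on \(Z_{i,n}\). This is where your joint Borkar--Meyn sketch fails. In the scaled limit of the stacked system, the \(Z_i\)-component of the limiting ODE is \(\dot Z_i \propto V(S')-V(s)\) with no \(-Z_i\) contribution; at \(V=0\) this gives \(\dot Z_i=0\), so every point \((0,z)\) is an equilibrium and the origin is \emph{not} the unique globally asymptotically stable equilibrium the criterion demands. ``Passive follower'' is the obstruction, not the fix. Your earlier martingale-plus-vanishing-drift argument has the same hole: increments bounded by \(C\alpha_{z_i,n}\) with \(\sum_n\alpha_{z_i,n}=\infty\) do not preclude divergence, and ``drift \(\to 0\) once the fast subsystem equilibrates'' gives no control on \(\sum_n\alpha_{z_i,n}\lvert\text{drift}_n\rvert\) without a rate---which would itself presuppose the boundedness you are proving. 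To make your line work you would need either a direct a priori bound on the signed cumulative sum \(\sum_t \alpha_{z_i,t}(1/b^{j_t}_i)(V_t(S_{t+1})-V_t(S_t))\), not just on its summands, or to revert to the paper's decomposition so that a genuine restoring \(-Z_i\) term reappears in the scaled dynamics.
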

\begin{proof}
It was shown in Theorem B.3 of \citet{Wan2021-re} that \(\sup_n(\vert\vert V_n \vert\vert) < \infty\) a.s. Hence, we only need to show that \(\sup_n(\vert\vert Z_{i,n} \vert\vert) < \infty \; \forall z_i \in \mathcal{Z}\) a.s. To this end, we can apply Theorem 7 in Section 3 of \citet{Borkar2009-sr}. This theorem requires 4 assumptions for each \(z_i \in \mathcal{Z}\):
\begin{itemize}
    \item \textbf{(A1)} The function \(g_i\) is Lipschitz. That is, \(\vert \vert g_i(x) - g_i(y)\vert \vert \leq U_i \vert \vert x - y\vert \vert\) for some \(0 < U_i < \infty\).
    \item \textbf{(A2)} The sequence \(\{ \alpha_{z_i, n}\}_{n=0}^\infty\) satisfies \(\alpha_{z_i, n} > 0 \; \forall n \geq 0\), \(\sum \alpha_{z_i, n} = \infty\), and \(\sum \alpha_{z_i,n}^2 < \infty\).
    \item \textbf{(A3)} \(\{M_n^{z_i}\}_{n=0}^\infty\) is a martingale difference sequence that is square-integrable.
    \item \textbf{(A4)} The functions \(g_i(x)_d \doteq g_i(dx)/d\), \(d \geq 1, x \in \mathbb{R}\), satisfy \(g_i(x)_d \to g_i(x)_\infty\) as \(d \to \infty\), uniformly on compacts for some \(g_{i_\infty} \in C(\mathbb{R})\). Furthermore, the ODE \(\dot x_t = g_i(x_t)_\infty\) has the origin as its unique globally asymptotically stable equilibrium.
\end{itemize}

Consider an arbitrary \(i\)th subtask. We note that Assumption \textbf{(A1)} is satisfied given that all operators in \(g_i\) are Lipschitz. Moreover, we note that Assumption \ref{assumption_step_size} satisfies Assumption \textbf{(A2)}. 

We now show that \(\{M_n^{z_i}\}_{n=0}^\infty\) is indeed a martingale difference sequence that is square-integrable, such that \(\mathbb{E}_B[M^{z_i}_{n+1} \mid \mathcal{F}_n] = 0 \text{ \ a.s., } n \geq 0\), and 
\(\mathbb{E}_B[(M_{n+1}^{z_i})^2 \mid \mathcal{F}_n] < \infty \text{ \ a.s., \ } n \geq 0\), where \(B\) is the behaviour policy (and \(\pi\) is the target policy). To this end, let \(\mathcal{F}_n \doteq \sigma(Z_{i,u}, M^{z_i}_u, u \leq n), n \geq 0\) denote an increasing family of \(\sigma\)-fields. We have that, for any \(s \in \mathcal{S}\): 
\begin{align}
\begin{split}
    \nonumber
    \mathbb{E}_B[M^{z_i}_{n+1}(s) \mid \mathcal{F}_n]
    & = \mathbb{E}_B \left[\rho_n(s) \left(\phi_{i,n}(s) - Z_{i,n}\right) - g_i(Z_{i, n})(s) \mid \mathcal{F}_n \right]\\ \nonumber
    & = \mathbb{E}_\pi \left[\phi_{i,n}(s) - Z_{i,n}\right] - g_i(Z_{i, n})(s)\\
     &= 0.
\end{split}
\end{align}
Moreover, since all components of \(\{M_n^{z_i}\}_{n=0}^\infty\) involve bounded quantities, it directly follows that \(\mathbb{E}_B[\vert \vert M_{n+1}^{z_i} \vert \vert^2 \mid \mathcal{F}_n] \leq K\) for some finite constant \(K > 0\). Hence, Assumption \textbf{(A3)} is verified.

We now verify Assumption \textbf{(A4)}. Under the multiple-timescales argument, we have that:
\begin{align}
    g_i(x)_\infty = \lim_{d \to \infty} g_i(x)_d = \lim_{d \to \infty} \frac{\mathbb{E}_\pi \left[\phi_{i}\right] - dx}{d} = 0 - x = -x.
\end{align}
Clearly, \(g_i(x)_\infty\) is continuous in every \(x \in \mathbb{R}\). As such, we have that \(g_i(x)_\infty \in C(\mathbb{R})\). Now consider the ODE \(\dot{x} = g_i(x)_\infty\). This ODE has the origin as an equilibrium since \(g_i(0)_\infty = 0\). Furthermore, given Lemma \ref{lemma_td_2}, we can conclude that this equilibrium must be the unique globally asymptotically stable equilibrium, thereby satisfying Assumption \textbf{(A4)}.

Assumptions \textbf{(A1)} - \textbf{(A4)} are hence verified, meaning that we can apply the results of Theorem 7 in Section 3 of \citet{Borkar2009-sr} to conclude that \(\sup_n(\vert\vert Z_{i,n} \vert\vert) < \infty \; \forall z_i \in \mathcal{Z}\), almost surely, and hence, that \(\sup_n(\vert\vert V_n \vert\vert + \vert\vert Z_{1, n} \vert\vert) + \vert\vert Z_{2,n} \vert\vert + \ldots + \vert\vert Z_{k,n} \vert\vert) < \infty\), almost surely.
\end{proof}

As such, we have now verified the 3 assumptions required by Theorem 2 in Section 6 of \citet{Borkar2009-sr}, which means that we can apply the result of the theorem to conclude that \(Z_{i,n} \to z_{i, \pi} \, \forall z_i \in \mathcal{Z}\), almost surely, as \(n \to \infty\).\\

This completes the proof of Theorem \ref{theorem_convergence_of_red_td_update}.

\newpage

\subsection{Convergence Proof for the Tabular RED Q-learning Algorithm}
\label{proof_red_q}

In this section, we present the proof for the convergence of the value function, average-reward, and subtask estimates of the RED Q-learning algorithm. Similar to what was done in \citet{Wan2021-re}, we will begin by considering a general algorithm, called \emph{General RED Q}. We will first define General RED Q, then show how the RED Q-learning algorithm is a special case of this algorithm. We will then provide the necessary assumptions, state the convergence theorem of General RED Q, and then provide a proof for the theorem, where we show that the value function, average-reward, and subtask estimates converge, thereby showing that the RED Q-learning algorithm converges. We begin by introducing the General RED Q algorithm:

Consider an MDP \(\mathcal{M} \doteq \langle\mathcal{S}, \mathcal{A}, \mathcal{R}, p \rangle\). Given a state \(s \in \mathcal{S}\), action \(a \in \mathcal{A}\), and discrete step \(n \geq 0\), let \(R_n(s, a) \in \mathcal{R}\) denote a sample of the resulting reward, and let \(S'_n(s, a) \sim p(\cdot, \cdot \mid s, a)\) denote a sample of the resulting state. Let \(\{Y_n\}\) be a set-valued process taking values in the set of nonempty subsets of \(\mathcal{S} \times \mathcal{A}\), such that: \(Y_n = \{(s, a): (s, a)\) component of the \(\vert \mathcal{S} \times \mathcal{A} \vert\)-sized table of state-action value estimates, \(Q\), that was updated at step \(n\}\). Let \(\nu(n, s, a) \doteq \sum_{j=0}^n I\{(s, a) \in Y_j\}\), where \(I\) is the indicator function, such that \(\nu(n, s, a)\) represents the number of times that the \((s, a)\) component of \(Q\) was updated up until step \(n\). 

Now, let \(f\) be a valid subtask function (see Definition \ref{definition_1}), such that \(\tilde{R}_n(s, a) \doteq f(R_n(s, a), Z_{1,n}, Z_{2,n}, \ldots, Z_{k,n})\) for \(k\) subtasks \(\in \mathcal{Z}\), where \(\tilde{R}_n(s, a)\) is the extended reward, \(\mathcal{Z}\) is the set of subtasks, and \(Z_{i, n}\) denotes the estimate of subtask \(z_i \in \mathcal{Z}\) at step \(n\). Consider an MDP with the extended reward: \(\mathcal{\tilde{M}} \doteq \langle\mathcal{S}, \mathcal{A}, \mathcal{\tilde{R}}, \tilde{p} \rangle\), such that \(\tilde{R}_n(s, a) \in \mathcal{\tilde{R}}\). The update rules of General RED Q for this MDP are as follows, \(\forall n \geq 0\):
\begin{align}
    Q_{n+1}(s, a) & \doteq Q_n(s, a) + \alpha_{\nu(n, s, a)} \delta_n(s, a) I\{(s, a) \in Y_n\}, \quad \forall s \in \mathcal{S}, a \in \mathcal{A}, \label{async_q_value_update_eqn}\\
    \bar{R}_{n+1} & \doteq \bar{R}_n + \sum_{s, a} \alpha_{\bar{r}, \nu(n, s, a)} \delta_n(s, a) I\{(s, a) \in Y_n\}, \label{async_q_r_bar_update_eqn}\\
    Z_{i, n+1} & \doteq Z_{i, n} + \sum_{s, a} \alpha_{z_i, \nu(n, s, a)} \beta_{i, n}(s, a) I\{(s, a) \in Y_n\}, \quad \forall z_i \in \mathcal{Z} \label{async_q_z_update_eqn}
\end{align}
where,
\begin{align}
\begin{split}
    \delta_n(s, a) & \doteq \tilde{R}_n(s, a) - \bar{R}_n + \max_{a'} Q_n(S_n'(s, a), a') - Q_n(s, a)\\
    & = f(R_n(s, a), Z_{1, n}, Z_{2, n}, \ldots, Z_{k, n}) - \bar{R}_n + \max_{a'} Q_n(S_n'(s, a), a') - Q_n(s, a),
\end{split}\label{async_q_td_error_eqn}
\end{align}
and,
\begin{align}
    \beta_{i, n}(s, a) & \doteq \phi_{i, n}(s, a) -  Z_{i, n}, \quad \forall z_i \in \mathcal{Z}. \label{async_q_red_td_error_eqn}
\end{align}

Here, \(\bar{R}_n\) denotes the estimate of the average-reward (see Equation 
\eqref{eq_avg_reward_2}), \(\delta_n(s, a)\) denotes the TD error, \(\phi_{i, n}(s, a)\) denotes the (potentially-piecewise) subtask target, as defined in Section \ref{reward_extended_td}, and \(\alpha_{\nu(n, s, a)}\), \(\alpha_{\bar{r}, \nu(n, s, a)}\), and \(\alpha_{z_i, \nu(n, s, a)}\) denote the step sizes at time step \(n\) for state-action pair \((s, a)\). 

We now show that the RED Q-learning algorithm is a special case of the General RED Q algorithm. Consider a sequence of experience from our MDP, \(\mathcal{\tilde{M}}\): \(S_t, A_t, \tilde{R}_{t+1}, S_{t+1}, \ldots\)\, . Now recall the set-valued process \(\{Y_n\}\). If we let \(n\) = time step \(t\), we have: 
\begin{align*}
Y_t(s, a) = 
\begin{cases}
    1, s = S_t\text{ and }a = A_t,\\
    0, \text{ otherwise,}
\end{cases}
\end{align*}
as well as \(S'_n(S_t, A_t) = S_{t+1}\), \(R_n(S_t, A_t) = R_{t+1}\), and \(\tilde{R}_n(S_t, A_t) = \tilde{R}_{t+1}\).\\

Hence, update rules \eqref{async_q_value_update_eqn}, \eqref{async_q_r_bar_update_eqn}, \eqref{async_q_z_update_eqn}, \eqref{async_q_td_error_eqn}, and \eqref{async_q_red_td_error_eqn} become:
\begin{align}
    Q_{t+1}(S_t, A_t) & \doteq Q_t (S_t, A_t) + \alpha_{\nu(t, S_t, A_t)} \delta_t \text{; \; } Q_{t+1}(s, a) \doteq Q_t (s, a), \forall s \neq S_t, a \neq A_t, \\
    \bar{R}_{t+1} & \doteq \bar{R}_t + \alpha_{\bar{r}, \nu(t, S_t, A_t)} \delta_t,\\
    Z_{i,t+1} & \doteq Z_{i, t} + \alpha_{z_i, \nu(t, S_t, A_t)} \beta_{i, t}, \quad \forall z_i \in \mathcal{Z},\\
    \begin{split}
    \delta_t & \doteq \tilde{R}_{t+1} - \bar{R}_t + \max_{a'} Q_t (S_{t+1}, a') - Q_t (S_t, A_t),\\
    & = f(R_{t+1}, Z_{1, t}, Z_{2, t}, \ldots, Z_{k, t}) - \bar{R}_t + \max_{a'} Q_t (S_{t+1}, a') - Q_t (S_t, A_t),
    \end{split}\\
    \beta_{i, t} & \doteq \phi_{i, t} - Z_{i, t}, \quad \forall z_i \in \mathcal{Z},
\end{align}

which are RED Q-learning's update rules with \(\alpha_{\nu(t, S_t, A_t)}\), \(\alpha_{\bar{r}, \nu(t, S_t, A_t)}\), and \(\alpha_{z_i, \nu(t, S_t, A_t)}\) denoting the step sizes at time \(t\).\\

We now specify the assumptions specific to General RED Q that are needed to ensure convergence. We refer the reader to \citet{Wan2021-re} for an in-depth discussion on these assumptions:\\

\begin{assumption}[Communicating Assumption] \label{assumption_communicating}
The MDP has a single communicating class. That is, each state in the MDP is accessible from every other state under some deterministic stationary policy.\\
\end{assumption}

\begin{assumption}[State-Action Value Function Uniqueness] \label{assumption_action_value_function_uniqueness}
There exists a unique solution of \(q\) only up to a constant in the Bellman equation \eqref{eq_avg_reward_5}.\\
\end{assumption}

\begin{assumption}[Asynchronous Step Size Assumption 3] \label{assumption_async_step_size_q_2}
There exists \(\Delta > 0\) such that 
\begin{align*}
    \liminf_{n \to \infty} \frac{\nu(n, s, a)}{n+1} \geq \Delta,
\end{align*}
a.s., for all \(s \in \mathcal{S}, a \in \mathcal{A}\).\\

Furthermore, for all \(x > 0\), and
\begin{align*}
    N(n, x) = \min \Bigg \{m > n: \sum_{i = n+1}^m \alpha_i \geq x \Bigg \},
\end{align*}
the limit 
\begin{align*}
    \lim_{n \to \infty} \frac{\sum_{i = \nu(n, s, a)}^{\nu(N(n, x), s, a)} \alpha_i}{\sum_{i = \nu(n, s', a')}^{\nu(N(n, x), s', a')} \alpha_i}
\end{align*}

exists a.s. for all \(s, s', a, a'\).\\
\end{assumption}

Having stated the necessary assumptions, we next point out that it is easy to verify that under Assumption~\ref{assumption_communicating}, the following system of equations:
\begin{align}
    \begin{split}
    q_*(s, a) & = \sum_{s', \tilde{r}} \tilde{p}(s', \tilde{r} \mid s, a) (\tilde{r} - \bar{r}_* + \max_{a'} q_*(s, a)), \quad \forall s \in \mathcal{S}, a \in \mathcal{A}\\
    & = \sum_{s', r} p(s', r \mid s, a) (f(r, z_{1_*}, z_{2_*}, \ldots, z_{k_*}) - \bar{r}_* + \max_{a'} q_*(s, a)), \quad \forall s \in \mathcal{S}, a \in \mathcal{A},\\
    \end{split}\label{q_system_of_eqns}
\end{align}
and,
\begin{align}
    \bar{r}_* - \bar{R}_0 & = \eta_{_r} \left(\sum q_* - \sum Q_0 \right),  \label{q_r_bar_sys_eqns}\\
    z_{i_*} - Z_{i,0} & = \eta_{_i} \left(\sum q_* - \sum Q_0\right), \quad \forall z_i \in \mathcal{Z}, \label{q_z_sys_eqns}
\end{align}
has a unique solution for \(q_*\), where \(\bar{r}_*\) denotes the optimal average-reward, and \(z_{i_*}\) denotes the corresponding optimal subtask value for subtask \(z_i \in \mathcal{Z}\). Denote this unique solution solution of \(q_*\) as \(q_\infty\).\\

We are now ready to state the convergence theorem:\\

\begin{theorem}[Convergence of General RED Q]\label{theorem_convergence_of_red_q_update}
If Assumptions~\ref{assumption_step_size}, \ref{assumption_async_step_size_1}, 
\ref{assumption_r_step_size},
\ref{assumption_subtask_function}, \ref{assumption_subtask_independence},
\ref{assumption_subtask_unique},
\ref{assumption_subtask_stepsize},
\ref{assumption_communicating}, \ref{assumption_action_value_function_uniqueness}, and \ref{assumption_async_step_size_q_2} hold, then the General RED Q algorithm (Equations~\ref{async_q_value_update_eqn}--\ref{async_q_red_td_error_eqn}) converges a.s. \(\bar{R}_n\) to \(\bar{r}_*\), \(Z_{i, n}\) to \(z_{i_*} \; \forall z_i \in \mathcal{Z}\), \(\bar{r}_{\pi_t}\) to \(\bar{r}_*\), \(z_{i, \pi_t}\) to \(z_{i_*}  \; \forall z_i \in \mathcal{Z}\), and \(Q_n\) to \(q_\infty\), where \(\pi_t\) is any greedy policy with respect to \(Q_t\), and \(z_{i, \pi_t}\) denotes the subtask value induced by following policy \(\pi_t\).
\end{theorem}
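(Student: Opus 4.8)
The plan is to mirror the structure of the proof of Theorem \ref{theorem_convergence_of_red_td_update}, splitting into the \emph{linear} and \emph{piecewise linear} cases, and in each case first importing the convergence of $Q_n$, $\bar{R}_n$, and $\bar{r}_{\pi_t}$ from the General Differential Q convergence theorem of \citet{Wan2021-re}, then bootstrapping to the subtask estimates. First I would note that General RED Q is a special case of General Differential Q (already verified above). For the linear case, Assumptions \ref{assumption_r_step_size} and \ref{assumption_subtask_stepsize} make the increments to $\bar{R}_n$ and to each $Z_{i,n}$ equal to fixed, subtask-specific multiples of the increment to $\sum Q_n$, so, exactly as in Equations \eqref{td_r_bar_incremental_1} and \eqref{td_z_incremental_1}, we can write $\bar{R}_n = \eta_{_r}\sum Q_n - c_r$ and $Z_{i,n} = \eta_{_i}\sum Q_n - c_i$. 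Substituting these into the Q-update \eqref{async_q_value_update_eqn}, the subtask function collapses to $b_r R_n(s,a) + \eta_{_f}\sum Q_n - c_f$, and the resulting recursion has the form of General Differential Q from \citet{Wan2021-re} with a shifted reward $\widehat{R}_n(s,a) \doteq b_r R_n(s,a) + c_{_T}$. Their theorem then gives $Q_n \to q_\infty$, $\bar{R}_n \to \bar{r}_*$, and $\bar{r}_{\pi_t} \to \bar{r}_*$ almost surely.

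To obtain the subtask convergence in the linear case, I would introduce the auxiliary MDP $\hat{\mathcal{M}} \doteq \langle \mathcal{S}, \mathcal{A}, \hat{\mathcal{R}}, \hat{p}\rangle$ with rewards $\hat{R}_n(s,a) \doteq b_r R_n(s,a) + \eta_{_f}\sum Q_n = b_r R_n(s,a) + b_1\hat{Z}_{1,n} + \cdots + b_k\hat{Z}_{k,n}$ and transitions inherited from $\mathcal{M}$ (hence $\hat{p} = \tilde{p}$ by Definition \ref{definition_1}), exactly as in Equations \eqref{eqn_r_hat_td_1}--\eqref{td_z_probs_r_hat}. One checks the communicating assumption carries over to $\hat{\mathcal{M}}$, that $\hat{\bar{r}}_* = \bar{r}_* - c_{_T}$, and that $q_\infty$ solves the Bellman optimality equation for $\hat{\mathcal{M}}$ as well. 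Writing the optimal subtask value for $\hat{\mathcal{M}}$ as $\hat{z}_{i_*} = z_{i_*} + c_i$ and combining with the Q-learning analogues of Equations \eqref{td_z_sys_eqns} and \eqref{td_z_incremental_1} gives $\hat{z}_{i_*} = \eta_{_i}\sum q_\infty$; then $Q_n \to q_\infty$ together with $Z_{i,n} = \eta_{_i}\sum Q_n - c_i$ yields $Z_{i,n} \to \eta_{_i}\sum q_\infty - c_i = \hat{z}_{i_*} - c_i = z_{i_*}$. Finally, since the greedy policies satisfy $\bar{r}_{\pi_t} \to \bar{r}_*$ and $f$ is (piecewise) linear, invertible, and has subtasks independent of states and actions (Definition \ref{definition_1}), the induced subtask values $z_{i,\pi_t}$, which are pinned down from $\bar{r}_{\pi_t}$ and the other $z_{j,\pi_t}$ through the linear relation of Equation \eqref{eq_red_4_6}, converge to $z_{i_*}$ as well.

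For the piecewise linear case I would reuse the multiple-timescales machinery of Section \ref{proof_v_piecewise}. Assumption \ref{assumption_subtask_stepsize} separates the timescales of $Q_n$, $\bar{R}_n$, and the $Z_{i,n}$'s, so (i) with the subtasks frozen, \eqref{async_q_value_update_eqn} is of the form of General Differential Q and has $q_\infty$ as a globally asymptotically stable equilibrium (the Q-analogue of Lemma \ref{lemma_td_1}); (ii) each synchronous subtask update has $z_{i_*}$ as its unique globally asymptotically stable equilibrium — one verifies this with the Lyapunov function $L(Z_i) = \tfrac{1}{2}(Z_i - z_{i_*})^2$ exactly as in Lemma \ref{lemma_td_2}, using that $h(Q_n) \to 0$ forces $\delta_n(s,a) \to \lambda_j$ per piecewise segment and that Assumptions \ref{assumption_communicating} and \ref{assumption_action_value_function_uniqueness} make $z_{i_*}$ unique; and (iii) the iterates stay bounded, via Theorem 7 of Section 3 of \citet{Borkar2009-sr}, checking (A1)--(A4) as in Lemma \ref{lemma_td_3} — here the martingale-difference verification is actually simpler since the Q-update carries no importance-sampling ratio. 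Theorem 2 of Section 6 of \citet{Borkar2009-sr} then gives $Z_{i,n} \to z_{i_*}$ a.s., and the convergence of $\bar{r}_{\pi_t}$ and $z_{i,\pi_t}$ follows as in the linear case.

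The main obstacle I anticipate is the control-specific complication that the ``target policy'' is the moving greedy policy $\pi_t$ rather than a fixed $\pi$: the limiting residuals $\lambda_j$ of the per-segment TD errors, and hence the subtask targets $\phi_{i,n}$, are defined relative to a changing policy, so care is needed to argue that $Q_n \to q_\infty$ (together with the attendant Bellman-optimality identity) pins down a single limiting $\lambda_j$ for each segment and thereby a single equilibrium $z_{i_*}$ for the subtask ODEs — and, separately, to establish the \emph{dual} convergence statement $z_{i,\pi_t} \to z_{i_*}$ for the induced values, not just $Z_{i,n} \to z_{i_*}$ for the running estimates. Both of these parallel subtleties that \citet{Wan2021-re} already handle for $\bar{r}_{\pi_t}$, so the work is mainly to check that their arguments transfer under Definition \ref{definition_1}; the rest is a routine adaptation of Sections \ref{proof_v_linear} and \ref{proof_v_piecewise}.
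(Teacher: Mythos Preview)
Your proposal is correct and follows essentially the same approach as the paper: the linear case is handled by expressing $\bar{R}_n$ and each $Z_{i,n}$ as affine functions of $\sum Q_n$, reducing to General Differential Q, and then passing through the auxiliary MDP $\hat{\mathcal{M}}$ to extract $Z_{i,n} \to z_{i_*}$; the piecewise linear case uses the same multiple-timescales scaffolding (Lemmas analogous to \ref{lemma_td_1}--\ref{lemma_td_3}, the Lyapunov function $L(Z_i) = \tfrac{1}{2}(Z_i - z_{i_*})^2$, and Borkar's Theorems 7 and 2). The concern you flag about the moving greedy policy is handled in the paper exactly as you anticipate --- by deferring to the Bellman-optimality identity and the corresponding results in \citet{Wan2021-re} --- so there is no additional machinery beyond what you outline.
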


We prove this theorem in Sections \ref{proof_q_linear} and \ref{proof_q_piecewise}. To do so, we first show that General RED Q is of the same form as \emph{General Differential Q} from \citet{Wan2021-re}, thereby allowing us to apply their convergence results for the value function and average-reward estimates of General Differential Q to General RED Q. We then build upon these results, using similar techniques as \citet{Wan2021-re}, to show that the subtask estimates converge as well.

\subsubsection{Proof of Theorem \ref{theorem_convergence_of_red_q_update} (for \emph{Linear} Subtask Functions)}
\label{proof_q_linear}

We first provide the proof for \emph{linear} subtask functions, where the reward-extended TD error can be expressed as a constant, subtask-specific fraction of the regular TD error, such that \(\beta_{i,n}(s, a) = (-1/b_i)\delta_n(s, a)\). We consider the \emph{piecewise linear} case in Section \ref{proof_q_piecewise}.\\

\textbf{Convergence of the value function and average-reward estimates:}

Consider the increment to \(\bar{R}_n\) at each step. Given Assumption \ref{assumption_r_step_size}, we can see from Equation \eqref{async_q_r_bar_update_eqn} that the increment is \(\eta_{_r}\) times the increment to \(Q_n\). As such, as was done in \citet{Wan2021-re}, we can write the cumulative increment as follows:
\begin{align}
    \bar{R}_n - \bar{R}_0 &= \eta_{_r}  \sum_{j = 0}^{n-1} \sum_{s, a} \alpha_{\nu(j, s, a)} \delta_j(s, a) I\{(s, a) \in Y_j\} \nonumber  \\\nonumber \\
    & = \eta_{_r} \left(\sum Q_n - \sum Q_0 \right) \nonumber \\\nonumber \\
    \implies \bar{R}_n &= \eta_{_r} \sum Q_n - \eta_{_r} \sum Q_0 + \bar{R}_0 = \eta_{_r} \sum Q_n - c_r \label{q_r_bar_incremental_1},\\\nonumber \\
    \text{ where } c_r &\doteq \eta_{_r} \sum Q_0 - \bar{R}_0. \label{q_r_bar_incremental_2}
\end{align}

Similarly, consider the increment to \(Z_{i, n}\) (for an arbitrary subtask \(z_i \in \mathcal{Z}\)) at each step. As per Remark \ref{remark_1}, and given Assumption \ref{assumption_r_step_size}, we can write the increment in Equation \eqref{async_q_z_update_eqn} as some constant, subtask-specific fraction of the increment to \(Q_n\). Consequently, we can write the cumulative increment as follows:

\begin{align}
    Z_{i, n} - Z_{i, 0} &= \eta_{z_i}  \sum_{j = 0}^{n-1} \sum_{s, a} \alpha_{\nu(j, s, a)} \beta_{i, j}(s, a) I\{(s, a) \in Y_j\} \nonumber  \\\nonumber \\
    &= \eta_{z_i}  \sum_{j = 0}^{n-1} \sum_{s, a} \alpha_{\nu(j, s, a)} (-1/b_i) \delta_j(s, a)I\{(s, a) \in Y_j\} \nonumber \\\nonumber \\
    & = \eta_{_i} \left(\sum Q_n - \sum Q_0 \right) \nonumber \\\nonumber \\
    \implies Z_{i, n} &= \eta_{_i} \sum Q_n - \eta_{_i} \sum Q_0 + Z_{i, 0} = \eta_{_i} \sum Q_n - c_i \label{q_z_incremental_1},
\end{align}
where,
\begin{align}
    c_i &\doteq \eta_{_i} \sum Q_0 - Z_{i, 0}, \text{ and} \label{q_z_incremental_2}\\\nonumber\\
    \eta_{_i} &\doteq (-1/b_i)\eta_{z_i}. \label{q_defn_eta_n_subtask}
\end{align}

Now consider the subtask function, \(f\). At any given time step, the subtask function can be written as: \(f_n = \tilde{R}_n(s, a) = b_rR_n(s,a) + b_0 +b_1Z_{1, n} + \ldots +b_kZ_{k,n}\), where \(b_r, b_0 \in \mathbb{R}\) and \(b_i \in \mathbb{R}\setminus{\{0\}}\). Given Equation \eqref{q_z_incremental_1}, we can write the subtask function as follows: 
\begin{align}
    f_n &= b_rR_n(s,a) + b_0 +b_1(\eta_{_1} \sum Q_n - c_1) + \ldots +b_k(\eta_{_k} \sum Q_n - c_k)\nonumber\\
    \label{q_z_incremental_f}
    &= b_rR_n(s,a) + \eta_{_f} \sum Q_n - c_f,
\end{align}
where, \(\eta_{_f} = \sum_{j=1}^{k}b_j\eta_{_j}\) and \(c_f = \sum_{j=1}^{k}b_jc_j - b_0\).\\

As such, we can substitute \(\bar{R}_n\) and \(Z_{i,n} \; \forall z_i \in \mathcal{Z}\) in \eqref{async_q_value_update_eqn} with \eqref{q_r_bar_incremental_1} and \eqref{q_z_incremental_f}, respectively, \(\forall s \in \mathcal{S}, a \in \mathcal{A}\), which yields: 
\begin{align}
    \begin{split}
    & Q_{n+1}(s, a) = Q_{n}(s, a) + \,\ldots\\
    & \quad \alpha_{\nu(n, s, a)} \left(b_rR_n(s,a) + \max_{a'} Q_n(S_n'(s, a), a') - Q_n(s, a) - \eta_{_r} \sum Q_n + c_r + \eta_{_f} \sum Q_n - c_f\right) I\{(s, a) \in Y_n\} \nonumber\\
    \\
    & Q_{n+1}(s, a) = Q_{n}(s, a) + \,\ldots\\
    & \quad \alpha_{\nu(n, s, a)} \left(b_rR_n(s,a) + \max_{a'} Q_n(S_n'(s, a), a') - Q_n(s, a) - \eta_{_T} \sum Q_n + c_{_T} \right) I\{(s, a) \in Y_n\} \nonumber\\
    \\
    & Q_{n+1}(s, a) = Q_{n}(s, a) + \,\ldots\\
    & \quad \alpha_{\nu(n, s, a)} \left(\widehat{R}_n(s, a) + \max_{a'} Q_n(S'_n(s, a), a') - Q_n(s, a) - \eta_{_T} \sum Q_n \right) I\{(s, a) \in Y_n\}, \label{q_r_bar_shifted_by_c}
    \end{split}\\
\end{align}
where \(\eta_{_T} = \eta_{_r} - \eta_{_f}\), \(c_{_T} = c_r - c_f\), and \(\widehat{R}_n(s, a) \doteq b_rR_n(s, a) + c_{_T}\). 

\newpage

Equation \eqref{q_r_bar_shifted_by_c} is now in the same form as Equation (B.14) (i.e., General Differential Q) from \citet{Wan2021-re}, who showed that the equation converges a.s. \(Q_n\) to \(q_\infty\) as \( n \to \infty\). Moreover, from this result, \citet{Wan2021-re} showed that \(\bar{R}_n\) converges a.s. to \(\bar{r}_*\) as \( n \to \infty\), and that \(\bar{r}_{\pi_t}\) converges a.s. to \(\bar{r}_*\), where \(\pi_t\) is a greedy policy with respect to \(Q_t\). Given that General RED Q adheres to all the assumptions listed for General Differential Q in \citet{Wan2021-re}, these convergence results apply to General RED Q.\\

\textbf{Convergence of the subtask estimates:}

Consider Equation \eqref{q_r_bar_shifted_by_c}. We can rewrite this equation, \(\forall s \in \mathcal{S}, a \in \mathcal{A}\), as follows:
\begin{align}
\begin{split}
    & Q_{n+1}(s, a) = Q_{n}(s, a) + \,\ldots\\
    & \quad \alpha_{\nu(n, s, a)} \left(b_rR_n(s,a) + \eta_{_f} \sum Q_n + c_{_T} - \eta_{_r} \sum Q_n + \max_{a'} Q_n(S_n'(s, a), a') - Q_n(s, a) \right) I\{(s, a) \in Y_n\} \nonumber\\\\
    & Q_{n+1}(s, a) = Q_{n}(s, a) + \,\ldots\\
    & \quad \alpha_{\nu(n, s, a)} \left(\hat{R}_n(s,a) + c_{_T} - \eta_{_r} \sum Q_n + \max_{a'} Q_n(S'_n(s, a), a') - Q_n(s, a)\right) I\{(s, a) \in Y_n\}, \nonumber \label{q_r_hat}
    \end{split}\\
\end{align}
where,
\begin{align}
\hat{R}_n(s, a) &\doteq b_rR_n(s, a) + \eta_{_f} \sum Q_n \label{eqn_r_hat_q_1}\\
&= b_rR_n(s,a) + b_1(\eta_{_1} \sum Q_n) + \ldots +b_k(\eta_{_k} \sum Q_n) \label{eqn_r_hat_q_2}\\
&\doteq b_rR_n(s,a) + b_1\hat{Z}_{1, n} + \ldots +b_k\hat{Z}_{k, n}. \label{eqn_r_hat_q_3}
\end{align} 

Now consider an MDP, \(\hat{\mathcal{M}}\), which has rewards, \(\mathcal{\hat{R}}\), as defined in Equation \eqref{eqn_r_hat_q_1}, has the same state and action spaces as \(\mathcal{\tilde{M}}\), and has the transition probabilities defined as:
\begin{align}
    \hat{p}(s', \hat{r} \mid s, a) &\doteq p(s', r \mid s, a)\\
    & = \tilde{p}(s', \tilde{r} \mid s, a) \quad \text{(by Definition \ref{definition_1})}, 
    \label{q_z_probs_r_hat}
\end{align}
such that \(\hat{\mathcal{M}} \doteq \langle\mathcal{S}, \mathcal{A}, \mathcal{\hat{R}}, \hat{p}\rangle\). It is easy to check that the communicating assumption holds for the MDP, \(\hat{\mathcal{M}}\). Moreover, given Equation \eqref{q_r_hat} and Assumptions \ref{assumption_subtask_function} and \ref{assumption_subtask_independence}, the optimal average-reward for the MDP, \(\hat{\mathcal{M}}\), \(\hat{\bar{r}}_*\), can be written as follows:
\begin{align}
    \hat{\bar{r}}_* = \bar{r}_* - c_{_T}. \label{q_z_star_modified_vs_z_star}
\end{align}

Now, because
\begin{align}
    q_\infty(s, a)  & = \sum_{s', \tilde{r}} \tilde{p}(s', \tilde{r} \mid s, a) (\tilde{r} - \bar{r}_* + \max_{a'} q_\infty (s', a')) \quad \text{(from \eqref{q_system_of_eqns})} \nonumber\\
    & = \sum_{s', \tilde{r}} \tilde{p}(s', \tilde{r} \mid s, a) (\tilde{r} - (\hat{\bar{r}}_* + c_{_T}) + \max_{a'} q_\infty (s', a')) \quad \text{(from \eqref{q_z_star_modified_vs_z_star})} \nonumber\\
    & = \sum_{s', \tilde{r}} \tilde{p}(s', \tilde{r} \mid s, a) (\tilde{r} - c_{_T} - \hat{\bar{r}}_* + \max_{a'} q_\infty (s', a'))\nonumber\\
    & = \sum_{s', \tilde{r}} \tilde{p}(s', \tilde{r} \mid s, a) (\hat{r} - \hat{\bar{r}}_* + \max_{a'} q_\infty (s', a')) \quad \text{(from \eqref{q_r_hat})} \nonumber\\
    & = \sum_{s', \hat{r}} \hat{p}(s', \hat{r} \mid s, a) (\hat{r} - \hat{\bar{r}}_* + \max_{a'} q_\infty (s', a')) \quad \text{(from \eqref{q_z_probs_r_hat})},\nonumber
\end{align}

we can see that \(q_\infty\) is a solution of not just the state-action value Bellman optimality equation for the MDP, \(\mathcal{\tilde{M}}\), but also the state-action value Bellman optimality equation for the MDP, \(\hat{\mathcal{M}}\).

Next, consider an arbitrary \(i\)th subtask. As per Equations \eqref{eqn_r_hat_q_2} and \eqref{eqn_r_hat_q_3}, we can write the optimal subtask value for the MDP, \(\hat{\mathcal{M}}\), \(\hat{z}_{i_*}\), as follows:
\begin{align}
    \hat{z}_{i_*} = z_{i_*} + c_i \label{q_z_star_shifted_vs_z_star}.
\end{align}

We can then combine Equations \eqref{q_z_sys_eqns}, \eqref{q_z_incremental_1}, and \eqref{q_z_star_shifted_vs_z_star}, which yields:
\begin{align}
    \hat{z}_{i_*} = \eta_{_i} \sum q_\infty \label{q_z_star_shifted_vs_q_infty}.
\end{align}

Next, we can combine Equation \eqref{q_z_incremental_1} with the result from \citet{Wan2021-re} which shows that \(Q_n \to q_\infty\), which yields: 
\begin{align}
    Z_{i, n} \to \eta_{_i} \sum q_\infty - c_i \label{q_z_convergence_final_1}.
\end{align}

Moreover, because \(\eta_{_i} \sum q_\infty = \hat{z}_{i_*}\) (Equation \eqref{q_z_star_shifted_vs_q_infty}), we have:
\begin{align}
    Z_{i, n} \to \hat{z}_{i_*} - c_i \label{q_z_convergence_final_2}.
\end{align}

Finally, because \(\hat{z}_{i_*} = z_{i_*} + c_i\) (Equation \eqref{q_z_star_shifted_vs_z_star}), we have: 
\begin{align} \label{q_z_convergence_final_3}
    Z_{i, n} \to z_{i_*} \text{\ \  a.s. as \ \  } n \to \infty.
\end{align}

We conclude by considering \(z_{i, \pi_t} \; \forall z_i \in \mathcal{Z}\), where \(\pi_t\) is a greedy policy with respect to \(Q_t\). Given that \(Q_t \to q_\infty\) and \(\bar{r}_{\pi_t} \to \bar{r}_*\), almost surely, it directly follows from Definition \ref{definition_1} that \(z_{i, \pi_t} \to z_{i_*} \; \forall z_i \in \mathcal{Z}\), almost surely.

\vspace{12pt}

\subsubsection{Proof of Theorem \ref{theorem_convergence_of_red_q_update} (for \emph{Piecewise Linear} Subtask Functions)}
\label{proof_q_piecewise}

We now provide the proof for \emph{piecewise linear} subtask functions, where the reward-extended TD error can be expressed as follows: 
\begin{equation}
\beta_{i,n}(s, a) = 
\begin{cases} 
(-1/b^{1}_i)\left(\tilde{R}_{1,n}(s, a) - \bar{R}_n - \delta_n(s,a)\right), \,\ r_0 \leq R_{n}(s, a) < r_1 \\
\vdots \\
(-1/b^{m}_i)\left(\tilde{R}_{m,n}(s, a) - \bar{R}_n - \delta_n(s,a)\right), \,\ r_{m-1} \leq R_{n}(s, a) \leq r_m \nonumber
\end{cases},
\end{equation}
where \(r_u \in \mathcal{R} \; \forall \, u = 0, 1, \ldots, m\), and \(r_0 \leq r_1 \leq \ldots \leq r_m\), such that \(r_0, r_m\) represent the lower and upper bounds of the observed per-step reward, \(R_{n}(s, a)\), respectively. Our general strategy in this case is to use a \emph{multiple-timescales} argument, such that we leverage Theorem 2 in Section 6 of \citet{Borkar2009-sr}, along with the results from Theorems B.1 and B.2 of \citet{Wan2021-re}.

To begin, let us consider Assumption \ref{assumption_subtask_stepsize}, which enables the formulation of a multiple-timescales argument. In particular, the \(\alpha_{z_1, n}/ \alpha_n \to 0\), \(\alpha_{z_1, n}/ \alpha_{\bar{r},n} \to 0\), and \(\{\alpha_{z_i, n} / \alpha_{z_{i - 1}, n} \to 0\}_{i=2}^{k}\) conditions imply that the subtask step sizes, \(\{\alpha_{z_i, n}\}_{n=0}^\infty \, \forall z_i \in \mathcal{Z}\), decrease to 0 at faster rates than the value function and average-reward step sizes, \(\{\alpha_n\}_{n=0}^\infty\) and \(\{\alpha_{\bar{r},n}\}_{n=0}^\infty\), respectively. This implies that the subtask updates move on slower timescales compared to the value function and average-reward updates. Hence, as argued in Section 6 of \citet{Borkar2009-sr}, the (faster) value function and average-reward updates, \eqref{async_q_value_update_eqn} and \eqref{async_q_r_bar_update_eqn}, view the (slower) subtask updates, \eqref{async_q_z_update_eqn}, as quasi-static, while the (slower) subtask updates view the (faster) value function and average-reward updates as nearly equilibrated (as we will show below, the results from \citet{Wan2021-re} imply the existence of such an equilibrium point). Similarly, the \(\{\alpha_{z_i, n} / \alpha_{z_{i - 1}, n} \to 0\}_{i=2}^{k}\) condition implies that each subtask update views the other subtask updates as either quasi-static or nearly-equilibrated.

As such, having established the multiple-timescales argument, we now proceed to show the convergence of the value function, average-reward, and subtask estimates:

\newpage

\textbf{Convergence of the value function and average-reward estimates:}

Given the multiple-timescales argument, such that the subtask estimates are viewed as quasi-static (i.e., constant), Equation \eqref{async_q_value_update_eqn} can be viewed as being of the same form as Equation (B.4) (i.e., General Differential Q) from \citet{Wan2021-re}, who showed (via Theorem B.2) that the equation converges a.s. \(Q_n\) to \(q_\infty\) as \( n \to \infty\). Moreover, from this result, \citet{Wan2021-re} showed that \(\bar{R}_n\) converges a.s. to \(\bar{r}_*\) as \( n \to \infty\), and that \(\bar{r}_{\pi_t}\) converges a.s. to \(\bar{r}_*\), where \(\pi_t\) is a greedy policy with respect to \(Q_t\). Given that General RED Q adheres to all the assumptions listed for General Differential Q in \citet{Wan2021-re}, these convergence results apply to General RED Q.\\

\textbf{Convergence of the subtask estimates:}

Let us consider the asynchronous subtask updates \eqref{async_q_z_update_eqn}. Each update in \eqref{async_q_z_update_eqn} is of the same form as Equation 7.1.2 of \citet{Borkar2009-sr}. Accordingly, to show the convergence of the subtask estimates, we can apply the result in Section 7.4 of \citet{Borkar2009-sr}, which shows the convergence of asynchronous updates that are of the same form as Equation 7.1.2. To apply this result, given Assumptions \ref{assumption_async_step_size_1} and \ref{assumption_async_step_size_q_2}, we only need to show the convergence of the \emph{synchronous} version of the subtask updates:
\begin{equation}
\label{sync_z_update_q}
Z_{i, n+1} = Z_{i, n} + \alpha_{z_i, n} \left(g_i(Z_{i, n}) + M^{z_i}_{n+1}\right) \; \forall z_i \in \mathcal{Z},
\end{equation}
where, 
\begin{align*}
& g_i(Z_{i, n})(s,a) \doteq \sum_{s', r} p(s', r \mid s, a)\phi_{i,n}(s,a) - Z_{i, n},\\
& \phi_{i,n}(s,a) \doteq
\begin{cases}
-\frac{1}{b^{1}_i}\left(b^{1}_rR_n(s, a) + \ldots + b^{1}_{i-1}Z_{i-1,n} + b^{1}_{i+1}Z_{i+1,n} + \ldots + b^{1}_kZ_{k,n} - \bar{R}_n - \delta_n(s,a)\right), \ldots\\
\hspace{1.5cm} \ldots, r_0 \leq R_n(s, a) < r_1, \\
\vdots \\
-\frac{1}{b^{m}_i}\left(b^{m}_rR_n(s, a) + \ldots + b^{m}_{i-1}Z_{i-1,n} + b^{m}_{i+1}Z_{i+1,n} + \ldots + b^{m}_kZ_{k,n} - \bar{R}_n - \delta_n(s,a)\right), \ldots\\
\hspace{1.5cm} \ldots, r_{m-1} \leq R_n(s, a) \leq r_m,
\end{cases},\\
& M^{z_i}_{n+1}(s,a) \doteq \left(\phi_{i,n}(s,a) - Z_{i,n}\right) - g_i(Z_{i, n})(s,a).
\end{align*}

To show the convergence of the synchronous update \eqref{sync_z_update_q} under the multiple-timescales argument, we can apply the result of Theorem 2 in Section 6 of \citet{Borkar2009-sr} to show that \(Z_{i,n} \to z_{i_*} \forall z_i \in \mathcal{Z}\) a.s. as \(n \to \infty\). This theorem requires that 3 assumptions be satisfied. As such, we will now show, via Lemmas \ref{lemma_q_1} - \ref{lemma_q_3}, that these 3 assumptions are indeed satisfied:\\

\begin{lemma}
\label{lemma_q_1}
The value function update, \( Q_{n+1} = Q_{n} + \alpha_n (h(Q_n) + M_{n+1})\), where
\begin{align*}
h(Q_n)(s,a) & \doteq \sum_{s', \tilde{r}} \tilde{p}(s', \tilde{r} \mid s, a) (\tilde{r} - \bar{R}_n + \max_{a'} Q_n(s', a') - Q_n(s,a)), \\\nonumber
& = \sum_{s', r} p(s', r \mid s, a) (f(r, Z_{1,n}, Z_{2,n}, \ldots, Z_{k,n}) - \psi(Q_n) + \max_{a'} Q_n(s', a') - Q_n(s,a)), \\\nonumber
M_{n + 1}(s,a) & \doteq  \left(\tilde{R}_n(s, a) - \psi(Q_n) + \max_{a'} Q_n(S_n'(s, a), a') - Q_n(s,a) \right) - h(Q_n)(s,a),\\\nonumber
\psi(Q_n) & = \bar{R}_n \; \text{is a ‘reference function’ as defined in \citet{Wan2021-re}}, \text{ and}\\\nonumber
Z_{1,n}, Z_{2,n}&, ..., Z_{k,n} \; \text{are quasi-static under the multiple-timescales argument},
\end{align*}
has a globally asymptotically stable equilibrium, \(q_\infty(Z_{1,n}, Z_{2,n}, \ldots, Z_{k,n})\), where \(q_\infty\) is a Lipschitz map.
\end{lemma}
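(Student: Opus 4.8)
The plan is to recognize that Lemma~\ref{lemma_q_1} is the control-case counterpart of Lemma~\ref{lemma_td_1}, and, as with that lemma, to obtain it as an immediate consequence of a result already proved in \citet{Wan2021-re}. First I would use the multiple-timescales argument: the subtask estimates $Z_{1,n}, \ldots, Z_{k,n}$ are held fixed (quasi-static), so the extended reward $\tilde{R}_n(s,a) = f(R_n(s,a), Z_{1,n}, \ldots, Z_{k,n})$ collapses to a fixed, bounded reward function on $\tilde{\mathcal{M}}$. This uses that $f$ is linear or piecewise linear in its first argument for frozen subtask values (Assumption~\ref{assumption_subtask_function}), that $\mathcal{R}$ and each $\mathcal{Z}_i$ are bounded (Definition~\ref{definition_1}), and that Assumption~\ref{assumption_subtask_independence} preserves the transition structure, so that $\tilde{\mathcal{M}}$ is a bona fide MDP in the sense of \citet{Wan2021-re}. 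In particular, with the subtasks frozen, $\tilde{\mathcal{M}}$ inherits the communicating property (Assumption~\ref{assumption_communicating}) and the uniqueness-up-to-a-constant of the solution of its differential Bellman optimality equation (Assumption~\ref{assumption_action_value_function_uniqueness}).

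Next I would observe that the update $Q_{n+1} = Q_n + \alpha_n(h(Q_n) + M_{n+1})$, with $h$ and $M_{n+1}$ as stated and with reference function $\psi(Q_n) = \bar{R}_n$ (which, by the cumulative-increment identity derived earlier, is itself a functional of $Q_n$, consistent with \citet{Wan2021-re}'s notion of a reference function), is exactly the \emph{General Differential Q} update of \citet{Wan2021-re} (their Equation~(B.4)/(B.14)) instantiated on $\tilde{\mathcal{M}}$. The existence of a globally asymptotically stable equilibrium $q_\infty$ of the associated ODE $\dot{q} = h(q)$ then follows directly from Theorem~B.2 of \citet{Wan2021-re}, which establishes precisely this under the communicating and uniqueness assumptions that were just verified to hold for $\tilde{\mathcal{M}}$.

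It then remains to argue that $q_\infty$, as a function of the frozen values $(Z_{1,n}, \ldots, Z_{k,n})$, is a Lipschitz map, since this is what the two-timescale theorem (Theorem~2, Section~6 of \citet{Borkar2009-sr}) demands of the fast equilibrium. For each fixed observed reward $r$, the extended reward $f(r, Z_1, \ldots, Z_k)$ is affine in $(Z_1, \ldots, Z_k)$ within each of the finitely many pieces, with coefficients $b^j_i$ that do not depend on $r$; since the breakpoints $r_0, \ldots, r_m$ lie in reward space rather than subtask space, the map $(Z_1, \ldots, Z_k) \mapsto f(r, Z_1, \ldots, Z_k)$ is globally Lipschitz with a constant uniform over $r \in \mathcal{R}$. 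Composing this with the Lipschitz dependence of the solution of the average-reward Bellman optimality equation on its reward function (part of the analysis underlying Theorem~B.2 of \citet{Wan2021-re}, and otherwise obtainable from the contraction/interchange structure of the differential Bellman operator on a communicating MDP) yields the asserted Lipschitz dependence of $q_\infty$ on the subtasks.

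The main obstacle is bookkeeping rather than a new idea: one must carefully check, in the piecewise-linear case, that freezing the subtasks genuinely leaves a standard bounded-reward communicating MDP with $h$ depending on nothing but $Q_n$, and that $\psi(Q_n) = \bar{R}_n$ meets the hypotheses \citet{Wan2021-re} place on reference functions. Once those form-matching checks are in place, Lemma~\ref{lemma_q_1} is a direct corollary of Theorem~B.2 of \citet{Wan2021-re}, exactly as Lemma~\ref{lemma_td_1} was a direct corollary of their Theorem~B.3.
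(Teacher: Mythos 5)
Your proposal is correct and takes essentially the same route as the paper, which proves this lemma simply by deferring to Theorem B.2 of \citet{Wan2021-re} after the multiple-timescales argument freezes the subtask estimates. Your additional elaboration of the form-matching checks and the Lipschitz dependence of \(q_\infty\) on the frozen subtask values fills in details the paper leaves implicit, but it does not constitute a different approach.
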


\begin{proof}
This was shown in Theorem B.2 of \citet{Wan2021-re}.
\end{proof}

\newpage

\begin{lemma}
\label{lemma_q_2}
The subtask update rules \eqref{sync_z_update_q} each have a globally asymptotically stable equilibrium, \(z_{i_*}\).
\end{lemma}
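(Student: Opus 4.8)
The plan is to mirror the argument used for Lemma~\ref{lemma_td_2} in the prediction setting, substituting the Q-learning machinery for the TD-learning machinery. First I would invoke the multiple-timescales separation established above: order the subtasks by the rate at which their step sizes vanish, so that when analyzing the ``fastest'' subtask $Z_{1,n}$ we may treat $Z_{2,n},\ldots,Z_{k,n}$ as quasi-static constants and treat $Q_n$ and $\bar{R}_n$ as nearly equilibrated. By Lemma~\ref{lemma_q_1} (i.e., Theorem~B.2 of \citet{Wan2021-re}), this equilibrium exists: $Q_n \to q_\infty$ and $\bar{R}_n \to \bar{r}_*$, hence $h(Q_n)(s,a) \to 0$ for all $(s,a)$. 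As argued in Section~\ref{red}, $h(Q_n) \to 0$ forces $\delta_n(s,a) \to \lambda_j \in \mathbb{R}$ on each $j$th piecewise segment of $\phi_{1,n}$, so that $Z_{1,n}$ is the only non-static quantity in $g_1(Z_{1,n})$; combining the derivation of Section~\ref{reward_extended_td} with Assumption~\ref{assumption_subtask_independence} (to move $Z_{1,n}$ outside the expectation) gives $g_1(x) = \mathbb{E}_\pi[\phi_1] - x = z_{1_*} - x$. By Assumptions~\ref{assumption_communicating} and \ref{assumption_subtask_unique} (the optimality analog of \eqref{q_system_of_eqns}--\eqref{q_z_sys_eqns}), $z_{1_*}$ is the unique zero of $g_1$, hence the unique equilibrium of the ODE $\dot{x}_t = g_1(x_t)$.

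Next I would establish global asymptotic stability of this equilibrium via the Lyapunov function $L(Z_1) = \tfrac{1}{2}(Z_1 - z_{1_*})^2$, exactly as in Lemma~\ref{lemma_td_2}: $L$ is continuous, vanishes at $z_{1_*}$, is strictly positive elsewhere, and along any solution $\{x_t\}$ of $\dot{x}_t = g_1(x_t)$ satisfies $\frac{d}{dt}L(x_t) = (x_t - z_{1_*})\,g_1(x_t) = (x_t - z_{1_*})(z_{1_*} - x_t) = -(x_t - z_{1_*})^2 < 0$ whenever $x_t \neq z_{1_*}$, so $L$ strictly decreases along non-equilibrium trajectories. This certifies that $Z_{1,n}$ has the globally asymptotically stable equilibrium $z_{1_*}$.

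Finally I would induct up the timescale hierarchy: with $Z_{1,n} \to z_{1_*}$ fixed, the analysis of $Z_{2,n}$ is identical, treating $Q_n, \bar{R}_n, Z_{1,n}$ as nearly equilibrated and $Z_{3,n},\ldots,Z_{k,n}$ as quasi-static, and so on, giving each subtask update rule \eqref{sync_z_update_q} the globally asymptotically stable equilibrium $z_{i_*}$. The main obstacle I anticipate is purely bookkeeping: verifying carefully that, once the faster iterates equilibrate and the slower ones are frozen, the expected subtask target collapses to $z_{i_*}$ — this hinges on applying the $\lambda_j$-cancellation identity of Section~\ref{reward_extended_td} in its Bellman-\emph{optimality} form for $\mathcal{\tilde{M}}$ (rather than the Poisson-equation form used in the prediction case) together with Assumption~\ref{assumption_subtask_independence}; everything after that is a verbatim transcription of the Lyapunov computation from Lemma~\ref{lemma_td_2}.
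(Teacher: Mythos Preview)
Your proposal is correct and matches the paper's proof essentially step for step: the same multiple-timescales ordering, the same appeal to \citet{Wan2021-re} for $Q_n \to q_\infty$, $\bar{R}_n \to \bar{r}_*$, and $h(Q_n)\to 0$ (hence $\delta_n \to \lambda_j$ piecewise), the same Lyapunov function $L(Z_1)=\tfrac12(Z_1-z_{1_*})^2$ with the same derivative identity, and the same induction up the timescale hierarchy. The only cosmetic difference is that the paper additionally cites Assumption~\ref{assumption_action_value_function_uniqueness} alongside Assumptions~\ref{assumption_subtask_unique} and \ref{assumption_communicating} when asserting uniqueness of the equilibrium, and it presents the sign analysis of $\tfrac{d}{dt}L(x_t)$ as a two-case argument rather than writing $-(x_t-z_{1_*})^2$ directly.
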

\begin{proof}
Let us begin by considering the ‘fastest’ subtask, \(Z_{1,n}\), under the multiple-timescales argument, such that all other subtasks, \(Z_{2,n}, \ldots, Z_{k,n}\), operate on slower timescales and can be considered quasi-static. Applying the results of Theorems B.1 and B.2 of \citet{Wan2021-re} under the multiple-timescales argument, we have that: \(h(Q_n) \to 0\) and \(\bar{R}_n \to \bar{r}_{*}\) as \(Q_n \to q_\infty\). Importantly, as argued in Section \ref{red}, \(h(Q_n) \to 0\) implies that \(\delta_n(s,a) \to \lambda_j \in \mathbb{R}\) for each \(j\)th piecewise segment in \(\phi_{i,n}(s,a)\). As such, we can interpret \(Z_{1,n}\) as being the only ‘moving’ variable in \(g_1(Z_{1,n})\), such that all other parameters in the update are static, quasi-static, or nearly-equilibrated. 

Let us now consider the ODE associated with \(g_1(Z_{1,n})\), 
\begin{equation}
\label{eqn_q_z1_ode}
\dot{x}_t = g_1(x_t).
\end{equation} 
To show that the update rule for \(Z_{1,n}\) has a globally asymptotically stable equilibrium, \(z_{1_*}\), it suffices to show that there exists a \emph{Lyapunov function} for the associated ODE \eqref{eqn_q_z1_ode}. 

To this end, we first note, given the discussion in Section \ref{red}, that \(z_{1_*}\) is an equilibrium point for the update rule associated with \(Z_{1,n}\). Moreover, given Assumptions \ref{assumption_subtask_unique}, \ref{assumption_communicating}, and \ref{assumption_action_value_function_uniqueness}, we know that \(z_{1_*}\) is the unique equilibrium point.

We now show the existence of a Lyapunov function with respect to the aforementioned equilibrium point, \(z_{1_*}\). In particular, we consider the function, \(L\), defined by:
\[
L(Z_{1}) = \frac{1}{2}(Z_{1} - z_{1_*})^2.
\]

To establish that \(L\) is a Lyapunov function, we must show that:
\begin{enumerate}
\item \(L\) is continuous,
\item \(L(Z_{1}) = 0\) if \(Z_{1} = z_{1_*}\),
\item \(L(Z_1) > 0\) if \(Z_{1} \neq z_{1_*}\), and
\item For any solution \(\{x_t\}_{t \geq 0}\) of the associated ODE \eqref{eqn_q_z1_ode} and \(0 \leq s < t\), we have \(L(x_t) < L(x_s)\) for all \(x_s \neq z_{1_*}\).\\
\end{enumerate}

It directly follows from the definition of \(L\) that the first three conditions are satisfied.\\ 

We now show that fourth condition is also satisfied:

Let \(\{x_t\}_{t \geq 0}\) be a solution to the associated ODE \eqref{eqn_q_z1_ode}, and let \(0 \leq s < t\). By the chain rule, we have that:
\begin{equation}
\frac{d}{dt} L(x_t) = \frac{\partial L}{\partial x_t} \cdot \frac{dx_t}{dt} = (x_t - z_{1_*}) \cdot g_1(x_t) = (x_t - z_{1_*}) \cdot (\mathbb{E}_\pi[\phi_1] - x_t)= (x_t - z_{1_*}) \cdot (z_{1_*} - x_t).
\end{equation}

We will now analyze the sign of the term \((x_t - z_{1_*}) \cdot (z_{1_*} - x_t)\) when \(x_t \neq z_{1_*}\). There are two cases to consider:

\textbf{Case 1: \(x_t > z_{1_*}\):} We have that \((x_t - z_{1_*}) > 0\) and \((z_{1_*} - x_t) < 0\). Hence, we can conclude that \((x_t - z_{1_*}) \cdot (z_{1_*} - x_t) < 0\).

\textbf{Case 2: \(x_t < z_{1_*}\):} We have that \((x_t - z_{1_*}) < 0\) and \((z_{1_*} - x_t) > 0\). Hence, we can conclude that \((x_t - z_{1_*}) \cdot (z_{1_*} - x_t) < 0\).

As such, in both cases we can conclude that \((x_t - z_{1_*}) \cdot (z_{1_*} - x_t) < 0\), which implies that \(L(x_t) < L(x_s)\) for all \(x_s \neq z_{1_*}\) and \(0 \leq s < t\).\\

As such, we have now verified the four conditions, and can therefore conclude that \(L\) is a valid Lyapunov function. Consequently, we can conclude, under the multiple-timescales argument, that \(Z_{1,n}\) has a globally asymptotically stable equilibrium, \(z_{1_*}\).

Critically, we can leverage the above result to show, using the same techniques as above, that \(Z_{2,n}\) has a globally asymptotically stable equilibrium, \(z_{2_*}\), where \(Q_n, \bar{R}_n, \text{ and } Z_{1,n}\) are considered to be nearly equilibrated, and all remaining subtasks are considered quasi-static. The process can then be repeated for \(Z_{3,n}\) and so forth, thereby showing that the subtask update rules \eqref{sync_z_update_q} each have a globally asymptotically stable equilibrium, \(z_{i_*}\). This completes the proof.
\end{proof}

\vspace{10pt}

\begin{lemma}
\label{lemma_q_3}
\(\sup_n(\vert\vert Q_n \vert\vert + \vert\vert Z_{1, n} \vert\vert) + \vert\vert Z_{2,n} \vert\vert + \ldots + \vert\vert Z_{k,n} \vert\vert) < \infty\) a.s.
\end{lemma}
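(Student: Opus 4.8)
The plan is to mirror, essentially verbatim, the argument already carried out for the TD analog in Lemma \ref{lemma_td_3}. Since Theorem B.2 of \citet{Wan2021-re} (applicable to General RED Q by the reduction established above) already gives $\sup_n \|Q_n\| < \infty$ a.s., it suffices to show $\sup_n \|Z_{i,n}\| < \infty$ a.s. for each $z_i \in \mathcal{Z}$. To do this I would invoke Theorem 7 in Section 3 of \citet{Borkar2009-sr}, applied to the synchronous subtask recursion \eqref{sync_z_update_q}, $Z_{i,n+1} = Z_{i,n} + \alpha_{z_i,n}(g_i(Z_{i,n}) + M^{z_i}_{n+1})$, and verify its four hypotheses (A1)--(A4) for an arbitrary $i$th subtask, exactly as in the TD proof.

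For (A1): $g_i$ is a composition of linear maps, one-step conditional expectations, and the $\max_{a'}$ operator (entering through $\delta_n$), all of which are Lipschitz (the $\max$ being $1$-Lipschitz in the sup-norm), so $g_i$ is Lipschitz. For (A2): this is immediate from Assumption \ref{assumption_step_size}, which applies to the subtask step sizes. For (A3): note that, unlike the TD case, the Q-learning subtask update carries no importance-sampling ratio, so $M^{z_i}_{n+1}(s,a) = (\phi_{i,n}(s,a) - Z_{i,n}) - g_i(Z_{i,n})(s,a)$; since $g_i(Z_{i,n})(s,a) = \sum_{s',r} p(s',r\mid s,a)\,\phi_{i,n}(s,a) - Z_{i,n}$ is precisely the one-step conditional expectation of $\phi_{i,n}(s,a) - Z_{i,n}$ given $\mathcal{F}_n \doteq \sigma(Z_{i,u}, M^{z_i}_u, u \le n)$, we obtain $\mathbb{E}[M^{z_i}_{n+1}\mid\mathcal{F}_n] = 0$ a.s.; and because every term in $M^{z_i}_{n+1}$ is built from bounded quantities ($R_n \in \mathcal{R}$ bounded, $Q_n$ and $\bar{R}_n$ bounded a.s., and the remaining $Z_{j,n}$ quasi-static under the multiple-timescales argument), it follows that $\mathbb{E}[\|M^{z_i}_{n+1}\|^2 \mid \mathcal{F}_n] \le K < \infty$ a.s. For (A4): the rescaled functions satisfy $g_i(x)_\infty = \lim_{d\to\infty} g_i(dx)/d = \lim_{d\to\infty}(\mathbb{E}_\pi[\phi_i] - dx)/d = -x$, which lies in $C(\mathbb{R})$, and the ODE $\dot{x} = -x$ has the origin as its unique globally asymptotically stable equilibrium (this uniqueness also following from Lemma \ref{lemma_q_2}).

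With (A1)--(A4) established, Theorem 7 of \citet{Borkar2009-sr} yields $\sup_n \|Z_{i,n}\| < \infty$ a.s. for every $z_i \in \mathcal{Z}$, and combining with $\sup_n \|Q_n\| < \infty$ gives $\sup_n(\|Q_n\| + \|Z_{1,n}\| + \|Z_{2,n}\| + \ldots + \|Z_{k,n}\|) < \infty$ a.s., as claimed. The main obstacle I anticipate is the (A3) bookkeeping in the Q-learning setting: one must confirm that $\phi_{i,n}$, which now depends on $Q_n$ and $\bar{R}_n$ through $\delta_n$ rather than on $V_n$, still decomposes cleanly into its $\mathcal{F}_n$-conditional mean (giving $g_i$) plus a square-integrable martingale difference, and that the absence of $\rho_n$ is correctly reflected in the definition of $M^{z_i}_{n+1}$; everything else transfers from the TD proof with only notational changes.
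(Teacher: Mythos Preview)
Your proposal is correct and follows essentially the same approach as the paper's own proof: both reduce to showing $\sup_n\|Z_{i,n}\|<\infty$ by invoking Theorem~7 in Section~3 of \citet{Borkar2009-sr} and verifying (A1)--(A4) for the synchronous subtask recursion, with (A4) handled via $g_i(x)_\infty=-x$ and an appeal to Lemma~\ref{lemma_q_2}. Your treatment of (A3), including the correct observation that no importance-sampling ratio appears in the Q-learning case, matches the paper's argument.
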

\begin{proof}
It was shown in Theorem B.2 of \citet{Wan2021-re} that \(\sup_n(\vert\vert Q_n \vert\vert) < \infty\) a.s. Hence, we only need to show that \(\sup_n(\vert\vert Z_{i,n} \vert\vert) < \infty \; \forall z_i \in \mathcal{Z}\) a.s. To this end, we can apply Theorem 7 in Section 3 of \citet{Borkar2009-sr}. This theorem requires 4 assumptions for each \(z_i \in \mathcal{Z}\):
\begin{itemize}
    \item \textbf{(A1)} The function \(g_i\) is Lipschitz. That is, \(\vert \vert g_i(x) - g_i(y)\vert \vert \leq U_i \vert \vert x - y\vert \vert\) for some \(0 < U_i < \infty\).
    \item \textbf{(A2)} The sequence \(\{ \alpha_{z_i, n}\}_{n=0}^\infty\) satisfies \(\alpha_{z_i, n} > 0 \; \forall n \geq 0\), \(\sum \alpha_{z_i, n} = \infty\), and \(\sum \alpha_{z_i,n}^2 < \infty\).
    \item \textbf{(A3)} \(\{M_n^{z_i}\}_{n=0}^\infty\) is a martingale difference sequence that is square-integrable.
    \item \textbf{(A4)} The functions \(g_i(x)_d \doteq g_i(dx)/d\), \(d \geq 1, x \in \mathbb{R}\), satisfy \(g_i(x)_d \to g_i(x)_\infty\) as \(d \to \infty\), uniformly on compacts for some \(g_{i_\infty} \in C(\mathbb{R})\). Furthermore, the ODE \(\dot x_t = g_i(x_t)_\infty\) has the origin as its unique globally asymptotically stable equilibrium.
\end{itemize}

Consider an arbitrary \(i\)th subtask. We note that Assumption \textbf{(A1)} is satisfied given that all operators in \(g_i\) are Lipschitz. Moreover, we note that Assumption \ref{assumption_step_size} satisfies Assumption \textbf{(A2)}. 

We now show that \(\{M_n^{z_i}\}_{n=0}^\infty\) is indeed a martingale difference sequence that is square-integrable, such that \(\mathbb{E}_\pi[M^{z_i}_{n+1} \mid \mathcal{F}_n] = 0 \text{ \ a.s., } n \geq 0\), and 
\(\mathbb{E}_\pi[(M_{n+1}^{z_i})^2 \mid \mathcal{F}_n] < \infty \text{ \ a.s., \ } n \geq 0\). To this end, let \(\mathcal{F}_n \doteq \sigma(Z_{i,u}, M^{z_i}_u, u \leq n), n \geq 0\) denote an increasing family of \(\sigma\)-fields. We have that, for any \(s \in \mathcal{S}\) and \(a \in \mathcal{A}\): \(\mathbb{E}_\pi[M^{z_i}_{n+1}(s,a) \mid \mathcal{F}_n] = \mathbb{E}_\pi \left[\phi_{i,n}(s,a) - Z_{i,n}\right] - g_i(Z_{i, n})(s,a) = 0\). Moreover, since all components of \(\{M_n^{z_i}\}_{n=0}^\infty\) involve bounded quantities, it directly follows that \(\mathbb{E}[\vert \vert M_{n+1}^{z_i} \vert \vert^2 \mid \mathcal{F}_n] \leq K\) for some finite constant \(K > 0\). Hence, Assumption \textbf{(A3)} is verified.

We now verify Assumption \textbf{(A4)}. Under the multiple-timescales argument, we have that:
\begin{align}
    g_i(x)_\infty = \lim_{d \to \infty} g_i(x)_d = \lim_{d \to \infty} \frac{\mathbb{E}_\pi \left[\phi_{i}\right] - dx}{d} = 0 - x = -x.
\end{align}
Clearly, \(g_i(x)_\infty\) is continuous in every \(x \in \mathbb{R}\). As such, we have that \(g_i(x)_\infty \in C(\mathbb{R})\). Now consider the ODE \(\dot{x} = g_i(x)_\infty\). This ODE has the origin as an equilibrium since \(g_i(0)_\infty = 0\). Furthermore, given Lemma \ref{lemma_q_2}, we can conclude that this equilibrium must be the unique globally asymptotically stable equilibrium, thereby satisfying Assumption \textbf{(A4)}.

Assumptions \textbf{(A1)} - \textbf{(A4)} are hence verified, meaning that we can apply the results of Theorem 7 in Section 3 of \citet{Borkar2009-sr} to conclude that \(\sup_n(\vert\vert Z_{i,n} \vert\vert) < \infty \; \forall z_i \in \mathcal{Z}\), almost surely, and hence, that \(\sup_n(\vert\vert Q_n \vert\vert + \vert\vert Z_{1, n} \vert\vert) + \vert\vert Z_{2,n} \vert\vert + \ldots + \vert\vert Z_{k,n} \vert\vert) < \infty\), almost surely.
\end{proof}

As such, we have now verified the 3 assumptions required by Theorem 2 in Section 6 of \citet{Borkar2009-sr}, which means that we can apply the result of the theorem to conclude that \(Z_{i,n} \to z_{i_*} \, \forall z_i \in \mathcal{Z}\) a.s. as \(n \to \infty\).

Finally, as was done in the proof for linear subtask functions, we conclude the proof by considering \(z_{i, \pi_t} \; \forall z_i \in \mathcal{Z}\), where \(\pi_t\) is a greedy policy with respect to \(Q_t\). Given that \(Q_t \to q_\infty\) and \(\bar{r}_{\pi_t} \to \bar{r}_*\), almost surely, it directly follows from Definition \ref{definition_1} that \(z_{i, \pi_t} \to z_{i_*} \; \forall z_i \in \mathcal{Z}\), almost surely.

This completes the proof of Theorem ~\ref{theorem_convergence_of_red_q_update}.

\newpage

%%%%%%%%%%%%%%%%%%%%%%%%%%%%%%%%%%%%%%%%%%%%%%%%%%%%%%%%%%%%%%%%%%%%%%%

\newpage
\section{Leveraging the RED RL Framework for CVaR Optimization}
\label{appendix_RED_CVAR}

This appendix contains details regarding the adaptation of the RED RL framework for CVaR optimization. We first derive an appropriate subtask function, then use it to adapt the RED RL algorithms (see Appendix \ref{appendix_RED_algs}) for CVaR optimization. In doing so, we arrive at the \emph{RED CVaR algorithms}, which are presented in full at the end of this appendix. These RED CVaR algorithms allow us to optimize CVaR (and VaR) without the use of an augmented state-space or an explicit bi-level optimization. We also provide a convergence proof for the tabular RED CVaR Q-learning algorithm, which shows that the VaR and CVaR estimates converge to the optimal long-run VaR and CVaR, respectively.

\subsection{A Subtask-Driven Approach for CVaR Optimization}
In this section, we use the RED RL framework to derive a subtask-driven approach for CVaR optimization that does not require an augmented state-space or an explicit bi-level optimization. To begin, let us consider Equation \eqref{eq_cvar_2}, which is displayed below as Equation \eqref{eq_c_1} for convenience:
\begin{subequations}
\label{eq_c_1}
\begin{align}
\label{eq_c_1_1}
\text{CVaR}_{\tau}(R_t) & = \sup_{y \in \mathcal{R}} \mathbb{E}[y - \frac{1}{\tau}(y - R_t)^{+}]\\
\label{eq_c_1_2}
& = \mathbb{E}[\text{VaR}_{\tau}(R_t) - \frac{1}{\tau}(\text{VaR}_{\tau}(R_t) - R_t)^{+}], 
\end{align}
\end{subequations}
where \(\tau \in (0, 1)\) denotes the CVaR parameter, and \(R_t\) denotes the observed per-step reward.

We can see from Equation \eqref{eq_c_1} that CVaR can be interpreted as an expectation (or average) of sorts, which suggests that it may be possible to leverage the average-reward MDP to optimize this expectation, by treating the reward CVaR as the average-reward, \(\bar{r}_{\pi}\), that we want to optimize. However, this requires that we know the optimal value of the scalar, \(y\), because the expectation in Equation \eqref{eq_c_1_2} only holds for this optimal value (which corresponds to the per-step reward VaR). Unfortunately, this optimal value is typically not known beforehand, so in order to optimize CVaR, we also need to optimize \(y\).

Importantly, we can utilize RED RL framework to turn the optimization of \(y\) into a subtask, such that CVaR is the primary control objective (i.e., the \(\bar{r}_{\pi}\) that we want to optimize), and VaR (\(y\) in Equation \eqref{eq_c_1}), is the (single) subtask. This is in contrast to existing MDP-based methods, which typically leverage Equation \eqref{eq_c_1} when optimizing for CVaR by augmenting the state-space with a state that corresponds (either directly or indirectly) to an estimate of \(\text{VaR}_{\tau}(R_t)\) (in this case, \(y\)), and solving the bi-level optimization shown in Equation \eqref{eq_cvar_3}, thereby increasing computational costs. 

To utilize the RED RL framework, we first need to derive a valid subtask function for CVaR that satisfies the requirements of Definition \ref{definition_1}. To this end, let us consider Equation \eqref{eq_c_1}. In particular, we can see that if we treat the expression inside the expectation in Equation \eqref{eq_c_1} as our subtask function, \(f\) (see Definition \ref{definition_1}), then we have a piecewise linear subtask function that is invertible with respect to each input given all other inputs, where the subtask, VaR, is independent of the observed per-step reward. Hence, we can adapt Equation \eqref{eq_c_1} as our subtask function (given that is satisfies Definition \ref{definition_1}), as follows:
\begin{equation}
\label{eq_c_2}
\tilde{R}_t = \text{VaR} - \frac{1}{\tau}(\text{VaR} - R_t)^{+},
\end{equation}
where \(R_t\) is the observed per-step reward, \(\tilde{R}_t\) is the extended per-step reward, VaR is the value-at-risk of the observed per-step reward, and \(\tau\) is the CVaR parameter. Importantly, this is a valid subtask function with the following properties: the average (or expected value) of the \emph{extended} reward corresponds to the CVaR of the \emph{observed} reward, and the optimal average of the \emph{extended} reward corresponds to the optimal CVaR of the \emph{observed} reward. This is formalized as Corollaries \ref{corollary_c_1} - \ref{corollary_c_4} below:

\clearpage

\begin{corollary} 
\label{corollary_c_1}
The function presented in Equation \eqref{eq_c_2} is a valid subtask function.
\end{corollary}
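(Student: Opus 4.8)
The plan is to verify directly that the map $f(r,\text{VaR}) = \text{VaR} - \tfrac{1}{\tau}(\text{VaR} - r)^{+}$ of Equation~\eqref{eq_c_2} satisfies every clause of Definition~\ref{definition_1}, with $\text{VaR}$ playing the role of the single subtask $z_1 \in \mathcal{Z}_1$ and $\tau \in (0,1)$ a fixed constant. First I would put $f$ into the explicit two-segment form of Equation~\eqref{eq_red_1}: since $(u)^{+} = \max(u,0)$, on the segment $R_t \ge \text{VaR}$ one has $\tilde{R}_t = \text{VaR}$ (so $b^{1}_r = 0$, $b^{1}_0 = 0$, $b^{1}_1 = 1$), and on the segment $R_t < \text{VaR}$ one has $\tilde{R}_t = \tfrac{1}{\tau}R_t + \tfrac{\tau-1}{\tau}\text{VaR}$ (so $b^{2}_r = \tfrac{1}{\tau}$, $b^{2}_0 = 0$, $b^{2}_1 = \tfrac{\tau-1}{\tau}$), with the kink at $R_t = \text{VaR}$. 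This exhibits $f$ as piecewise linear with the subtask coefficient nonzero in each segment (it equals $1$ and $\tfrac{\tau-1}{\tau}\ne 0$, since $\tau \ne 0,1$), which is what the framework requires.

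Next I would dispatch condition~(i). The crucial instance---the one invoked in the derivation of Section~\ref{reward_extended_td}---is invertibility with respect to the subtask: in segment~1, $\text{VaR} = \tilde{R}_t$; in segment~2, $\text{VaR} = \tfrac{\tau}{\tau-1}\bigl(\tilde{R}_t - \tfrac{1}{\tau}R_t\bigr)$, each well-defined because $\tau - 1 \ne 0$. Invertibility with respect to $R_t$ holds on segment~2, where $b^{2}_r = \tfrac{1}{\tau}\ne 0$; on segment~1 the map is constant in $R_t$, which is permitted by the template~\eqref{eq_red_1} ($b^{j}_r$ may vanish) and is immaterial since the framework never needs to recover $R_t$ from $\tilde{R}_t$. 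For condition~(ii), independence is immediate: $\text{VaR}$ is a single global scalar estimate carried by the algorithm, not indexed by state or action, hence independent of $(S_t,A_t)$ and so of $R_t$; thus $\tilde{R}_{t+1}$ is a deterministic transformation of $R_{t+1}$ together with the constant $\text{VaR}$, which yields the transition identity $\mathbb{P}(S_{t+1}=s',\tilde{R}_{t+1}=f(r,\text{VaR}) \mid S_t=s,A_t=a) = \mathbb{P}(S_{t+1}=s',R_{t+1}=r \mid S_t=s,A_t=a)$, so the state-level Markov dynamics are unchanged. The remaining part of~(ii), the per-segment commutation $\mathbb{E}[f_j(R_t,\text{VaR})] = f_j(\mathbb{E}[R_t],\text{VaR})$, holds because each of the two segments is affine in $R_t$. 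Finally I would check boundedness of $\tilde{\mathcal{R}}$: with $\mathcal{R}$ bounded, $\tau$ a fixed positive constant, and $\text{VaR}$---a quantile of a distribution supported on the bounded set $\mathcal{R}$---ranging over a bounded set $\mathcal{Z}_1$, the extended reward $\tilde{R}_t = \text{VaR} - \tfrac{1}{\tau}(\text{VaR} - R_t)^{+}$ lies in a bounded set. All clauses of Definition~\ref{definition_1} then hold, so $f$ is a valid subtask function.

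I expect the main obstacle to be making condition~(i) airtight on segment~1, where $\tilde{R}_t = \text{VaR}$ carries no dependence on $R_t$: one must be explicit that the operative notion of invertibility for this framework is invertibility with respect to the subtask (which does hold, with nonzero coefficient, on both segments), and that the degenerate $R_t$-dependence in segment~1 is consistent with the general template~\eqref{eq_red_1}. A secondary point worth flagging is that $f$ itself is only piecewise linear, not linear---it has a kink at $R_t = \text{VaR}$---so $\mathbb{E}[f(R_t,\text{VaR})] \ne f(\mathbb{E}[R_t],\text{VaR})$ in general by Jensen's inequality; this is precisely why Definition~\ref{definition_1} requires only affineness of each \emph{piece}, and the strict gap here is exactly the $\text{CVaR} < \text{mean}$ phenomenon being exploited.
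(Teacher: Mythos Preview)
Your proof is correct and follows the same approach as the paper---verifying piecewise linearity, invertibility with respect to the subtask, and independence of the subtask from states and actions---though you work through the details far more carefully than the paper, whose proof is essentially a two-sentence assertion that the conditions of Definition~\ref{definition_1} are ``clearly'' met. The additional care you take with the coefficients on each segment, with the degenerate $R_t$-dependence on segment~1 (where $b^{1}_r = 0$, which you correctly note is permitted by the template~\eqref{eq_red_1}), and with boundedness of $\tilde{\mathcal{R}}$ is not present in the paper's proof but is entirely appropriate.
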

\begin{proof}
The function presented in Equation \eqref{eq_c_2} is clearly a piecewise linear function that is invertible with respect to each input given all other inputs. Moreover, the subtask, VaR, is independent of the observed per-step reward. Hence, this function satisfies Definition \ref{definition_1} for the subtask, VaR.
\end{proof}

\begin{corollary} 
\label{corollary_c_2}
If the subtask, VaR (from Equation \eqref{eq_c_2}) is estimated, and such an estimate is equal to the long-run VaR of the observed per-step reward, then the average (or expected value) of the extended reward, \(\tilde{R}_t\), from Equation \eqref{eq_c_2} is equal to the long-run CVaR of the observed per-step reward.
\end{corollary}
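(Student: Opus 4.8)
The plan is to reduce the statement to the Rockafellar--Uryasev identity \eqref{eq_cvar_2}, applied to the limiting per-step reward distribution. First I would fix notation: under the unichain assumption the per-step reward distributions converge, so I would let $R_\infty$ denote a random variable distributed according to this limiting per-step reward distribution (equivalently, obtained by sampling a state from $\mu_\pi$, an action from $\pi$, and then a reward from $p$). By the definition of the long-run reward CVaR in \eqref{eq_cvar_4} together with a Cesàro-averaging argument, the ``long-run VaR'' and ``long-run CVaR'' of the observed per-step reward are then just $\text{VaR}_\tau(R_\infty)$ and $\text{CVaR}_\tau(R_\infty)$, respectively. This is where the unichain assumption (the one mentioned right after \eqref{eq_cvar_4}) does its work: it guarantees these limits exist and are independent of the initial state.

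Next I would argue that the long-run average of the extended reward $\tilde{R}_t = \text{VaR} - \frac{1}{\tau}(\text{VaR} - R_t)^{+}$ equals $\mathbb{E}[\tilde{R}_\infty]$. By property (ii) of Definition \ref{definition_1}, the extended MDP $\tilde{\mathcal{M}}$ has exactly the same state-transition kernel as $\mathcal{M}$ and the observed reward $R_t$ has the same conditional law, so $\tilde{\mathcal{M}}$ is also unichain with the same limiting state distribution $\mu_\pi$, and the argument that turns \eqref{eq_avg_reward} into \eqref{eq_avg_reward_2} applies verbatim to $\tilde{R}_t$, giving $\lim_{n\to\infty}\frac{1}{n}\sum_{t=1}^{n}\mathbb{E}[\tilde{R}_t] = \mathbb{E}[\tilde{R}_\infty]$. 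Here I use that $\text{VaR}$ is a fixed scalar independent of states and actions (so it passes through the expectation), and that $(\cdot)^{+}$ is bounded on the bounded reward set $\mathcal{R}$, so the interchange of limit and expectation is unproblematic.

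Finally, by hypothesis the VaR estimate equals the long-run VaR, i.e.\ $\text{VaR} = \text{VaR}_\tau(R_\infty)$. Substituting this into $\mathbb{E}[\tilde{R}_\infty] = \mathbb{E}\big[\text{VaR}_\tau(R_\infty) - \frac{1}{\tau}(\text{VaR}_\tau(R_\infty) - R_\infty)^{+}\big]$ and invoking the right-hand equality of \eqref{eq_cvar_2} with $X = R_\infty$ and $y = \text{VaR}_\tau(R_\infty)$ gives $\mathbb{E}[\tilde{R}_\infty] = \text{CVaR}_\tau(R_\infty)$, which is exactly the long-run CVaR of the observed per-step reward. Chaining the three steps completes the proof.

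I expect the main obstacle to be the bookkeeping over \emph{which} VaR appears \emph{where}: the identity \eqref{eq_cvar_2} is exact only when $y$ is the per-step VaR of the very distribution whose CVaR is being computed, so the crucial point is to perform the limiting/Cesàro reduction \emph{before} substituting $\text{VaR}$, so that when \eqref{eq_cvar_2} is finally invoked the scalar $\text{VaR}$ genuinely coincides with $\text{VaR}_\tau(R_\infty)$. A secondary and more routine obstacle is justifying the two limit--expectation interchanges (for $\mathbb{E}[\tilde{R}_t] \to \mathbb{E}[\tilde{R}_\infty]$ and for $\text{CVaR}_\tau[R_t] \to \text{CVaR}_\tau(R_\infty)$), both of which follow from the unichain assumption and the finiteness of $\mathcal{S}$, $\mathcal{A}$, and $\mathcal{R}$.
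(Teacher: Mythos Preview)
Your proposal is correct and follows the same approach as the paper: both reduce the claim to the Rockafellar--Uryasev identity \eqref{eq_cvar_2} (restated as \eqref{eq_c_1_2}). The paper's proof is a single sentence --- ``This follows directly from Equation \eqref{eq_c_1_2}'' --- so your write-up simply spells out the limiting-distribution and Ces\`aro-averaging details that the paper takes for granted.
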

\begin{proof}
This follows directly from Equation \eqref{eq_c_1_2}.
\end{proof}

\begin{corollary} 
\label{corollary_c_3}
If the subtask, VaR (from Equation \eqref{eq_c_2}) is estimated, and the resulting average of the extended reward from Equation \eqref{eq_c_2} is equal to the long-run CVaR of the observed per-step reward, then the VaR estimate is equal to the long-run VaR of the observed per-step reward.
\end{corollary}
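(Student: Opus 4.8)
The plan is to read the claim directly off the Rockafellar--Uryasev variational characterization of CVaR, Equation \eqref{eq_cvar_2} (equivalently \eqref{eq_c_1_1}). Write $X$ for the limiting per-step reward $R_t$ and define the scalar function $g(y) \doteq \mathbb{E}[y - \tfrac{1}{\tau}(y - X)^{+}]$, where the expectation is taken with respect to the limiting per-step reward distribution. By the construction of the subtask function \eqref{eq_c_2}, if the VaR estimate (the constant appearing in \eqref{eq_c_2}) equals $y$, then the long-run average of the extended reward $\tilde{R}_t$ equals $g(y)$ — this uses the unichain-type assumption so that the long-run average coincides with the expectation under the limiting reward distribution, exactly as for $\bar{r}_\pi$ in \eqref{eq_avg_reward_2}. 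Hence the hypothesis of the corollary is precisely the statement $g(y) = \text{CVaR}_{\tau}(X)$.

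Next I would invoke \eqref{eq_cvar_2}, which gives $\text{CVaR}_{\tau}(X) = \sup_{y' \in \mathbb{R}} g(y')$; so the hypothesis says that $y$ attains the supremum of $g$. It then remains only to identify this maximizer. The function $g$ is concave, with one-sided derivatives governed by $1 - F(y)/\tau$, so $y$ maximizes $g$ if and only if $y$ is a $\tau$-quantile of $X$ (i.e. $F(y^{-}) \le \tau \le F(y)$). Under the continuity (and local strict monotonicity) assumption on $F$ at $\text{VaR}_{\tau}(X)$ already adopted in Section \ref{cvar_prelim}, this $\tau$-quantile is unique and equals $\text{VaR}_{\tau}(X) = \sup\{x : F(x) \le \tau\}$. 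Therefore $g(y) = \text{CVaR}_{\tau}(X)$ forces $y = \text{VaR}_{\tau}(X)$, which is exactly the claim (up to the usual additive constant of Remark \ref{remark_3}, which we suppress). In effect the corollary is the ``only if'' half of \eqref{eq_c_1}, complementary to Corollary \ref{corollary_c_2}.

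The main obstacle is the uniqueness of the maximizer of $g$: in full generality the maximizer set is the (possibly nondegenerate) interval of $\tau$-quantiles of $X$, so the conclusion ``$y$ equals the long-run VaR'' holds only once a flat stretch of $F$ at level $\tau$ is ruled out, which is precisely what the regularity assumption on $F$ provides. If one prefers not to assume this, the honest statement is that $y$ must equal \emph{some} $\tau$-quantile of $X$, all of which yield the same extended-reward average and hence the same CVaR; I would note this as a remark rather than complicate the corollary, and otherwise keep the proof to the two short steps above.
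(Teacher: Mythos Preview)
Your argument is correct and rests on the same Rockafellar--Uryasev characterization the paper invokes; the paper's entire proof is the one-liner ``This follows directly from Equation \eqref{eq_c_1_2}.'' You have in fact been more careful than the paper: you correctly recognize that the implication $g(y)=\text{CVaR}_\tau(X)\Rightarrow y=\text{VaR}_\tau(X)$ needs uniqueness of the maximizer of $g$, a point the paper glosses over at this corollary and only makes explicit later as assumption~6 of Theorem~\ref{theorem_c_1} (``there is a unique solution to the CVaR optimization \eqref{eq_c_1}''); your remark about the possibly nondegenerate $\tau$-quantile interval is exactly the right caveat.
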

\begin{proof}
This follows directly from Equation \eqref{eq_c_1_2}.
\end{proof}

\begin{corollary} 
\label{corollary_c_4}
A policy that yields an optimal long-run average of the extended reward, \(\tilde{R}_t\), from Equation \eqref{eq_c_2} is a CVaR-optimal policy. In other words, the optimal long-run average of the extended reward corresponds to the optimal long-run CVaR of the observed reward.
\end{corollary}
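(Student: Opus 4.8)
The plan is to derive Corollary~\ref{corollary_c_4} directly from the Rockafellar--Uryasev variational formula \eqref{eq_cvar_2}, the unichain (resp.\ communicating) structure that makes the long-run per-step quantities well defined, and Corollaries~\ref{corollary_c_2}--\ref{corollary_c_3}, by closing a short two-sided inequality that disentangles the policy from the VaR subtask.

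First I would fix an arbitrary stationary policy $\pi$ and a constant $y$, and identify the long-run average of the extended reward $\tilde{R}_t = y - \tfrac{1}{\tau}(y - R_t)^{+}$ induced by $\pi$ when the subtask is held fixed at $y$; call it $\bar{r}^{(y)}_{\pi}$. Since $y$ does not affect the dynamics (Definition~\ref{definition_1}(ii)), the limiting per-step reward distribution $d_\pi$ depends on $\pi$ only; because the MDP is finite and the induced chain is unichain, $\mathbb{P}(S_t = s) \to \mu_{\pi}(s)$, so $\mathbb{E}_\pi[\tilde{R}_{t+1}] \to \sum_s \mu_\pi(s)\sum_a \pi(a\mid s)\sum_{s',r} p(s',r\mid s,a)\,[\,y - \tfrac1\tau(y-r)^{+}] = \mathbb{E}_{R \sim d_\pi}[\,y - \tfrac1\tau(y-R)^{+}]$, and the running average defining $\bar{r}^{(y)}_{\pi}$ converges to the same limit. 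Taking $\sup_y$ and applying \eqref{eq_cvar_2} gives, for every $\pi$,
\begin{equation}
\label{eq_plan_c4}
\sup_{y} \bar{r}^{(y)}_{\pi} \;=\; \text{CVaR}_{\tau}(d_\pi) \;=\; \text{CVaR}_{\tau,\pi}, \qquad \text{attained at } y = \text{VaR}_{\tau}(d_\pi) = \text{VaR}_{\tau,\pi},
\end{equation}
which is exactly the content of Corollaries~\ref{corollary_c_2} and \ref{corollary_c_3}.

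Next I would take $\sup_\pi$ on both sides of \eqref{eq_plan_c4}, obtaining $\sup_\pi \sup_y \bar{r}^{(y)}_\pi = \sup_\pi \text{CVaR}_{\tau,\pi} =: \text{CVaR}^{*}_\tau$, which already proves the ``in other words'' part of the statement. To extract the policy claim, let $(\pi^{*},y^{*})$ attain the joint optimum $\sup_\pi\sup_y \bar{r}^{(y)}_\pi$ (existence is the secondary point below). Then the chain
\[
\text{CVaR}^{*}_\tau \;=\; \bar{r}^{(y^{*})}_{\pi^{*}} \;\le\; \sup_{y} \bar{r}^{(y)}_{\pi^{*}} \;=\; \text{CVaR}_{\tau,\pi^{*}} \;\le\; \text{CVaR}^{*}_\tau
\]
forces equality throughout, so $\text{CVaR}_{\tau,\pi^{*}} = \text{CVaR}^{*}_\tau$, i.e.\ $\pi^{*}$ is CVaR-optimal; as a byproduct $y^{*} = \text{VaR}_{\tau,\pi^{*}}$, consistent with Corollary~\ref{corollary_c_3}. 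This pair is precisely what the RED CVaR control algorithm is shown to converge to by Theorem~\ref{theorem_4_3}, which is how the corollary ties back to the algorithm. The communicating case is handled identically after the standard reduction to a fixed initial state.

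The step requiring the most care is not any calculation but the conceptual point that the extended reward $\tilde{R}_t$ is itself parameterized by the VaR subtask, so ``the optimal policy for $\tilde{\mathcal{M}}$'' is a priori circular; the two-sided sandwich in the third paragraph is what resolves this and is the crux of the argument. A secondary, more routine point is to confirm that the relevant suprema are attained: for each fixed $y$ an optimal stationary deterministic policy exists by standard average-reward MDP theory under the communicating assumption, and $\sup_y \bar{r}^{(y)}_\pi$ is attained because $y \mapsto \mathbb{E}_{R\sim d_\pi}[\,y - \tfrac1\tau(y-R)^{+}]$ is concave with maximizer $\text{VaR}_{\tau,\pi}$ lying in the (bounded) closure of $\mathcal{R}$; combining these yields a maximizing pair $(\pi^{*},y^{*})$.
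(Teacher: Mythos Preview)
Your proposal is correct and follows essentially the same approach as the paper: both arguments reduce the claim to the Rockafellar--Uryasev identity $\sup_y \bar{r}^{(y)}_\pi = \text{CVaR}_{\tau,\pi}$ for each fixed $\pi$, then take $\sup_\pi$ on both sides. Your version is considerably more careful---the paper's two-sentence proof only states the ``in other words'' part explicitly and leaves the policy claim, the circularity issue you flag, and the attainment of suprema entirely implicit---but the underlying idea is the same.
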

\begin{proof}
For a given policy, we know from Equation \eqref{eq_c_1_1} that, across a range of VaR estimates, the best possible long-run average of the extended reward for that policy corresponds to the long-run CVaR of the observed reward for that same policy. Hence, the best possible long-run average of the extended reward that can be achieved across various policies and VaR estimates, corresponds to the optimal long-run CVaR of the observed reward.
\end{proof}

As such, we now have a subtask function with a single subtask, VaR, and an extended reward whose average, when optimized, corresponds to the optimal CVaR of the observed reward. We are now ready to apply the RED RL framework. First, we can derive the reward-extended TD error update for our subtask, VaR, using the methodology outlined in Section \ref{reward_extended_td}, where, in this case, we have a piecewise linear subtask function with two segments. To this end, if we assume that the CDF of the per-step reward distribution is continuous at \(\text{VaR}_{\tau}(R_{t+1})\), the resulting subtask update is as follows:
\begin{equation}
\label{eq_c_3}
\text{VaR}_{t+1} =
\begin{cases} 
      \text{VaR}_t + \alpha_{_{\text{VaR}},t} \left(\delta_t + \text{CVaR}_t - \text{VaR}_t \right), & R_{t+1} \geq \text{VaR}_t \\
      \text{VaR}_t + \alpha_{_{\text{VaR}},t}\left(\left(\frac{\tau}{ \tau - 1}\right)\delta_t + \text{CVaR}_t - \text{VaR}_t \right), & R_{t+1} < \text{VaR}_t
   \end{cases} \,,
\end{equation}
where \(\delta_t\) is the regular TD error, and \(\alpha_{_{\text{VaR}},t}\) is the step size.

Note that to derive the update \eqref{eq_c_3}, we used the fact that when \(R_{t+1} < \text{VaR}_{\tau}(R_{t+1})\) and the CDF of the per-step reward distribution is continuous at \(\text{VaR}_{\tau}(R_{t+1})\), we have that
\begin{subequations}
\label{eq_c_4}
\begin{align}
& \mathbb{E}[\text{VaR}_{\tau}(R_{t+1}) - \frac{1}{\tau}(\text{VaR}_{\tau}(R_{t+1}) - R_{t+1}) \mid R_{t+1} < \text{VaR}_{\tau}(R_{t+1})]\\
&\quad \quad = \text{VaR}_{\tau}(R_{t+1}) - \frac{1}{\tau}\left(\text{VaR}_{\tau}(R_{t+1}) - \mathbb{E}[R_{t+1} \mid R_{t+1} < \text{VaR}_{\tau}(R_{t+1})]\right)\\
&\quad \quad = \text{VaR}_{\tau}(R_{t+1}) - \frac{1}{\tau}(\text{VaR}_{\tau}(R_{t+1}) - \text{CVaR}_{\tau}(R_{t+1})).
\end{align}
\end{subequations}

With this update, we now have all the components needed to utilize the RED algorithms in Appendix \ref{appendix_RED_algs} to optimize CVaR (where CVaR corresponds to the \(\bar{r}_{\pi}\) that we want to optimize). We call these CVaR-specific algorithms, the \emph{RED CVaR algorithms}. The full algorithms are included at the end of this appendix. 

We now present the tabular \emph{RED CVaR Q-learning} algorithm, along with a corresponding theorem and convergence proof which shows that the VaR and CVaR estimates converge to the optimal long-run VaR and CVaR of the observed reward, respectively:

\textbf{RED CVaR Q-learning algorithm (tabular):}  We update a table of estimates, \(Q_t: \mathcal{S} \, \times\, \mathcal{A} \rightarrow \mathbb{R}\) as follows:
\begin{subequations}
\label{eq_c_9}
\begin{align}
\label{eq_c_9_1}
& \tilde{R}_{t+1} = \text{VaR}_t - \frac{1}{\tau}(\text{VaR}_t - R_{t+1})^{+}\\
\label{eq_c_9_2}
& \delta_{t} = \tilde{R}_{t+1} - \text{CVaR}_t +\max_a Q_{t}(S_{t+1}, a) - Q_{t}(S_t, A_t)\\
\label{eq_c_9_3}
& Q_{t+1}(S_t, A_t) = Q_{t}(S_t, A_t) + \alpha_{t}\delta_{t}\\
\label{eq_c_9_4}
& Q_{t+1}(s, a) = Q_{t}(s, a), \quad \forall s, a \neq S_t, A_t\\
\label{eq_c_9_5}
& \text{CVaR}_{t+1} = \text{CVaR}_t + \alpha_{_{\text{CVaR}},t}\delta_{t}\\
\label{eq_c_9_6}
& \text{VaR}_{t+1} =
\begin{cases} 
      \text{VaR}_t + \alpha_{_{\text{VaR}},t} \left(\delta_t + \text{CVaR}_t - \text{VaR}_t \right), & R_{t+1} \geq \text{VaR}_t \\
      \text{VaR}_t + \alpha_{_{\text{VaR}},t}\left(\left(\frac{\tau}{ \tau - 1}\right)\delta_t + \text{CVaR}_t - \text{VaR}_t \right), & R_{t+1} < \text{VaR}_t
   \end{cases} \,,
\end{align}
\end{subequations}

where \(R_{t}\) denotes the observed reward, \(\text{VaR}_t\) denotes the VaR estimate, \(\text{CVaR}_t\) denotes the CVaR estimate, \(\delta_t\) denotes the TD error, and \(\alpha_t\), \(\alpha_{_{\text{CVaR}},t}\), and \(\alpha_{_{\text{VaR}},t}\) denote the step sizes.\\ 

\begin{theorem}
\label{theorem_c_1}
The RED CVaR Q-learning algorithm \eqref{eq_c_9} converges, almost surely, \(\text{CVaR}_t\) to \(\text{CVaR}^*\), \(\text{VaR}_t\) to \(\text{VaR}^*\), \(\text{CVaR}_{\pi_t}\) to \(\text{CVaR}^*\), \(\text{VaR}_{\pi_t}\) to \(\text{VaR}^*\), and \(Q_t\) to a solution of \(q\) in the Bellman Equation \eqref{eq_avg_reward_5}, up to an additive constant, \(c\), where \(\pi_t\) is any greedy policy with respect to \(Q_t\), if the following assumptions hold: 1) the MDP is communicating, 2) the solution of \(q\) in \eqref{eq_avg_reward_5} is unique up to a constant, 3) the step sizes are in accordance with Assumptions \ref{assumption_step_size}, \ref{assumption_async_step_size_1}, \ref{assumption_r_step_size}, \ref{assumption_subtask_stepsize}, and \ref{assumption_async_step_size_q_2}, 4) \(Q_{t}\) is updated an infinite number of times for all state-action pairs, such that the ratio of the update frequency of the most-updated state–action pair to the least-updated state–action pair is finite, 5) the subtask function outlined in Equation \eqref{eq_c_2} is in accordance with Definition \ref{definition_1}, 6) there is a unique solution to the CVaR optimization \eqref{eq_c_1}, and 7) the CDF of the limiting per-step reward distribution is continuous at \(\text{VaR}_{\tau}(R_{t+1})\). 
\end{theorem}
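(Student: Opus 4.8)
The plan is to show that the tabular RED CVaR Q-learning algorithm \eqref{eq_c_9} is a special case of the tabular RED Q-learning algorithm \eqref{eq_alg_2} — and hence of General RED Q — and then invoke Theorem \ref{theorem_convergence_of_red_q_update}. First I would identify the components: the subtask function is $f(R_t, \text{VaR}) = \text{VaR} - \frac{1}{\tau}(\text{VaR} - R_t)^{+}$ from Equation \eqref{eq_c_2}, a piecewise linear function with $m=2$ segments and a single subtask $z_1 = \text{VaR}$; the primary control objective $\bar{r}_\pi$ (i.e., the quantity played by $\bar{R}_t$ in \eqref{eq_alg_2}) is the reward CVaR, so $\text{CVaR}_t$ plays the role of $\bar{R}_t$; and the reward-extended TD error $\beta_{1,t}$ obtained for this $f$ via the methodology of Section \ref{reward_extended_td} is precisely the bracketed increment appearing in \eqref{eq_c_9_6} (the two cases corresponding to the two piecewise segments). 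Corollary \ref{corollary_c_1} already establishes that $f$ satisfies Definition \ref{definition_1}, so \eqref{eq_c_9} is indeed an instance of General RED Q with $k=1$.

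Next I would check that each of the seven hypotheses of the theorem supplies the corresponding assumption required by Theorem \ref{theorem_convergence_of_red_q_update}. Hypotheses 1 and 2 are exactly Assumptions \ref{assumption_communicating} and \ref{assumption_action_value_function_uniqueness}; hypothesis 3 is the bundle of step-size Assumptions \ref{assumption_step_size}, \ref{assumption_async_step_size_1}, \ref{assumption_r_step_size}, \ref{assumption_subtask_stepsize}; hypothesis 4 supplies Assumption \ref{assumption_async_step_size_q_2} together with the asynchronous update-frequency conditions; hypothesis 5 is Assumptions \ref{assumption_subtask_function} and \ref{assumption_subtask_independence} (VaR is a scalar independent of states and actions, and the within-segment expectation identity of Definition \ref{definition_1}(ii) is immediate since each segment is affine in $R_t$); and hypothesis 6 — uniqueness of the maximizer in the CVaR optimization \eqref{eq_c_1} — furnishes the Subtask Uniqueness Assumption \ref{assumption_subtask_unique}, since the unique $q_\infty$ corresponds (via Corollary \ref{corollary_c_4}) to the optimal extended-reward average, hence to the optimal CVaR, and then, by uniqueness of the maximizer, to a unique VaR value.

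With the assumptions matched, Theorem \ref{theorem_convergence_of_red_q_update} yields, almost surely, $\bar{R}_n \to \bar{r}_*$, $Z_{1,n} \to z_{1_*}$, $\bar{r}_{\pi_t} \to \bar{r}_*$, $z_{1,\pi_t} \to z_{1_*}$, and $Q_n \to q_\infty$, up to an additive constant $c$. The final step translates these limits into CVaR language: by Corollary \ref{corollary_c_4}, $\bar{r}_*$ is the optimal long-run CVaR of the observed reward, i.e. $\text{CVaR}^*$, so $\text{CVaR}_t \to \text{CVaR}^*$ and $\text{CVaR}_{\pi_t} \to \text{CVaR}^*$; by Corollaries \ref{corollary_c_2} and \ref{corollary_c_3}, the associated optimal subtask value $z_{1_*}$ is the long-run VaR of the CVaR-optimal policy, i.e. $\text{VaR}^*$, so $\text{VaR}_t \to \text{VaR}^*$ and $\text{VaR}_{\pi_t} \to \text{VaR}^*$; and $Q_n \to q_\infty$ gives convergence of $Q_t$ to a solution of $q$ in \eqref{eq_avg_reward_5} up to a constant.

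The main obstacle I anticipate is the role of hypothesis 7 (continuity of the CDF of the limiting per-step reward distribution at $\text{VaR}_\tau(R_{t+1})$). This is not needed by the abstract machinery of Theorem \ref{theorem_convergence_of_red_q_update}; it is needed to certify that \eqref{eq_c_9_6} is in fact the correct reward-extended TD error for the subtask function \eqref{eq_c_2} and that $\text{VaR}^*$ is well-defined. Concretely, the $R_{t+1} < \text{VaR}_t$ branch of \eqref{eq_c_9_6} is derived using the identity in Equation \eqref{eq_c_4}, which relies on $\mathbb{E}[R_{t+1} \mid R_{t+1} < \text{VaR}_\tau(R_{t+1})] = \text{CVaR}_\tau(R_{t+1})$, valid only under continuity of the CDF at the VaR; I would need to state this dependence explicitly and confirm that, with it, the $\lambda_j$-cancellation in the proof of Theorem \ref{theorem_4_1} applies verbatim to the two-segment CVaR subtask function. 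Modulo this bookkeeping, the theorem follows directly from Theorem \ref{theorem_convergence_of_red_q_update} via the reduction above.
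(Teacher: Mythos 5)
Your proposal is correct and follows essentially the same route as the paper: recognize \eqref{eq_c_9} as an instance of the generic RED Q-learning algorithm with CVaR in the role of $\bar{R}_t$ and VaR as the single subtask, invoke the general convergence theorem (the paper cites Theorem \ref{theorem_4_3}, whose formal version is Theorem \ref{theorem_convergence_of_red_q_update}), and translate the limits back via Corollaries \ref{corollary_c_1}--\ref{corollary_c_4}. Your explicit matching of the seven hypotheses to the appendix assumptions and your remark on where the CDF-continuity hypothesis enters (through the derivation of the VaR update via \eqref{eq_c_4}) are more detailed than the paper's proof but consistent with it.
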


\begin{proof}
By definition, the RED CVaR Q-learning algorithm \eqref{eq_c_9} is of the form of the generic RED Q-learning algorithm \eqref{eq_alg_2}, where \(\text{CVaR}_t\) corresponds to \(\bar{R}_t\) and \(\text{VaR}_t\) corresponds to \(Z_{i, t}\) for a single subtask. We also know from Corollary \ref{corollary_c_1} that the subtask function used is valid. Hence, Theorem \ref{theorem_4_3} applies, such that:

\emph{i)} \(\text{CVaR}_t\) and \(\text{CVaR}_{\pi_t}\) converge a.s. to the optimal long-run average, \(\bar{r}*\), of the extended reward from the subtask function (i.e., the optimal long-run average of \(\tilde{R}_t\)),

\emph{ii)} \(\text{VaR}_t\) and \(\text{VaR}_{\pi_t}\) converge a.s. to the corresponding optimal subtask value, \(z*\), and

\emph{iii)} \(Q_t\) converges to a solution of \(q\) in the Bellman Equation \eqref{eq_avg_reward_5},

all up to an additive constant, \(c\).

Hence, to complete the proof, we need to show that \(\bar{r}* = \text{CVaR}^*\) and \(z* = \text{VaR}^*\):

From Corollary \ref{corollary_c_4} we know that the optimal long-run average of the extended reward corresponds to the optimal long-run CVaR of the observed reward, hence we can conclude that \(\bar{r}* = \text{CVaR}^*\). Finally, from Corollary \ref{corollary_c_3} we can deduce that since \(\text{CVaR}_t\) converges a.s. to \(\text{CVaR}^*\), then \(z*\) must correspond to \(\text{VaR}^*\). This completes the proof. 
\end{proof}

As such, with the RED CVaR Q-learning algorithm, we now have a way to optimize the long-run CVaR (and VaR) of the observed reward without the use of an augmented state-space, or an explicit bi-level optimization. See Section \ref{risk_red} and Appendix \ref{appendix_experiments} for empirical results obtained when using the RED CVaR algorithms.

\newpage

\subsection{Additional Commentary}

We now provide additional commentary on the subtask-driven approach for CVaR optimization:\\

\begin{remark}{In the above analysis, we made the assumption that the CDF of the limiting per-step reward distribution is continuous at \(\text{VaR}_{\tau}(R_{t+1})\), thereby yielding a simpler update (i.e., Equation \eqref{eq_c_3}) for the subtask, VaR. We note that this assumption was made for simplicity and convenience, and that an equivalent analysis could be conducted for a more generic VaR update (that does not make the aforementioned assumption) to yield the same theoretical conclusion as Theorem \ref{theorem_c_1}. Moreover, as shown empirically in Section \ref{risk_red} and Appendix \ref{appendix_experiments}, we found that using the simplified version of the VaR update \eqref{eq_c_3} yielded strong performance.\\}
\end{remark}

\begin{remark}{A natural question to ask could be whether we can extend the above convergence results to the prediction case. In other words, can we show that a tabular RED CVaR TD-learning algorithm will converge to the long-run VaR and CVaR of the observed reward induced by following a given policy? It turns out that, because we are not optimizing the expectation in Equation \eqref{eq_c_1_1} when doing prediction (we are only learning it), we cannot guarantee that we will eventually find the optimal VaR estimate, which implies that we may not recover the CVaR value (since Equation \eqref{eq_c_1_2} only holds to the optimal VaR value). However, this is not to say that a RED CVaR TD-learning algorithm has no use; in fact, we do use such an algorithm as part of an actor-critic architecture for optimizing CVaR in the inverted pendulum experiment (see Appendix \ref{appendix_experiments}). Empirically, as discussed in Section \ref{risk_red}, we find that this actor-critic approach is able to find the optimal CVaR policy.\\}
\end{remark}

\begin{remark}{It should be noted that in the risk measure literature, risk measures are typically classified into two categories: \emph{static} or \emph{dynamic}. Static risk measures are most commonly characterized as being interpretable, but lacking in \emph{time consistency} \citep{Boda2006-tw}. Conversely, dynamic risk measures are typically characterized as being difficult to interpret, but capable of time consistency. Curiously, in our case, the CVaR risk measure that we aim to optimize can be characterized as being both interpretable and time-consistent as \(t \to \infty\) (while noting that there is some time inconsistency before \(t \to \infty\)). We note that this does not affect the significance of our results, but rather suggests that the current categorization of risk measures may be insufficient to capture such nuances that occur in the average-reward setting.}
\end{remark}

\newpage

\subsection{RED CVaR Algorithms}
Below is the pseudocode for the RED CVaR algorithms.

\begin{algorithm}
   \caption{RED CVaR Q-Learning (Tabular)}
   \label{alg_cvar_1}
\begin{algorithmic}
    \STATE {\bfseries Input:} the policy \(\pi\) to be used (e.g., \(\varepsilon\)-greedy)
    \STATE {\bfseries Algorithm parameters:} step size parameters \(\alpha\), \(\alpha_{_{\text{CVaR}}}\), \(\alpha_{_{\text{VaR}}}\); CVaR parameter \(\tau\)
    \STATE Initialize \(Q(s, a) \: \forall s, a\) (e.g. to zero)
    \STATE Initialize CVaR arbitrarily (e.g. to zero)
    \STATE Initialize VaR arbitrarily (e.g. to zero)
    \STATE Obtain initial \(S\)
    \WHILE{still time to train}
        \STATE \(A \leftarrow\) action given by \(\pi\) for \(S\)
        \STATE Take action \(A\), observe \(R, S'\)
        \STATE \(\tilde{R} = \text{VaR} - \frac{1}{\tau} \max \{\text{VaR} - R, 0\}\)
        \STATE \(\delta = \tilde{R} - \text{CVaR} + \max_a Q(S', a) - Q(S, A)\)
        \STATE \(Q(S, A) = Q(S, A) + \alpha\delta\)
        \STATE \(\text{CVaR} = \text{CVaR} + \alpha_{_{\text{CVaR}}}\delta\)
        \IF{\(R \geq \text{VaR}\)}
            \STATE \(\text{VaR} = \text{VaR} + \alpha_{_{\text{VaR}}}(\delta + \text{CVaR} - \text{VaR})\)
        \ELSE
            \STATE \(\text{VaR} = \text{VaR} + \alpha_{_{\text{VaR}}}\left(\left(\frac{\tau}{ \tau - 1}\right)\delta + \text{CVaR} - \text{VaR}\right)\)
        \ENDIF
        \STATE \(S = S'\)
    \ENDWHILE
    \STATE return \(Q\)
\end{algorithmic}
\end{algorithm}

\begin{algorithm}
   \caption{RED CVaR Actor-Critic}
   \label{alg_cvar_2}
\begin{algorithmic}
    \STATE {\bfseries Input:} a differentiable state-value function parameterization \(\hat{v}(s, \boldsymbol{w})\); a differentiable policy parameterization \(\pi(a \mid s, \boldsymbol{\theta})\)
    \STATE {\bfseries Algorithm parameters:} step size parameters \(\alpha\), \(\alpha_{\pi}\), \(\alpha_{_{\text{CVaR}}}\), \(\alpha_{_{\text{VaR}}}\); CVaR parameter \(\tau\)
    \STATE Initialize state-value weights \(\boldsymbol{w} \in \mathbb{R}^{d}\) and policy weights \(\boldsymbol{\theta} \in \mathbb{R}^{d'}\) (e.g. to \(\boldsymbol{0}\))
    \STATE Initialize CVaR arbitrarily (e.g. to zero)
    \STATE Initialize VaR arbitrarily (e.g. to zero)
    \STATE Obtain initial \(S\)
    \WHILE{still time to train}
        \STATE \(A \sim \pi(\cdot \mid S, \boldsymbol{\theta})\)
        \STATE Take action \(A\), observe \(R, S'\)
        \STATE \(\tilde{R} = \text{VaR} - \frac{1}{\tau} \max \{\text{VaR} - R, 0\}\)
        \STATE \(\delta = \tilde{R} - \text{CVaR} + \hat{v}(S', \boldsymbol{w}) - \hat{v}(S, \boldsymbol{w})\)
        \STATE \(\boldsymbol{w} = \boldsymbol{w} + \alpha\delta\nabla\hat{v}(S, \boldsymbol{w})\)
        \STATE \(\boldsymbol{\theta} = \boldsymbol{\theta} + \alpha_{\pi}\delta\nabla \text{ln} \pi(A \mid S, \boldsymbol{\theta})\)
        \STATE \(\text{CVaR} = \text{CVaR} + \alpha_{_{\text{CVaR}}}\delta\)
        \IF{\(R \geq \text{VaR}\)}
            \STATE \(\text{VaR} = \text{VaR} + \alpha_{_{\text{VaR}}}(\delta + \text{CVaR} - \text{VaR})\)
        \ELSE
            \STATE \(\text{VaR} = \text{VaR} + \alpha_{_{\text{VaR}}}\left(\left(\frac{\tau}{ \tau - 1}\right)\delta + \text{CVaR} - \text{VaR}\right)\)
        \ENDIF
        \STATE \(S = S'\)
    \ENDWHILE
    \STATE return \(\boldsymbol{w}\), \(\boldsymbol{\theta}\)
\end{algorithmic}
\end{algorithm}

%%%%%%%%%%%%%%%%%%%%%%%%%%%%%%%%%%%%%%%%%%%%%%%%%%%%%%%%%%%%

\clearpage

\section{Numerical Experiments}
\label{appendix_experiments}

This appendix contains details regarding the numerical experiments performed as part of this work. We discuss the experiments performed in the \emph{red-pill blue-pill} environment (see Appendix \ref{appendix_RPBP} for more details on the red-pill blue-pill environment), as well as the experiments performed in the well-known \emph{inverted pendulum} environment. 

\begin{figure}[htbp]
\centerline{\includegraphics[scale=0.6]{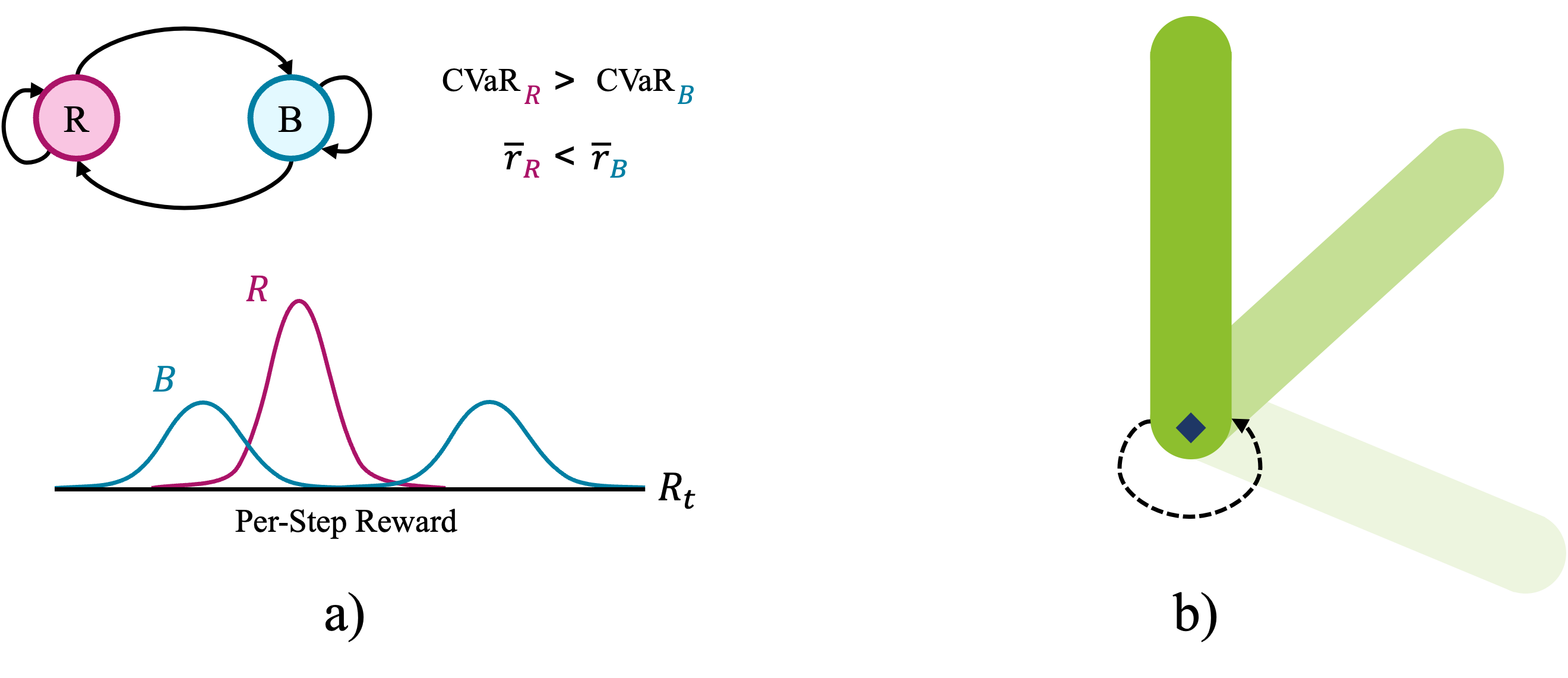}}
\caption{An illustration of the \textbf{a)} red-pill blue-pill, and \textbf{b)} inverted pendulum environments.}
\label{fig_experiments}
\end{figure}

The aim of the experiments was to contrast and compare the RED CVaR algorithms (see Appendix \ref{appendix_RED_CVAR}) with the Differential algorithms from \citet{Wan2021-re} in the context of risk-aware decision-making. In particular, we aimed to show how the RED CVaR algorithms could be utilized to optimize for CVaR (without the use of an augmented state-space or an explicit bi-level optimization scheme), and contrast the results to those of the Differential algorithms, which served as a sort of ‘baseline’ to illustrate how our \emph{risk-aware} approach contrasts a \emph{risk-neutral} approach. In other words, we aimed to show whether our algorithms could successfully enable a learning agent to act in a risk-aware manner instead of the usual risk-neutral manner.

In terms of the algorithms used, we used the RED CVaR Q-learning algorithm (Algorithm \ref{alg_cvar_1}) in the red-pill blue-pill experiment, and the RED CVaR Actor-Critic algorithm (Algorithm \ref{alg_cvar_2}) in the inverted pendulum experiment. In terms of the (risk-neutral) Differential algorithms used for comparison (see Appendix \ref{risk_neutral_algorithms} for the full algorithms), we used the Differential Q-learning algorithm (Algorithm \ref{alg_reg_1}) in the red-pill blue-pill experiment, and the Differential Actor-Critic algorithm (Algorithm \ref{alg_reg_2}) in the inverted pendulum experiment.

\subsection{Red-Pill Blue-Pill Experiment}
\label{exp_red_pill_blue_pill}

In the first experiment, we consider a two-state environment that we created for the purposes of testing our algorithms. It is called the \emph{red-pill blue-pill} environment (see Appendix \ref{appendix_experiments}), where at every time step an agent can take either a ‘red pill’, which takes them to the ‘red world’ state, or a ‘blue pill’, which takes them to the ‘blue world’ state. Each state has its own characteristic per-step reward distribution, and in this case, for a sufficiently low CVaR parameter, \(\tau\), the red world state has a per-step reward distribution with a lower (worse) mean but a higher (better) CVaR compared to the blue world state. As such, this task allows us to answer the following question: \emph{can a RED CVaR algorithm successfully enable the agent to learn a policy that prioritizes optimizing the reward CVaR over the average-reward?} In particular, we would expect that the RED CVaR Q-learning algorithm enables the agent to learn a policy that prefers to stay in the red world, and that the (risk-neutral) Differential Q-learning algorithm (from \citet{Wan2021-re}) enables the agent to learn a policy that prefers to stay in the blue world. This task is illustrated in Figure \ref{fig_experiments}{\textcolor{mylightblue}{a)}.

For this experiment, we ran both algorithms using various combinations of step sizes for each algorithm. We used an \(\varepsilon\)-greedy policy with a fixed epsilon of 0.1, and a CVaR parameter, \(\tau\), of 0.25. We set all initial guesses to zero. We ran the algorithms for 100k time steps.

For the Differential Q-learning algorithm, we tested every combination of the value function step size, \(\alpha\in\{1/n, \text{2e-4, 2e-3, 2e-2, 2e-1}\}\) (where \(1/n\) refers to a step size sequence that decreases the step size according to the time step, \(n\)), with the average-reward step size, \(\eta\alpha\), where \(\eta\in\{\text{1e-4, 1e-3, 1e-2, 1e-1, 1.0, 2.0}\}\), for a total of 30 unique combinations. Each combination was run 50 times using different random seeds, and the results were averaged across the runs. The resulting (averaged) average-reward over the last 1,000 time steps is displayed in Figure \ref{fig_tuning}. As shown in the figure, a value function step size of 2e-4 and an average-reward \(\eta\) of 1.0 resulted in the highest average-reward in the final 1,000 time steps in the red-pill blue-pill task. These are the parameters used to generate the results displayed in Figure \ref{fig_results_1}{\textcolor{mylightblue}{a)}. 

\begin{figure}[htbp]
\centerline{\includegraphics[scale=0.36]{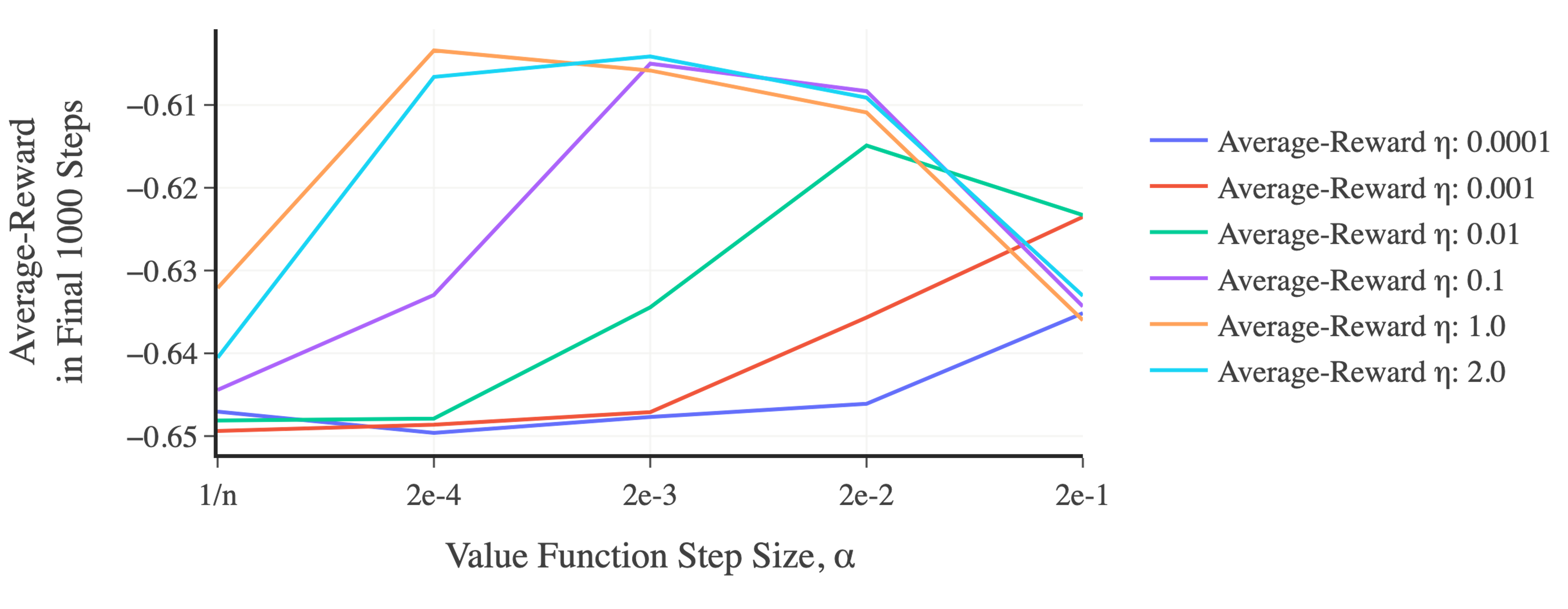}}
\caption{Step size tuning results for the red-pill blue-pill task when using the Differential Q-learning algorithm. The average-reward in the final 1,000 steps is displayed for various combinations of value function and average-reward step sizes.}
\label{fig_tuning}
\end{figure}

For the RED CVaR Q-learning algorithm, we tested every combination of the value function step size, \(\alpha\in\{1/n, \text{2e-4, 2e-3, 2e-2, 2e-1}\}\), with the average-reward (in this case CVaR) step size, \(\alpha_{_{\text{CVaR}}} \doteq \eta_{_\text{CVaR}}\alpha\), where \(\eta_{_\text{CVaR}}\in\{\text{1e-4, 1e-3, 1e-2, 1e-1, 1.0, 2.0}\}\), and the VaR step size, \(\alpha_{_{\text{VaR}}} \doteq \eta_{_\text{VaR}}\alpha\), where \(\eta_{_\text{VaR}}\in\{\text{1e-4, 1e-3, 1e-2, 1e-1, 1.0, 2.0}\}\), for a total of 180 unique combinations. Each combination was run 50 times using different random seeds, and the results were averaged across the runs. A value function step size of 2e-2, an average-reward (CVaR) \(\eta\) of 1e-1, and a VaR \(\eta\) of 1e-1 yielded the best results and were used to generate the results displayed in Figures \ref{fig_results_1}{\textcolor{mylightblue}{a)} and \ref{fig_results_2}{\textcolor{mylightblue}{a)}.\\

\textbf{Follow-up Experiment: Varying the CVaR Parameter}

Given the results shown in Figure \ref{fig_results_1}{\textcolor{mylightblue}{a)}, we can see that, with proper hyperparameter tuning, the tabular RED CVaR Q-learning algorithm is able to reliably find the optimal CVaR policy for a CVaR parameter, \(\tau\), of 0.25. In the context of the red-pill blue-pill environment, this means that the agent learns to stay in the red world state because the state has a characteristic reward distribution with a better (higher) CVaR compared to the blue world state. By contrast, the risk-neutral Differential Q-learning algorithm yields an average-reward optimal policy that keeps the agent in the blue world state because the state has a better (higher) average-reward compared to the red world state.

Now consider what would happen if we used the RED CVaR Q-learning algorithm with a \(\tau\) of 0.99. By definition, a CVaR corresponding to a \(\tau \approx \text{1.0}\) is equivalent to the average-reward. Hence, with a \(\tau\) of 0.99, we would expect that the optimal CVaR policy corresponds to staying in the blue world state (since it has the better average-reward). This means that for some \(\tau\) between 0.25 and 0.99, there is a critical point where the CVaR-optimal policy changes from staying in the red world (let us call this the \emph{red policy}) to staying in the blue world state (let us call this the \emph{blue policy}).

Importantly, we can estimate this critical point using simple Monte Carlo (MC). We are able to use MC in this case because both policies effectively stay in a single state (the red or blue world state), such that the CVaR of the policies can be estimated by sampling the characteristic reward distribution of each state, while accounting for the exploration \(\varepsilon\). Figure \ref{fig_tau_1} shows the MC estimate of the CVaR of the red and blue policies for a range of CVaR parameters, assuming an exploration \(\varepsilon\) of 0.1. Note that we used the same distribution parameters listed in Appendix \ref{appendix_RPBP} for the red-pill blue-pill environment. As shown in Figure \ref{fig_tau_1}, this critical point occurs somewhere around \(\tau \approx \text{0.8}\).

\begin{figure}[htbp]
\centerline{\includegraphics[scale=0.32]{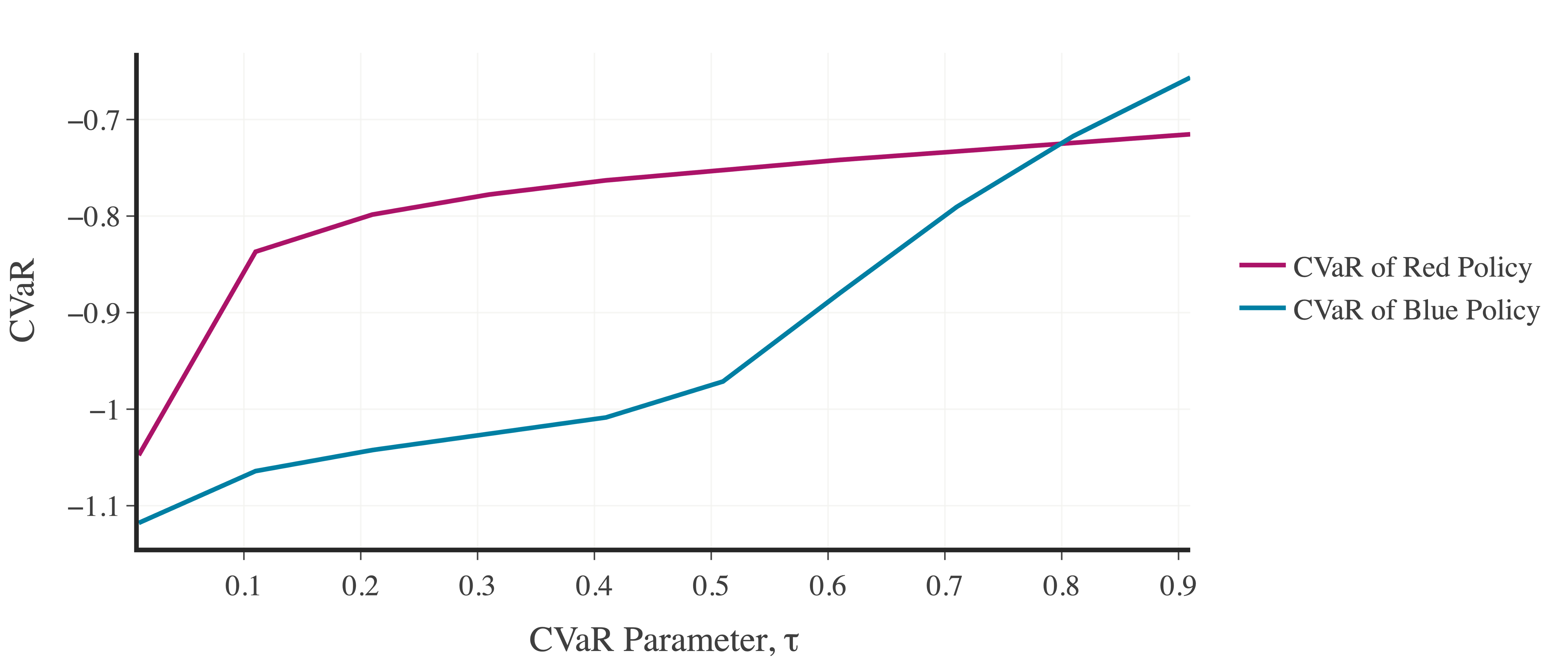}}
\caption{Monte Carlo estimates of the CVaR of the red and blue policies for a range of CVaR parameters in the red-pill blue-pill environment.}
\label{fig_tau_1}
\end{figure}

As such, one way that we can further validate the tabular RED CVaR Q-learning algorithm is by re-running the red-pill blue-pill experiment for different CVaR parameters, and seeing if the optimal CVaR policy indeed changes at a \(\tau \approx \text{0.8}\). Importantly, this allows us to empirically validate whether the algorithm actually optimizes at the desired risk level. When running this experiment, we used the same hyperparameters used to generate the results in Figure \ref{fig_results_1}{\textcolor{mylightblue}{a)}. We ran the experiment for \(\tau\in\{\text{0.1, 0.25, 0.5, 0.75, 0.85, 0.9}\}\). For each \(\tau\), we performed 50 runs using different random seeds, and the results were averaged across the runs.

\begin{figure}[htbp]
\centerline{\includegraphics[scale=0.31]{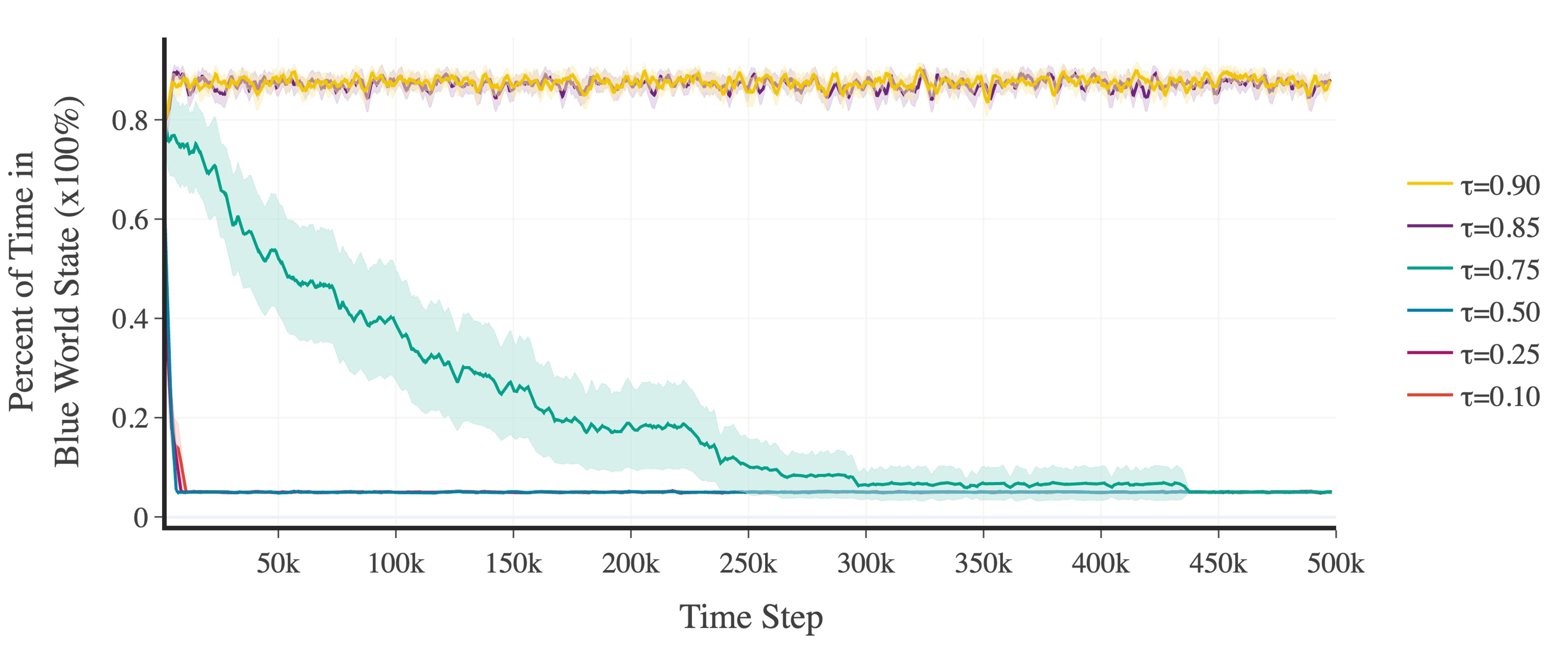}}
\caption{Rolling percent of time that the agent stays in the blue world state as learning progresses when using the RED CVaR Q-learning algorithm in the red-pill blue-pill environment for a range of CVaR parameters. A solid line denotes the mean percent of time spent in the blue world state, and the corresponding shaded region denotes a 95\% confidence interval over 50 runs.}
\label{fig_tau_2}
\end{figure}

\newpage

Figure \ref{fig_tau_2} shows the results of this experiment. In particular, the figure shows a rolling percent of time that the agent stays in the blue world state as learning progresses (note that we used an exploration \(\varepsilon\) of 0.1). From the figure, we can see that for \(\tau\in\{\text{0.1, 0.25, 0.5, 0.75}\}\), the agent learns to stay in the red world state, and for \(\tau\in\{\text{0.85, 0.9}\}\), the agent learns to stay in the blue world state. This is consistent with what we would expect, given that the critical point is \(\tau \approx \text{0.8}\). Hence, these results further validate that our algorithm is able to optimize at the desired risk level.

\subsection{Inverted Pendulum Experiment}
\label{exp_inverted_pendulum}

In the second experiment, we consider the well-known \emph{inverted pendulum} environment, where an agent is tasked with learning how to optimally balance an inverted pendulum. We chose this task because it provides us with the opportunity to test our algorithms in an environment where: 1) we must use function approximation (given the high-dimensional state-space), and 2) where the optimal CVaR policy and the optimal average-reward policy are the same policy (i.e., the policy that best balances the pendulum will yield a limiting reward distribution with both the optimal average-reward and reward CVaR). This hence allows us to directly compare the performance of our RED CVaR algorithms to that of the regular Differential algorithms, as well as to gauge how function approximation affects the performance of our algorithms. For this task, we utilized a simple actor-critic architecture \citep{Barto1983-qr, Sutton2018-eh} as this allowed us to compare the performance of a (non-tabular) RED CVaR TD-learning algorithm with a (non-tabular) Differential TD-learning algorithm. This task is illustrated in Figure \ref{fig_experiments}{\textcolor{mylightblue}{b)}.

For this experiment, we ran both algorithms using various combinations of step sizes for each algorithm. We used a fixed CVaR parameter, \(\tau\), of 0.1. We set all initial guesses to zero. We ran the algorithms for 100k time steps. For simplicity, we used tile coding \citep{Sutton2018-eh} for both the value function and policy parameterizations, where we parameterized a softmax policy. For each parameterization, we used 32 tilings, each with 8 X 8 tiles. 

By using a linear function approximator (i.e., tile coding), the gradients for the value function and policy parameterizations can be simplified as follows:

\begin{equation}
\label{eq_d_1}
\nabla \hat{v}(s,\boldsymbol{w}) = \boldsymbol{x}(s),
\end{equation}

\begin{equation}
\label{eq_d_2}
\nabla \text{ln} \pi(a \mid s,\boldsymbol{\theta}) = \boldsymbol{x}_h(s,a) - \sum_{u \in \mathcal{A}} \pi(u \mid s, \boldsymbol{\theta})\boldsymbol{x}_h(s, u),
\end{equation}

where \(s \in \mathcal{S}\), \(a \in \mathcal{A}\), \(\boldsymbol{x}(s)\) is the state feature vector, and \(\boldsymbol{x}_h(s,a)\) is the softmax preference vector.

For the Differential Actor-Critic algorithm, we tested every combination of the value function step size, \(\alpha\in\{1/n, \text{2e-4, 2e-3, 2e-2}\}\) (where \(1/n\) refers to a step size sequence that decreases the step size according to the time step, \(n\)), with \(\eta\)'s for the average-reward and policy step sizes, \(\eta\alpha\), where \(\eta\in\{\text{1e-3, 1e-2, 1e-1, 1.0, 2.0}\}\), for a total of 100 unique combinations. Each combination was run 10 times using different random seeds, and the results were averaged across the runs. A value function step size of 2e-3, a policy \(\eta\) of 2.0, and an average-reward \(\eta\) of 1e-2 yielded the best results and were used to generate the results displayed in Figure \ref{fig_results_1}{\textcolor{mylightblue}{b)}.

For the RED CVaR Actor-Critic algorithm, we tested every combination of the value function step size, \(\alpha\in\{1/n, \text{2e-4, 2e-3, 2e-2}\}\), with \(\eta\)'s for the average-reward (CVaR), VaR, and policy step sizes, \(\eta\alpha\), where \(\eta\in\{\text{1e-3, 1e-2, 1e-1, 1.0, 2.0}\}\), for a total of 500 unique combinations. Each combination was run 10 times using different random seeds, and the results were averaged across the runs. A value function step size of 2e-3, a policy \(\eta\) of 1.0, an average-reward (CVaR) \(\eta\) of 1e-2, and a VaR \(\eta\) of 1e-3 were used to generate the results displayed in Figures \ref{fig_results_1}{\textcolor{mylightblue}{b)} and \ref{fig_results_2}{\textcolor{mylightblue}{b)}.

\newpage
\subsection{Risk-Neutral Differential Algorithms}
This section contains the pseudocode for the risk-neutral Differential algorithms used for comparison in our experiments.
\label{risk_neutral_algorithms}

\begin{algorithm}
   \caption{Differential Q-Learning (Tabular)}
   \label{alg_reg_1}
\begin{algorithmic}
    \STATE {\bfseries Input:} the policy \(\pi\) to be used (e.g., \(\varepsilon\)-greedy)
    \STATE {\bfseries Algorithm parameters:} step size parameters \(\alpha\), \(\eta\)
    \STATE Initialize \(Q(s, a) \: \forall s, a\) (e.g. to zero)
    \STATE Initialize \(\bar{R}\) arbitrarily (e.g. to zero)
    \STATE Obtain initial \(S\)
    \WHILE{still time to train}
        \STATE \(A \leftarrow\) action given by \(\pi\) for \(S\)
        \STATE Take action \(A\), observe \(R, S'\)
        \STATE \(\delta = R - \bar{R} + \max_a Q(S', a) - Q(S, A)\)
        \STATE \(Q(S, A) = Q(S, A) + \alpha\delta\)
        \STATE \(\bar{R} = \bar{R} + \eta \alpha \delta\)
        \STATE \(S = S'\)
    \ENDWHILE
    \STATE return \(Q\)
\end{algorithmic}
\end{algorithm}

\begin{algorithm}
   \caption{Differential Actor-Critic}
   \label{alg_reg_2}
\begin{algorithmic}
    \STATE {\bfseries Input:} a differentiable state-value function parameterization \(\hat{v}(s, \boldsymbol{w})\); a differentiable policy parameterization \(\pi(a \mid s, \boldsymbol{\theta})\)
    \STATE {\bfseries Algorithm parameters:} step size parameters \(\alpha\), \(\eta_{\pi}\), \(\eta_{_{\bar{R}}}\)
    \STATE Initialize state-value weights \(\boldsymbol{w} \in \mathbb{R}^{d}\) and policy weights \(\boldsymbol{\theta} \in \mathbb{R}^{d'}\) (e.g. to \(\boldsymbol{0}\))
    \STATE Initialize \(\bar{R}\) arbitrarily (e.g. to zero)
    \STATE Obtain initial \(S\)
    \WHILE{still time to train}
        \STATE \(A \sim \pi(\cdot \mid S, \boldsymbol{\theta})\)
        \STATE Take action \(A\), observe \(R, S'\)
        \STATE \(\delta = R - \bar{R} + \hat{v}(S', \boldsymbol{w}) - \hat{v}(S, \boldsymbol{w})\)
        \STATE \(\boldsymbol{w} = \boldsymbol{w} + \alpha\delta\nabla\hat{v}(S, \boldsymbol{w})\)
        \STATE \(\boldsymbol{\theta} = \boldsymbol{\theta} + \eta_{\pi}\alpha\delta\nabla \text{ln} \pi(A \mid S, \boldsymbol{\theta})\)
        \STATE \(\bar{R} = \bar{R} + \eta_{_{\bar{R}}}\alpha\delta\)
        \STATE \(S = S'\)
    \ENDWHILE
    \STATE return \(\boldsymbol{w}\), \(\boldsymbol{\theta}\)
\end{algorithmic}
\end{algorithm}

%%%%%%%%%%%%%%%%%%%%%%%%%%%%%%%%%%%%%%%%%%%%%%%%%%%%%%%%%%%%

%%%%%%%%%%%%%%%%%%%%%%%%%%%%%%%%%%%%%%%%%%%%%%%%%%%%%%%%%%%%

\clearpage
\section{Red-Pill Blue-Pill Environment}
\label{appendix_RPBP}

This appendix contains a Python implementation of the \emph{red-pill blue-pill} environment introduced in this work. The environment consists of a two-state MDP, where at every time step an agent can take either a ‘red pill’, which takes them to the ‘red world’ state, or a ‘blue pill’, which takes them to the ‘blue world’ state. Each state has its own characteristic per-step reward distribution, and in this case, for a sufficiently low CVaR parameter, \(\tau\), the red world state has a per-step reward distribution with a lower (worse) mean but higher (better) CVaR compared to the blue world state. More specifically, the red world state reward distribution is characterized as a gaussian distribution with a mean of -0.7 and a standard deviation of 0.05. Conversely, the blue world state is characterized by a mixture of two gaussian distributions with means of -1.0 and -0.2, and standard deviations of 0.05. We assume all rewards are non-positive. The Python implementation of the environment is provided below:

\begin{lstlisting}[language=Python]
import pandas as pd
import numpy as np

class EnvironmentRedPillBluePill:
  def __init__(self, dist_2_mix_coefficient=0.5):
    # set distribution parameters
    self.dist_1 = {'mean': -0.7, 'stdev': 0.05}
    self.dist_2a = {'mean': -1.0, 'stdev': 0.05}
    self.dist_2b = {'mean': -0.2, 'stdev': 0.05}
    self.dist_2_mix_coefficient = dist_2_mix_coefficient

    # start state
    self.start_state = np.random.choice(['redworld', 'blueworld'])

  def env_start(self, start_state=None):
    # return initial state
    if pd.isnull(start_state):
      return self.start_state
    else:
      return start_state

  def env_step(self, state, action, terminal=False):
    if action == 'red_pill':
      next_state = 'redworld'
    elif action == 'blue_pill':
      next_state = 'blueworld'

    if state == 'redworld':
      reward = np.random.normal(loc=self.dist_1['mean'], 
                                scale=self.dist_1['stdev'])
    elif state == 'blueworld':
      dist = np.random.choice(['dist2a', 'dist2b'], 
                              p=[self.dist_2_mix_coefficient, 
                                1 - self.dist_2_mix_coefficient])
      if dist == 'dist2a':
        reward = np.random.normal(loc=self.dist_2a['mean'], 
                                  scale=self.dist_2a['stdev'])
      elif dist == 'dist2b':
        reward = np.random.normal(loc=self.dist_2b['mean'], 
                                  scale=self.dist_2b['stdev'])

    return min(0, reward), next_state, terminal
\end{lstlisting}

%%%%%%%%%%%%%%%%%%%%%%%%%%%%%%%%%%%%%%%%%%%%%%%%%%%%%%%%%%%%

\end{document}